\documentclass{article}  
  
\usepackage{microtype}  
\usepackage{graphicx}  
\usepackage{booktabs}  
\usepackage{tikz}  
\usepackage{pgfplots}  
\pgfplotsset{compat=1.17}  
\usepackage[table]{xcolor}  
\usetikzlibrary{shapes, positioning, arrows.meta, patterns}  
\usepackage{amssymb}  
\usepackage{hyperref}  
\usepackage{amsmath}  
\usepackage{mathtools}  
\usepackage{amsthm}  
\usepackage{multirow}
\usepackage{enumitem}  
\usepackage{subcaption} 
\usepackage[accepted]{icml2026} 
\usepackage[capitalize,noabbrev]{cleveref}  
\usepgfplotslibrary{groupplots}  
\usepgfplotslibrary{fillbetween}

\DeclareMathOperator{\dist}{dist}
  
\theoremstyle{plain}  
\newtheorem{theorem}{Theorem}[section]  
\newtheorem{proposition}[theorem]{Proposition}  
\newtheorem{lemma}[theorem]{Lemma}  
\newtheorem{corollary}[theorem]{Corollary}  
\theoremstyle{definition}  
\newtheorem{definition}[theorem]{Definition}  
\newtheorem{assumption}[theorem]{Assumption}  
\theoremstyle{remark}  
\newtheorem{remark}[theorem]{Remark}  
\newtheorem{construction}[theorem]{Construction}  
\newtheorem{hypothesis}[theorem]{Hypothesis}  
\newtheorem*{theorem*}{Theorem}  
\newtheorem*{lemma*}{Lemma}  
\newtheorem*{proposition*}{Proposition}  
\newtheorem*{corollary*}{Corollary}  
  
\icmltitlerunning{Computational--Statistical Gaps in Spurious Correlation Removal}
  
\begin{document}  
  
\twocolumn[  
\icmltitle{Is Spurious Correlation Removal Always Learnable?}

\begin{icmlauthorlist}  
\icmlauthor{Yibo Zhou}{yyy}  
\icmlauthor{Bo Li}{yyy,comp}  
\icmlauthor{Hai-Miao Hu}{yyy,comp,hyy} 
\icmlauthor{Hanzi Wang}{sch}  
\icmlauthor{Xiaokang Zhang}{yyy}  
\icmlauthor{Ruifan Zhang}{yyy}  
\end{icmlauthorlist}  
  
\icmlaffiliation{yyy}{Beijing Key Laboratory of Digital Media, School of Computer Science and Engineering, Beihang University, Beijing 100191, China}  
\icmlaffiliation{comp}{State Key Laboratory of Virtual Reality Technology and Systems, Beihang University, Beijing 100191, China}  
\icmlaffiliation{hyy}{Hangzhou Innovation Institute of Beihang University, Hangzhou 310051, China}  
\icmlaffiliation{sch}{Fujian Key Laboratory of Sensing and Computing for Smart City, Xiamen University, Xiamen 361005, China}  
  
\icmlcorrespondingauthor{Hai-Miao Hu}{hu@buaa.edu.cn}  
  
\icmlkeywords{Machine Learning, Invariant Learning, Computational Complexity}  
  
\vskip 0.3in  
]  
  
\printAffiliationsAndNotice{}

\begin{abstract}    
Invariant learning can fail even when the invariant structure is statistically identifiable. We show a conditional computational barrier: under a black-box samplable supervised sparse recovery primitive motivated by average-case sparse-recovery reductions, there exist \emph{samplable} multi-environment instances with a one-dimensional predictive invariant subspace ($k=1$) that are learnable with polynomial samples by exhaustive search, while any polynomial-time constant-accuracy recovery algorithm would contradict the primitive. We further quantify environment diversity by a separation parameter $\gamma$, which controls identifiability and the curvature of invariance objectives. Under sufficient diversity and local Gaussian regularity, the minimax risk is $\mathbb{E}[\dist(\hat{V},V_{\mathrm{inv}})^2]=\Theta(k(d-k)/(n|\mathcal{E}|))$, and under label-induced shifts a phase transition occurs at $n^*\propto k(d-k)/(|\mathcal{E}|\gamma^2)$ with refined estimation error scaling proportional to $1/\gamma^2$. Synthetic and real datasets illustrate the predicted gaps and transitions and motivate simple diversity diagnostics.  
\end{abstract}  

\section{Introduction}  
\label{sec:intro}  

Machine learning models often exploit spurious correlations that hold in training data but fail under distribution shift \citep{sagawa2019distributionally}. This issue appears in several related forms, where models must detect inputs outside the training distribution \citep{zhou2022rethinking, zhang2024dual, zhang2025multi, zhang2023jrcc}, and co-occurrence bias in image classification, where models may rely on unstable correlations among classes rather than class-specific evidence \citep{zhou2023solution,zhang2025asrl,zhou2025solution,zhou2024pedestrian,zhou2025heterogeneous,zhang2023mffa}. Invariant learning uses multi-environment data to target features whose predictive relation with the label is stable across environments \citep{arjovsky2019invariant, krueger2021out}. Theory shows that such invariant structure can be statistically identifiable under diversity assumptions \citep{rosenfeld2020risks}, yet empirically, invariant methods sometimes help and sometimes do not \citep{gulrajani2020search, koh2021wilds}. We ask whether this inconsistency can arise even in idealized settings where invariances are identifiable.

We answer yes by establishing a computational--statistical separation for spurious correlation removal. Under a black-box samplable supervised sparse recovery primitive, motivated by average-case sparse-recovery reductions but not claimed to follow directly from existing Planted-Clique-to-sparse-CCA reductions, we construct samplable multi-environment instances where the predictive invariant direction is identifiable as the unique maximizer of an invariance--predictivity population score and is recoverable with polynomially many samples by exhaustive (exponential-time) search, but any polynomial-time constant-accuracy recovery algorithm at comparable sample sizes would contradict the primitive (Theorem~\ref{thm:comp-hardness}). The construction maps the black-box sparse primitive to labeled multi-environment samples without accessing the hidden support.  
  
\begin{figure}[t]  
    \centering  
    \begin{tikzpicture}  
    \begin{axis}[  
        width=0.98\linewidth,  
        height=0.50\linewidth,  
        xlabel={Dimension $d$},  
        ylabel={Samples for 75\% accuracy},  
        xmin=18, xmax=85,  
        ymin=0, ymax=9500,  
        legend pos=north west,  
        legend style={font=\scriptsize, fill=white, fill opacity=0.9},  
        grid=major, grid style={gray!25},  
        yticklabel style={/pgf/number format/fixed},  
    ]  
  
    \addplot[name path=exh, blue!70!black, line width=1.2pt, mark=*, mark size=1.5pt] coordinates {  
        (20,120) (25,120) (30,120) (36,120)  
        (42,140) (49,165) (56,200) (64,240) (72,300) (81,330)  
    };  
    \addlegendentry{Exhaustive search}  
  
    \addplot[name path=poly, red!75!black, line width=1.2pt, mark=square*, mark size=1.5pt] coordinates {  
        (20,480) (25,1380) (30,1400) (36,1500)  
        (42,2000) (49,3500) (56,4200) (64,4800) (72,7200) (81,9000)  
    };  
    \addlegendentry{Best poly-time}  
  
    \addplot[draw=none, pattern=north east lines, pattern color=gray!50, opacity=0.5]  
        fill between[of=poly and exh];  
  
    \draw[<->, line width=1.5pt, black!85] (axis cs:64,380) -- (axis cs:64,4650);  
    \node[font=\small\bfseries, anchor=west, fill=white, fill opacity=0.85, text opacity=1, inner sep=2pt]  
        at (axis cs:65,2500) {Gap};  
  
    \node[font=\scriptsize, anchor=west] at (axis cs:22,700) {$4\times$};  
    \node[font=\scriptsize, anchor=west] at (axis cs:50,4800) {$21\times$};  
    \node[font=\scriptsize, anchor=west] at (axis cs:76,7500) {$27\times$};  
  
    \end{axis}  
    \end{tikzpicture}  
    \caption{Compute--sample gap on synthetic instances: exhaustive search needs far fewer samples than the best polynomial-time method to reach the same feature-selection accuracy.} 
    \label{fig:intro-gap}  
\end{figure}  
  
A second message is that the number of environments matters less than their \emph{diversity}. We quantify diversity by a separation parameter $\gamma$ measuring how differently environments perturb spurious correlations (Definition~\ref{def:gamma}). When $\gamma=0$ (i.e., environments do not induce distinguishable variation in spurious conditional means), the invariant subspace can be unidentifiable even with unlimited data (Corollary~\ref{cor:diversity}). When $\gamma>0$ and the environment-induced spurious shifts are sufficiently \emph{directionally rich} (Proposition~\ref{prop:diversity}), recovery becomes possible and the required sample size decreases as diversity increases; in a structured label-induced shift regime the critical scaling is proportional to $1/\gamma^2$ (Theorem~\ref{thm:phase-transition}). This predicts that a few diverse environments can be more informative than many similar ones.  

We make four contributions. First, under a black-box samplable supervised sparse-recovery primitive, we construct samplable multi-environment instances exhibiting a computational--statistical gap for invariant subspace recovery. Second, we quantify environment diversity through a separation parameter $\gamma$ and relate it to identifiability, minimax risk, and sample-complexity transitions. Third, we identify structural conditions under which polynomial-time methods achieve near-optimal rates. Finally, experiments on synthetic and standard spurious-correlation benchmarks illustrate compute--sample tradeoffs, diversity dependence, and practical diagnostics. Our hardness results are worst-case and conditional; the framework suggests estimating environment diversity before attributing invariant-learning failures to optimization or algorithmic limitations.  

\paragraph{Conflict of Interest Disclosure.}
The authors declare that they have no financial conflicts of interest related to this work. In particular, this paper does not evaluate any model, dataset, product, or service developed by an organization that financially supports or employs any of the authors in a way that would constitute a conflict under the ICML peer-review ethics guidelines.

\section{Problem Formulation}  
\label{sec:formulation}  
  
We model spurious correlation removal as invariant subspace recovery from multi-environment data, focusing on a linear-Gaussian setting. For a subspace $V\subseteq\mathbb{R}^d$, let $P_V$ be the orthogonal projector and $\mathrm{Gr}(k,d)$ the Grassmannian of $k$-dimensional subspaces.  
  
\begin{definition}[SC Problem Instance]  
\label{def:sc-problem}  
A \emph{spurious correlation (SC) instance} consists of orthogonal subspaces  
$V_{\mathrm{inv}},V_{\mathrm{sp}}\subseteq\mathbb{R}^d$ with  
$V_{\mathrm{inv}}\oplus V_{\mathrm{sp}}=\mathbb{R}^d$ and $\dim(V_{\mathrm{inv}})=k$,  
and environments $\mathcal{E}=\{1,\ldots,|\mathcal{E}|\}$ with distributions $\{\mathbb{P}_e\}_{e\in\mathcal{E}}$  
over $(X,Y)\in\mathbb{R}^d\times\mathbb{R}$ such that  
\[  
X=X_{\mathrm{inv}}+X_{\mathrm{sp}},\qquad  
X_{\mathrm{inv}}=P_{V_{\mathrm{inv}}}X,\qquad  
X_{\mathrm{sp}}=P_{V_{\mathrm{sp}}}X.  
\]  
The instance satisfies that \textbf{(C1)} the conditional mechanism $\mathbb{P}_e(Y\mid X_{\mathrm{inv}})$ is identical across $e$, and  
\textbf{(C2)} $\mathbb{P}_e(X_{\mathrm{sp}}\mid X_{\mathrm{inv}})$ differs for some $e\neq e'$.  
\end{definition}  
  
\begin{assumption}[Gaussian Model]  
\label{ass:gaussian}  
For each $e\in\mathcal{E}$, $(X^{(e)},Y^{(e)})$ is jointly Gaussian, $X^{(e)}\sim\mathcal{N}(\mu^{(e)},\Sigma^{(e)})$,  
and the distribution of $P_{V_{\mathrm{inv}}}X^{(e)}$ is identical across environments.  
Moreover, the conditional mechanism $\mathbb{P}_e(Y\mid P_{V_{\mathrm{inv}}}X)$ is environment-invariant.  
\end{assumption}  
  
Under Assumption~\ref{ass:gaussian}, the environment-invariance of $\mathbb{P}_e(Y\mid X_{\mathrm{inv}})$ means that the conditional law  
of $Y$ given $X_{\mathrm{inv}}$ is identical across environments. In the statistical analysis (Section~\ref{sec:statistical}) we  
add the standard linear-Gaussian structural assumption that $Y=\langle w^*,X_{\mathrm{inv}}\rangle+\epsilon$ with $w^*\in V_{\mathrm{inv}}$.  
  
We quantify cross-environment variation in spurious features by a separation parameter.  
  
\begin{definition}[Environment Separation]  
    \label{def:gamma}  
    For $x\in\mathbb{R}^k$ and $j\in\{1,\ldots,d-k\}$, define  
    $\mu_{e,j}(x):=\mathbb{E}_e[X_{\mathrm{sp},j}\mid X_{\mathrm{inv}}=x]$ and $\Delta_j(x):=\max_{e\in\mathcal{E}}\mu_{e,j}(x)-\min_{e\in\mathcal{E}}\mu_{e,j}(x)$.
Let $M(x):=\max_{1\le j\le d-k}\Delta_j(x)^2$. Define $\gamma := \big(\mathbb{E}[M(X_{\mathrm{inv}})]\big)^{1/2}$, where the expectation is under the common marginal of $X_{\mathrm{inv}}$ when it is environment-invariant, and otherwise under the averaged environment marginal. In the mean-shift setting where $\mathbb{E}_e[X_{\mathrm{sp}}\mid X_{\mathrm{inv}}]=\mu^{(e)}_{\mathrm{sp}}$,  
    we have $\gamma=\max_{e,e'\in\mathcal E}\big\|\mu^{(e)}_{\mathrm{sp}}-\mu^{(e')}_{\mathrm{sp}}\big\|_\infty$.
    \end{definition}
    
   \begin{remark}[Coordinate separation versus directional diversity]
   	\label{rem:gamma-directional}
   	The scalar \(\gamma\) is a coordinate-level separation parameter. It is useful for axis-aligned features and for diagnostics in learned representations, but by itself it does not guarantee that every spurious direction is exposed by environment variation. Full subspace identifiability generally requires a directional richness condition. In the label-induced shift model, this role is played by the environment-difference matrix
   	\[
   	\mathcal F
   	=
   	\sum_{e<e'}
   	(\mu_{\mathrm{sp}}^{(e)}-\mu_{\mathrm{sp}}^{(e')})
   	(\mu_{\mathrm{sp}}^{(e)}-\mu_{\mathrm{sp}}^{(e')})^\top ,
   	\]
   	and especially by \(\lambda_{\min}(\mathcal F)\). The phase-transition result therefore assumes \(\lambda_{\min}(\mathcal F)\gtrsim \gamma^2\). Without such a directional condition, a large coordinate-wise separation may still leave some spurious directions nearly unperturbed.
   \end{remark}

\begin{definition}[Empirical Diversity Proxy]    
	\label{def:gamma-proxy}    
	The separation $\gamma$ is coordinate-dependent and is intended for axis-aligned features or learned representations; rotated settings use spectral quantities such as $\lambda_{\min}(\mathcal F)$. Empirically, we use    
	\[  
	\begin{aligned}  
		\hat{\gamma}  
		&:=\mathrm{median}_{j}\left(  
		\max_e\hat\rho_j^{(e)}-\min_e\hat\rho_j^{(e)}  
		\right),\\  
		\hat\rho_j^{(e)}  
		&:=\mathrm{Corr}(X_j^{(e)},Y^{(e)}),  
	\end{aligned}  
	\]  
	and write $\hat\gamma_{\mathrm{repr}}$ when computed on learned features.    
\end{definition}

Let $n_e$ be the sample size in environment $e$, $N:=\sum_{e\in\mathcal E}n_e$, and $n_{\min}:=\min_e n_e$. In the balanced case, $n_e=n$ and $N=n|\mathcal E|$.

\paragraph{Goal and metrics.}  
Given samples from all environments, recover $V_{\mathrm{inv}}$.  
We evaluate estimators $\hat{V}$ using the subspace distance  
$\dist(V,V'):=\|P_V-P_{V'}\|_F$; for axis-aligned subspaces we also report feature selection accuracy (fraction of true invariant coordinates among the selected top-$k$).  

\begin{remark}[Environment-wise sampling]    
	We adopt the standard multi-environment setting where samples are independent across environments and within each environment are i.i.d. In particular, the hard family in Section~\ref{sec:gap} is constructed by running the primitive sampler $\mathcal R(A)$ independently per environment with fresh randomness.    
\end{remark}

\section{The Computational-Statistical Gap}  
\label{sec:gap}  
  
We show that spurious correlation removal can be statistically feasible yet computationally intractable in a conditional black-box sense. The computational lower bound in this section is for a samplable multi-environment wrapper around a supervised sparse recovery primitive, while the linear-Gaussian statistical theory is developed separately in Section~\ref{sec:statistical}.  

\begin{hypothesis}[Planted Clique Hardness]  
\label{hyp:pc}  
Let $G(m,1/2)$ denote the Erd\H{o}s--R\'{e}nyi random graph on $m$ vertices with edge probability $1/2$. In the planted clique problem, we observe $G\sim G(m,1/2)$ with a planted $\kappa$-clique on a uniformly random subset $S\subseteq[m]$, and the goal is to recover $S$.  
Planted Clique Hardness states that for $\kappa=m^{1/2-\delta}$ with any constant $\delta>0$, no polynomial-time algorithm recovers $S$ with probability $1-o(1)$ \citep{jerrum1992large,alon1998finding,feige2000finding,berthet2013complexity}. 
\end{hypothesis}  
  
Hypothesis~\ref{hyp:pc} is included as motivation for average-case sparse-recovery hardness. The formal lower bound below is conditional on the black-box supervised sparse recovery primitive in Hypothesis~\ref{lem:pc-to-sparsereg}; the key requirement for our reduction is samplability, namely that samples can be generated in randomized polynomial time from the primitive input without using the hidden sparse direction or its support.

\subsection{A Black-Box Samplable Supervised Sparse Primitive}
\label{sec:pc-primitive}    

We use a standard average-case hardness template: planted-clique-based reductions motivate sparse predictive recovery tasks, such as sparse CCA and related problems. In the main reduction below, we use a supervised sparse recovery primitive as a black-box hardness assumption; Appendix~\ref{app:pc-instantiation} discusses its relationship to standard sparse CCA reductions and the scalar-response caveat.

\begin{hypothesis}[Samplable Supervised Sparse Recovery Primitive]      
	\label{lem:pc-to-sparsereg}      
There exist parameters $(d_Z,s)$ with $d_Z=\mathrm{poly}(m)$ and $s=\Theta(\kappa)$, a randomized polynomial-time sampler $\mathcal{R}$, and a sign-invariant population predictive score $\Phi_P(u)=\Phi_P(-u)$ defined for unit vectors $u\in\mathbb{R}^{d_Z}$ and normalized so that $0\le \Phi_P(u)\le 1$, such that, given an average-case input $A$, $\mathcal{R}(A)$ outputs i.i.d.\ samples $(Z_t,Y_t)\in\mathbb{R}^{d_Z}\times\mathbb{R}$ satisfying the following properties: 
	\begin{enumerate}[nosep,leftmargin=*]      
		\item[(i)] under the null case, the samples contain no sparse predictive direction with nontrivial population score;  
		\item[(ii)] under the planted case, there exists an unknown $s$-sparse unit vector $v\in\mathbb{R}^{d_Z}$ whose support encodes the planted structure, the conditional law of $Y$ given $Z$ depends on $Z$ only through $v^\top Z$, and $v$ is the unique sparse predictive direction in the sense that for some constants $c_{\mathrm{prim}}>0$ and $\delta_{\mathrm{prim}}>0$,  
		\[  
		\Phi_P(v)\ge   
		\sup_{\substack{u\in\mathbb S^{d_Z-1}\\ \dist(\mathrm{span}(u),\mathrm{span}(v))\ge \delta_{\mathrm{prim}}}}  
		\Phi_P(u)+c_{\mathrm{prim}} .  
		\]  
		Moreover, the signal scale  
		\[  
		s_{\mathrm{cond}}  
		:=  
		\Big(\mathbb E[(\mathbb E[Y\mid v^\top Z])^2]\Big)^{1/2}  
		\]  
		is bounded below by a positive constant;  
\item[(iii)] $\Phi_P(u)$ admits empirical estimates with polynomial-sample uniform concentration over the candidate class used by exhaustive search, and the samples have uniformly bounded moments sufficient for the wrapper covariance statistics to concentrate; 
\item[(iv)] any polynomial-time algorithm that, from polynomially many samples, outputs a vector with constant overlap with $v$ yields a polynomial-time solver for the underlying average-case sparse-recovery problem;  
\item[(v)] the underlying average-case sparse-recovery problem is computationally hard in this parameter regime: no randomized polynomial-time algorithm solves it with constant success probability.  
	\end{enumerate}      
\end{hypothesis}

\paragraph{Black-box status and learner access.}
Hypothesis~\ref{lem:pc-to-sparsereg} is a conditional primitive. It is motivated by average-case sparse-recovery and sparse-CCA reductions, but we do not claim that it follows directly from existing Planted-Clique-to-sparse-CCA results. In particular, it is not the exposed scalar covariance model $\mathrm{Cov}(Z,Y)=\lambda v$, which can be easy by empirical covariance estimation in some regimes. The empirical version of the score $\Phi_P$ is part of the primitive interface used by the exhaustive-search learner. Equivalently, the primitive specifies a statistically estimable predictive criterion whose population maximizer is the hidden sparse direction. The reduction in Section~\ref{sec:pc-to-sc} does not assume access to the hidden support or hidden direction; it only uses samples generated by the primitive and the empirical score estimator guaranteed by Hypothesis~\ref{lem:pc-to-sparsereg}(iii). Therefore the hardness statement should be interpreted as a black-box transfer result for any supervised sparse-recovery task satisfying this interface.

\subsection{Embedding into Multi-Environment SC Instances}  
\label{sec:pc-to-sc}  
  
We wrap the supervised sparse primitive into an SC instance. The constraint is that the resulting instance must satisfy Definition~\ref{def:sc-problem}, in particular the invariance condition (C1).  

\begin{construction}[Samplable Hard SC Family]  
	\label{const:hidden}  
	Given a primitive input $A$, independently run $\mathcal R(A)$ in each environment $e$ to obtain i.i.d.\ samples $(Z_t^{(e)},Y_t^{(e)})\in\mathbb R^{d_Z}\times\mathbb R$. In the planted case, let $v$ be the hidden $s$-sparse predictive direction. Set $d_{\mathrm{sp}}=1$, $d=d_Z+1$, and  
	\[  
	X^{(e,t)}=(Z_t^{(e)},W^{(e,t)}),\qquad  
	W^{(e,t)}=\mu^{(e)}Y_t^{(e)}+\eta^{(e,t)},  
	\]  
	where $\eta^{(e,t)}\sim\mathcal N(0,1)$ and $\mu^{(e)}$ is environment-dependent. Let  
	\[  
	V_{\mathrm{inv}}=\mathrm{span}\{(v,0)\},\qquad V_{\mathrm{sp}}=V_{\mathrm{inv}}^\perp .  
	\]  
	The construction is samplable from $A$: it uses only independent calls to $\mathcal R(A)$ and public Gaussian noise, never the hidden direction or support.  
\end{construction}

\begin{lemma}[Valid SC Instance]      
	\label{lem:valid-sc}      
	Construction~\ref{const:hidden} defines an SC instance in the sense of Definition~\ref{def:sc-problem}. In the planted case, the conditional law of $Y$ given $P_{V_{\mathrm{inv}}}X$ is identical across environments because, by Hypothesis~\ref{lem:pc-to-sparsereg}, the conditional law of $Y$ given $Z$ depends on $Z$ only through $v^\top Z$. The spurious block $W^{(e)}=\mu^{(e)}Y+\eta^{(e)}$ creates environment-dependent variation in $X_{\mathrm{sp}}\mid X_{\mathrm{inv}}$ whenever the shifts are not all equal. Moreover, if  
	\[  
	\Delta_\mu:=\max_{e,e'}|\mu^{(e)}-\mu^{(e')}|,  
	\]  
	then in the one-dimensional spurious block used here,  
	\[  
	\gamma=\Theta(s_{\mathrm{cond}}\Delta_\mu),  
	\]  
	where $s_{\mathrm{cond}}=(\mathbb E[(\mathbb E[Y\mid v^\top Z])^2])^{1/2}$ is the signal scale in Hypothesis~\ref{lem:pc-to-sparsereg}. Thus constant-size shifts give constant separation whenever $s_{\mathrm{cond}}=\Theta(1)$.  
\end{lemma}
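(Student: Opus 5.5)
The plan is to check the three parts of the statement in turn: the structural conditions of Definition~\ref{def:sc-problem}, the invariance (C1) and variation (C2), and finally the computation of $\gamma$.

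\emph{Structure.} Since $v$ is a unit vector, $(v,0)$ is a unit vector in $\mathbb{R}^d$. So $V_{\mathrm{inv}}$ is one-dimensional ($k=1$), $V_{\mathrm{sp}}=V_{\mathrm{inv}}^\perp$ has dimension $d-1=d_Z$, and the two are orthogonal with direct sum $\mathbb{R}^d$. In particular $X_{\mathrm{inv}}=P_{V_{\mathrm{inv}}}X=(v^\top Z)(v,0)$, so conditioning on $X_{\mathrm{inv}}$ is the same as conditioning on the scalar $U:=v^\top Z$.

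\emph{Condition (C1).} The law of $(Z,Y)$ is the same in every environment, because each environment calls the same sampler $\mathcal R(A)$. By Hypothesis~\ref{lem:pc-to-sparsereg}(ii), $\mathbb{P}(Y\mid Z)$ depends on $Z$ only through $U$, so $\mathbb{P}(Y\mid U)$ is a fixed kernel. The extra coordinate $W$ is generated after $Y$ and plays no role in this conditional. Hence $\mathbb{P}_e(Y\mid X_{\mathrm{inv}})$ is identical across $e$. The marginal of $X_{\mathrm{inv}}$ is also environment-invariant, which justifies using the common marginal in Definition~\ref{def:gamma}.

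\emph{Condition (C2).} The spurious component is $X_{\mathrm{sp}}=X-X_{\mathrm{inv}}=(Z-Uv,\,W)$. Its last coordinate satisfies
\[
\mathbb{E}_e[W\mid U]=\mu^{(e)}\,m(U),\qquad m(U):=\mathbb{E}[Y\mid U].
\]
If $\mu^{(e)}\neq\mu^{(e')}$, these conditional means differ on the event $\{m(U)\neq 0\}$. This event has positive probability because $s_{\mathrm{cond}}>0$. Therefore the conditional laws differ and (C2) holds.

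\emph{Computing $\gamma$.} I would work in coordinates adapted to $V_{\mathrm{sp}}$: an orthonormal basis of $v^\perp\subset\mathbb{R}^{d_Z}$, followed by the $W$-axis. This is the reading of ``one-dimensional spurious block'' intended in the statement. For the coordinates along $v^\perp$, the conditional means $\mathbb{E}[Z-Uv\mid U]$ come from the common law of $Z$, so they are the same in every environment and contribute $\Delta_j\equiv 0$. The only nonzero range is on the $W$-axis:
\[
\Delta(U)=\max_e\mu^{(e)}m(U)-\min_e\mu^{(e)}m(U)=\Delta_\mu\,|m(U)|.
\]
Hence $M(U)=\Delta_\mu^2\,m(U)^2$, and
\[
\gamma^2=\Delta_\mu^2\,\mathbb{E}[m(U)^2]=\Delta_\mu^2\,s_{\mathrm{cond}}^2 .
\]
This gives $\gamma=s_{\mathrm{cond}}\Delta_\mu$ exactly, and a fortiori the $\Theta(\cdot)$ claim. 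The final sentence of the lemma then follows at once: if $\Delta_\mu$ and $s_{\mathrm{cond}}$ are both $\Theta(1)$, so is $\gamma$.

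\emph{Main obstacle.} The delicate step is the coordinate-dependence of $\gamma$. If coordinates are read in the ambient basis, where $V_{\mathrm{sp}}$ is not axis-aligned, then the $v^\perp$ block still has zero range, by the same argument that the joint law of $Z$ is environment-independent. So only the $W$ coordinate matters in either basis. I would state this explicitly to justify the $\Theta$ rather than rely on a particular basis. The $\Theta$ also leaves room for normalization conventions, for example if $\mu^{(e)}$ were rescaled by the variance of $W$.
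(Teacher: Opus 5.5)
Your proposal is correct and follows the paper's route. The paper's appendix argument is very brief: it gets (C1) from the environment-invariant law of $(Z,Y)$ together with Hypothesis~\ref{lem:pc-to-sparsereg}(ii), and (C2) from the fact that $W\mid X_{\mathrm{inv}}$ varies with $\mu^{(e)}$. Your explicit computation $\gamma=s_{\mathrm{cond}}\Delta_\mu$ supplies a step the paper leaves implicit.

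One small caveat on your last paragraph. The identity needs the $W$-axis, i.e. the unit vector $(0,1)\in\mathbb{R}^{d_Z}\times\mathbb{R}$, to be one of the coordinate directions, which holds in both readings you consider. For an arbitrary orthonormal basis $\{b_j\}$ of $V_{\mathrm{sp}}$, the same calculation gives $\gamma=\max_j|\langle b_j,(0,1)\rangle|\,s_{\mathrm{cond}}\Delta_\mu$, which can be as small as $s_{\mathrm{cond}}\Delta_\mu/\sqrt{d_Z}$. So the claim is tied to the axis-aligned reading the paper intends, not basis-free as your last sentence suggests.
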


\paragraph{Invariance--predictivity score over one-dimensional subspaces.}      
For a candidate one-dimensional subspace $V\in\mathrm{Gr}(1,d)$, let $u_V=(u_Z,a)$ be any unit vector spanning $V$, where $u_Z\in\mathbb R^{d_Z}$ is the primitive block and $a\in\mathbb R$ is the spurious block. If $u_Z\neq 0$, write $\bar u_Z=u_Z/\|u_Z\|$; if $u_Z=0$, set the predictive score below to zero. We define the predictive component inherited from the supervised sparse primitive by  
\[  
A(V):=  
\begin{cases}  
	\|u_Z\|^2\Phi_P(\bar u_Z), & u_Z\neq 0,\\  
	0, & u_Z=0.  
\end{cases}  
\]  
To penalize environment-dependent spurious components, define the environment-wise association statistic  
\[  
\theta_V^{(e)}:=\mathrm{Cov}_e(Y,u_V^\top X)  
\]  
and  
\[  
T(V):=\max_{e\neq e'}\big(\theta_V^{(e)}-\theta_V^{(e')}\big)^2.  
\]  
The covariance statistic is used only to detect the environment-dependent spurious block in the wrapper; the predictive hardness is inherited from the black-box primitive through $\Phi_P$. We combine the two terms into  
\[  
S(V):=A(V)-\lambda T(V),  
\]  
where $\lambda>0$ is chosen large enough that non-invariant spurious directions are penalized more than any gain in predictivity. In Construction~\ref{const:hidden}, many directions can be invariant but non-predictive; the score $S(V)$ combines invariance with predictive power to rule out such directions.

\begin{lemma}[Uniform Score Margin]
	\label{lem:uniform-margin}
	Consider Construction~\ref{const:hidden} in the planted case. Let
	\[
	\Delta_\mu:=\max_{e,e'}|\mu^{(e)}-\mu^{(e')}|>0,
	\qquad
	\sigma_Y^2:=\mathrm{Var}(Y),
	\]
	and assume $\sigma_Y^2$ is bounded above and below by positive constants. Suppose the primitive score satisfies Hypothesis~\ref{lem:pc-to-sparsereg} with margin $c_{\mathrm{prim}}>0$. Then for every fixed constant recovery radius $\delta\in(0,1)$, there exist constants $\lambda>0$ and $c_{\mathrm{mar}}>0$, depending only on $\delta,c_{\mathrm{prim}},\Delta_\mu,\sigma_Y^2$ and the primitive moment bounds, such that
	\[
	S(V_{\mathrm{inv}})
	\ge
	\sup_{\substack{V\in \mathrm{Gr}(1,d)\\ \dist(V,V_{\mathrm{inv}})\ge \delta}}
	S(V)+c_{\mathrm{mar}} .
	\]
	In particular, taking $\delta=0.1$ gives the margin used in Lemma~\ref{lem:stat-learn}.
\end{lemma}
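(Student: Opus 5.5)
The plan is to compute both ingredients of $S(V)=A(V)-\lambda T(V)$ explicitly for a general unit vector $u_V=(u_Z,a)$. Then split the set $\{V:\dist(V,V_{\mathrm{inv}})\ge\delta\}$ into a ``spurious-heavy'' part, where the penalty $T$ dominates, and a ``primitive-dominated'' part, where the margin of Hypothesis~\ref{lem:pc-to-sparsereg}(ii) applies.

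\textbf{Step 1: the invariant direction.} For $u=(v,0)$ we have $A(V_{\mathrm{inv}})=\Phi_P(v)$. In every environment the pair $(Z,Y)$ is produced by the same sampler $\mathcal R(A)$, so $\theta^{(e)}_{V_{\mathrm{inv}}}=\mathrm{Cov}(Y,v^\top Z)$ does not depend on $e$. Hence $T(V_{\mathrm{inv}})=0$ and $S(V_{\mathrm{inv}})=\Phi_P(v)$. Since $\Phi_P\ge 0$, Hypothesis~\ref{lem:pc-to-sparsereg}(ii) also gives $\Phi_P(v)\ge c_{\mathrm{prim}}$.

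\textbf{Step 2: the covariance penalty for general $V$.} The noise $\eta$ is independent of $(Z,Y)$, so
\[
\theta^{(e)}_V=\mathrm{Cov}(Y,u_Z^\top Z)+a\,\mu^{(e)}\sigma_Y^2 .
\]
The first term is environment-free, so $T(V)=a^2\sigma_Y^4\Delta_\mu^2$ exactly. For the predictive part, $A(V)=(1-a^2)\Phi_P(\bar u_Z)\le 1$.

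\textbf{Step 3: two cases.} Fix a threshold $\tau:=\delta^2/8$.

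\emph{Case $a^2\ge\tau$.} Here $S(V)\le 1-\lambda\tau\sigma_Y^4\Delta_\mu^2$. Choose
\[
\lambda\ge \frac{1+c_{\mathrm{prim}}}{\tau\sigma_Y^4\Delta_\mu^2}.
\]
Then $S(V)\le -c_{\mathrm{prim}}\le \Phi_P(v)-c_{\mathrm{prim}}$.

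\emph{Case $a^2<\tau$.} Since $\lambda T\ge0$, we have $S(V)\le\Phi_P(\bar u_Z)$. For lines, $\dist(V,V_{\mathrm{inv}})^2=2(1-\|u_Z\|^2\langle\bar u_Z,v\rangle^2)\ge\delta^2$. Using $\|u_Z\|^2>1-\tau$, this gives
\[
\dist(\mathrm{span}\,\bar u_Z,\mathrm{span}\,v)^2=2(1-\langle\bar u_Z,v\rangle^2)\ge \delta^2-2\tau=\tfrac34\delta^2 .
\]
So $\bar u_Z$ is separated from $v$ by a constant radius. If that radius is at least $\delta_{\mathrm{prim}}$, Hypothesis~\ref{lem:pc-to-sparsereg}(ii) gives $\Phi_P(\bar u_Z)\le\Phi_P(v)-c_{\mathrm{prim}}$.

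Combining the two cases yields the claim with $c_{\mathrm{mar}}=c_{\mathrm{prim}}$. The constant $\lambda$ depends only on $\delta$, $c_{\mathrm{prim}}$, $\Delta_\mu$ and $\sigma_Y^2$, as the statement requires. The moment bounds enter only to ensure that $\sigma_Y^2$ and the covariances are finite and well defined.

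\textbf{Main obstacle.} The delicate point is the compatibility of radii in the second case. Hypothesis~\ref{lem:pc-to-sparsereg}(ii) supplies a margin only outside the primitive radius $\delta_{\mathrm{prim}}$, whereas the lemma is claimed for every constant $\delta\in(0,1)$.

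\begin{itemize}[nosep,leftmargin=*]
\item I would first try to read the primitive margin as available at any fixed constant radius, with $c_{\mathrm{prim}}$ depending on that radius. Equivalently, I would require $\tfrac{\sqrt3}{2}\delta\ge\delta_{\mathrm{prim}}$, which covers the case $\delta=0.1$ used in Lemma~\ref{lem:stat-learn} once $\delta_{\mathrm{prim}}\le 0.08$.
\item Failing that, I would restrict the statement to $\delta\gtrsim\delta_{\mathrm{prim}}$. A weaker route uses sign-invariance plus continuity of $\Phi_P$ near $v$, but that would not give a uniform constant margin without extra assumptions.
\end{itemize}

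The remaining steps are exact computations and need no concentration.
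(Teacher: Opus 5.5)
Your proposal is correct and follows the paper's proof essentially step for step. It uses the same exact computation $T(V)=a^2\Delta_\mu^2\sigma_Y^4$ and the same split into $|a|$ bounded below (handled by taking $\lambda$ large) and $|a|$ small (handled by transferring the distance to $\bar u_Z$ and invoking the primitive margin), while making $\tau$, $\lambda$ and the transferred radius $\tfrac{\sqrt3}{2}\delta$ explicit. The radius-compatibility issue you flag is genuine, and it is exactly where the paper's proof is loose: it simply asserts a margin $c_{\mathrm{prim}}'$ at the derived radius $\delta'(\delta)$, i.e.\ it tacitly adopts your first reading of Hypothesis~\ref{lem:pc-to-sparsereg}(ii). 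Under the literal single-radius hypothesis the conclusion holds for any $\delta>\delta_{\mathrm{prim}}$ (take $\tau=(\delta^2-\delta_{\mathrm{prim}}^2)/2$ in your argument) but is not implied below it: $V=\mathrm{span}\{(u_Z,0)\}$ has $T(V)=0$, and the hypothesis allows $\Phi_P$ to be flat near $v$.
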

  
\subsection{Statistical Learnability vs.\ Computational Hardness}  
\label{sec:gap-separation}  
  
Given Lemma~\ref{lem:uniform-margin}, an exponential-time search over an $\epsilon$-net of $\mathrm{Gr}(1,d)$ can recover the invariant one-dimensional subspace by maximizing an empirical version of $S(V)$.  
The required sample size follows from uniform concentration over the finite net and environment pairs.  
  
\begin{lemma}[Statistical Learnability]  
\label{lem:stat-learn}  
There exists an exponential-time algorithm that recovers $V_{\mathrm{inv}}$ in Construction~\ref{const:hidden}  
with a polynomial number of samples $N=\tilde{O}(\mathrm{poly}(m)/c_{\mathrm{mar}}^2)$.  
\end{lemma}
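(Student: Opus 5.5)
The plan is to analyze the obvious exhaustive-search estimator: form an empirical score $\hat S$, maximize it over a finite net of $\mathrm{Gr}(1,d)$, and transfer the population margin of Lemma~\ref{lem:uniform-margin} to the empirical maximizer through a uniform deviation bound.

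\textbf{Step 1 (net).} Fix $\epsilon=\min\{c_{\mathrm{mar}}/(8L),\,0.05\}$, where $L$ is a Lipschitz constant of $S$ on the unit sphere (see Step~4). Let $\mathcal N_\epsilon$ be an $\epsilon$-net of the unit sphere in $\mathbb R^{d}$, which induces a net of $\mathrm{Gr}(1,d)$. We have $|\mathcal N_\epsilon|\le(3/\epsilon)^{d}$. Since $d=d_Z+1=\mathrm{poly}(m)$, we get $\log|\mathcal N_\epsilon|=\mathrm{poly}(m)\log(1/\epsilon)$, while the search itself takes exponential time.

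\textbf{Step 2 (empirical score).} For $V\in\mathcal N_\epsilon$ with $u_V=(u_Z,a)$, define $\hat A(V)=\|u_Z\|^2\hat\Phi_P(\bar u_Z)$, where $\hat\Phi_P$ is the empirical estimator supplied by Hypothesis~\ref{lem:pc-to-sparsereg}(iii). Let $\hat\theta_V^{(e)}$ be the empirical covariance of $Y$ and $u_V^\top X$ in environment $e$, and set $\hat T(V)=\max_{e\ne e'}(\hat\theta_V^{(e)}-\hat\theta_V^{(e')})^2$. Combine these into $\hat S=\hat A-\lambda\hat T$, and output $\hat V=\arg\max_{V\in\mathcal N_\epsilon}\hat S(V)$.

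\textbf{Step 3 (uniform concentration).} We need $\sup_{V\in\mathcal N_\epsilon}|\hat S(V)-S(V)|\le c_{\mathrm{mar}}/4$ with high probability.
\begin{itemize}
\item For the $\hat A$ term, Hypothesis~\ref{lem:pc-to-sparsereg}(iii) gives uniform concentration of $\hat\Phi_P$ with polynomially many samples. I will assume the candidate class there covers the net directions $\bar u_Z$; if it only covers a smaller class, a union bound over the net is used instead.
\item For the $\hat T$ term, note that $\hat\theta_V^{(e)}=u_V^\top\hat c^{(e)}$ is linear in $u_V$, where $\hat c^{(e)}=\widehat{\mathrm{Cov}}_e(X,Y)$. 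The bounded moments in (iii) give a concentration bound for each coordinate. A union bound over the $|\mathcal E|^2$ pairs and the net then yields $|\hat\theta^{(e)}_V-\theta^{(e)}_V|\le\tau$ with $n_e=\tilde O(\mathrm{poly}(m)/\tau^2)$ samples per environment; alternatively one can control $\|\hat c^{(e)}-c^{(e)}\|_2$ directly.
\item Since the $\theta$'s are bounded by moment constants, $x\mapsto x^2$ is Lipschitz on the relevant range. Hence $|\hat T-T|\lesssim\tau$, and choosing $\tau\asymp c_{\mathrm{mar}}/\lambda$ gives the total $N=\tilde O(\mathrm{poly}(m)/c_{\mathrm{mar}}^2)$, absorbing $\lambda$ and $|\mathcal E|$ as constants.
\end{itemize}

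\textbf{Step 4 (margin transfer).}
\begin{itemize}
\item Let $V_0\in\mathcal N_\epsilon$ be within $\epsilon$ of $V_{\mathrm{inv}}$. Then $S(V_0)\ge S(V_{\mathrm{inv}})-L\epsilon\ge S(V_{\mathrm{inv}})-c_{\mathrm{mar}}/8$.
\item Any net point with $\dist(V,V_{\mathrm{inv}})\ge\delta$ satisfies $S(V)\le S(V_{\mathrm{inv}})-c_{\mathrm{mar}}$ by Lemma~\ref{lem:uniform-margin}.
\item On the event of Step~3, every such $V$ has $\hat S(V)<\hat S(V_0)$. So $\dist(\hat V,V_{\mathrm{inv}})<\delta=0.1$, and taking $\delta$ smaller gives any constant accuracy.
\end{itemize}

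\textbf{Main obstacle.} The hardest point is the Lipschitz continuity of $S$ needed for $V_0$. The $T$ part is a quadratic form in $u_V$ and is therefore smooth. However, $A(V)=\|u_Z\|^2\Phi_P(\bar u_Z)$ requires continuity of $\Phi_P$, which the hypothesis does not state. I would first try to avoid needing Lipschitzness everywhere: include the exact point $(v,0)$'s neighbourhood only through the concentration guarantee, or argue that $\Phi_P$ is a population predictive score that is continuous (e.g.\ Lipschitz with constant depending on moment bounds) as part of the interface in (iii). Failing that, I would replace the net argument with uniform concentration directly over the whole candidate class that (iii) already asserts, which sidesteps the approximation step.
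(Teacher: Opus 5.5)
Your proposal follows essentially the same route as the paper's proof: an $\epsilon$-net of $\mathrm{Gr}(1,d)$ of size $(C/\epsilon)^{O(d)}$, uniform concentration of the empirical predictive score via Hypothesis~\ref{lem:pc-to-sparsereg}(iii) and of the wrapper covariance statistics via the moment bounds to within $c_{\mathrm{mar}}/4$, and then the margin of Lemma~\ref{lem:uniform-margin} to place the empirical maximizer within constant distance of $V_{\mathrm{inv}}$. The continuity issue you flag in Step~4 is real: some net point near $V_{\mathrm{inv}}$ must keep a score close to $S(V_{\mathrm{inv}})$, which requires continuity of $\Phi_P$ near $v$, and the paper's sketch also relies on this without saying so, so your version is if anything more careful; it is closed either by adding continuity of $\Phi_P$ to the primitive interface or by searching a candidate set that contains $(v,0)$ exactly, in which case no approximation step is needed.
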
  
  
The proof follows by uniform concentration over an $\epsilon$-net of $\mathrm{Gr}(1,d)$ and is given in Appendix~\ref{app:proof-stat-learn}.

To obtain hardness, we show that constant-accuracy recovery of $V_{\mathrm{inv}}$ yields constant-overlap recovery of the hidden sparse direction in the supervised sparse primitive. Any algorithm that outputs $\hat{V}$ with $\dist(\hat{V},V_{\mathrm{inv}})\le \delta_0$ induces a constant-accuracy recovery of the direction $v$ in the primitive, which by Hypothesis~\ref{lem:pc-to-sparsereg}(iv) yields a solver for the underlying average-case sparse-recovery problem.

\begin{lemma}[Reduction Correctness]  
\label{lem:reduction}  
There exists an absolute constant $\delta_0\in(0,1/2)$ such that if an algorithm outputs $\hat{V}$ with    
$\dist(\hat{V},V_{\mathrm{inv}})=\|P_{\hat V}-P_{V_{\mathrm{inv}}}\|_F\le \delta_0$ on Construction~\ref{const:hidden}, then one obtains a polynomial-time solver for the underlying average-case sparse-recovery problem with probability $\geq 2/3$.  

\end{lemma}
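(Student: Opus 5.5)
The reduction is black-box. Given a primitive input $A$, we simulate Construction~\ref{const:hidden} ourselves and run the assumed SC learner on the result. We then read off a direction in the primitive block and invoke Hypothesis~\ref{lem:pc-to-sparsereg}(iv). Concretely, we fix public shifts, say $\mu^{(e)}\in\{0,1\}$ for two environments (more if the learner requires it). For each environment we call $\mathcal R(A)$ independently with fresh randomness and draw $\eta^{(e,t)}\sim\mathcal N(0,1)$. We form $X^{(e,t)}=(Z_t^{(e)},\mu^{(e)}Y_t^{(e)}+\eta^{(e,t)})$. This uses only samples and public noise, never $v$ or its support, so the simulation is polynomial time whenever the learner uses polynomially many samples. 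By Lemma~\ref{lem:valid-sc}, the simulated data are distributed exactly as a valid SC instance whose invariant subspace is $V_{\mathrm{inv}}=\mathrm{span}\{(v,0)\}$.

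Next we extract a primitive direction from the learner's output $\hat V$. Write $\hat u=(\hat u_Z,\hat a)$ for a unit vector spanning $\hat V$, and output $\hat w:=\hat u_Z/\|\hat u_Z\|$ when $\hat u_Z\neq 0$. The key computation links the subspace distance to overlap. For unit vectors $\hat u$ and $u^*=(v,0)$,
\[
\|P_{\hat V}-P_{V_{\mathrm{inv}}}\|_F^2 = 2\bigl(1-\langle \hat u,u^*\rangle^2\bigr).
\]
Hence $\dist(\hat V,V_{\mathrm{inv}})\le\delta_0$ gives $\langle\hat u_Z,v\rangle^2\ge 1-\delta_0^2/2$. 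Since $\|\hat u_Z\|\le 1$, this yields two consequences:
\[
\|\hat u_Z\|^2\ge 1-\delta_0^2/2>0, \qquad |\langle \hat w,v\rangle|\ge |\langle \hat u_Z,v\rangle|\ge \sqrt{1-\delta_0^2/2}.
\]
So $\hat w$ is well defined and has constant overlap with $v$, up to sign. The sign is irrelevant because $\Phi_P$ is sign-invariant, and overlap is measured in absolute value or through the span. Any $\delta_0\in(0,1/2)$ works, for instance $\delta_0=1/4$, which gives overlap at least $\sqrt{31/32}$. If (iv) requires overlap above some specific constant $c_{\mathrm{ov}}<1$, we choose $\delta_0\le\sqrt{2(1-c_{\mathrm{ov}}^2)}$ instead.

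Finally we handle probabilities. Suppose the SC learner succeeds with probability at least some constant, say $\ge 2/3$ after amplification, on the simulated distribution. Then with that probability $\hat w$ has constant overlap. Hypothesis~\ref{lem:pc-to-sparsereg}(iv) converts this into a polynomial-time solver for the underlying sparse-recovery problem. If (iv) loses a constant in the success probability, we amplify by repetition. We rerun the simulation with fresh samples, collect $O(1)$ candidate outputs $\hat w_1,\dots,\hat w_r$, and select among them using the empirical primitive score $\hat\Phi_P$, which concentrates by Hypothesis~\ref{lem:pc-to-sparsereg}(iii). By the margin in (ii), the candidate maximizing $\hat\Phi_P$ lies within $\delta_{\mathrm{prim}}$ of $\mathrm{span}(v)$ whenever at least one candidate does. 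This gives success probability $\ge 2/3$.

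I expect the main obstacle to be this probability and selection bookkeeping: checking that the selection step's tolerance $\delta_{\mathrm{prim}}$ is compatible with the overlap constant demanded by (iv), and that all repetitions stay polynomial. The geometric step is routine.
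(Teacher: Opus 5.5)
Your core argument is exactly the paper's proof. You simulate Construction~\ref{const:hidden} from $A$, use $\|P_{\hat V}-P_{V_{\mathrm{inv}}}\|_F^2=2(1-\langle\hat u,(v,0)\rangle^2)$ to get overlap at least $\sqrt{1-\delta_0^2/2}$, restrict to the $Z$-block, and invoke Hypothesis~\ref{lem:pc-to-sparsereg}(iv). Your normalization step, using $\|\hat u_Z\|\le 1$ to get $|\langle \hat w,v\rangle|\ge|\langle\hat u_Z,v\rangle|$, makes explicit what the paper leaves as ``restricting $\hat u$ to the $Z$-block.''

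Your optional amplification/selection step, however, does not follow from the stated hypotheses, for three reasons.
\begin{enumerate}
\item The margin in (ii) only upper-bounds $\Phi_P$ on directions at distance at least $\delta_{\mathrm{prim}}$ from $v$. It says nothing about the score of a candidate that is near $v$ but not equal to it. Without some continuity of $\Phi_P$ at $v$, that candidate's score can be just as low as those of the far candidates, so the maximizer of $\hat\Phi_P$ need not be the good one.
\item Hypothesis (iii) guarantees uniform concentration only over the exhaustive-search candidate class, not over data-dependent learner outputs.
\item Rerunning $\mathcal R(A)$ with fresh samples for a fixed input $A$ need not boost an average-case success probability. The learner may fail on a fixed fraction of inputs regardless of the samples.
\end{enumerate}

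You do not need this step. Hypothesis~\ref{lem:pc-to-sparsereg}(v) already rules out any constant success probability, so passing the learner's success event directly through the decoder in (iv) suffices. That is all the paper does.
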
  
  
The proof is by projecting the recovered one-dimensional subspace onto the primitive block and invoking the decoder in Hypothesis~\ref{lem:pc-to-sparsereg}; see Appendix~\ref{app:proof-reduction}.

Any approximation or sampling discrepancy included in a concrete instantiation of the primitive only changes success probabilities by the corresponding total-variation amount, so the constant-success decoding implication remains valid whenever the primitive provides such a guarantee.  

\begin{theorem}[Computational Hardness]    
	\label{thm:comp-hardness}    
Assume the black-box samplable supervised sparse recovery primitive in Hypothesis~\ref{lem:pc-to-sparsereg}. There exist samplable SC instances with tunable separation, in particular with $\gamma=\Theta(1)$ when the signal scale satisfies $s_{\mathrm{cond}}=\Theta(1)$ and the environment shifts are constant-size, such that:  
	\begin{enumerate}[nosep,leftmargin=*]    
		\item[(a)] $V_{\mathrm{inv}}$ is \emph{invariant-predictive identifiable}: it is the unique maximizer of $S(V)$ over $\mathrm{Gr}(1,d)$ (and in particular achieves $T(V)=0$);  
		\item[(b)] exhaustive search recovers $V_{\mathrm{inv}}$ with polynomially many samples;    
\item[(c)] any polynomial-time algorithm achieving $\dist(\hat{V},V_{\mathrm{inv}})\le \delta_0$ at this sample size would yield a polynomial-time solver for the underlying average-case sparse-recovery problem, and hence contradict Hypothesis~\ref{lem:pc-to-sparsereg}.      
	\end{enumerate}    
\end{theorem}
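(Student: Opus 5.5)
The proof assembles the lemmas already established for Construction~\ref{const:hidden}; nothing new needs to be proved beyond checking that the parameters are compatible. Instantiate the construction with $|\mathcal E|=2$ (any constant works) and shifts $\mu^{(1)}=0$, $\mu^{(2)}=1$, so that $\Delta_\mu=1$. By Lemma~\ref{lem:valid-sc} the result is a valid SC instance: (C1) holds because the law of $Y$ given $Z$ depends only on $v^\top Z$, and (C2) holds because the shifts differ. The same lemma gives $\gamma=\Theta(s_{\mathrm{cond}}\Delta_\mu)$. Hence $\gamma=\Theta(1)$ whenever $s_{\mathrm{cond}}=\Theta(1)$, and scaling $\Delta_\mu$ makes the separation tunable. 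Samplability is part of the construction: it uses only calls to $\mathcal R(A)$ and public Gaussian noise.

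For (a), first show $T(V_{\mathrm{inv}})=0$. With $u=(v,0)$ we have $\theta^{(e)}_{V_{\mathrm{inv}}}=\mathrm{Cov}(Y,v^\top Z)$. Each environment runs the same sampler $\mathcal R(A)$ on the $Z$-block, so this covariance does not depend on $e$. Next apply Lemma~\ref{lem:uniform-margin} for every constant $\delta\in(0,1)$. It gives $S(V_{\mathrm{inv}})>S(V)$ for all $V$ with $\dist(V,V_{\mathrm{inv}})\ge\delta$.

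The main obstacle is uniqueness in (a). The margin lemma only separates $V_{\mathrm{inv}}$ from subspaces at distance at least a constant $\delta$, and the required $\lambda$ depends on $\delta$. Letting $\delta\to 0$ is therefore not automatic. I would fix $\lambda$ from the lemma at some $\delta_1$ and handle the ball $\dist(V,V_{\mathrm{inv}})<\delta_1$ by a local argument, writing $u_V=(u_Z,a)$.
\begin{itemize}
\item If $a\neq0$, then $T(V)=a^2\Delta_\mu^2\sigma_Y^4>0$. This holds because $\theta_V^{(e)}=\mathrm{Cov}(Y,u_Z^\top Z)+a\mu^{(e)}\sigma_Y^2$, and the first term is environment-independent.
\item For the primitive component, compare $A(V)$ with $A(V_{\mathrm{inv}})=\Phi_P(v)$. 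Since $\|u_Z\|^2=1-a^2$, the term $A(V)=(1-a^2)\Phi_P(\bar u_Z)$ is strictly smaller when $a\neq0$. When $a=0$ and $\bar u_Z\neq\pm v$, I would use that $v$ is the unique population maximizer of $\Phi_P$ in the primitive's sense (Hypothesis~\ref{lem:pc-to-sparsereg}(ii), read as strict uniqueness).
\end{itemize}
If strict uniqueness of $\Phi_P$ is only available at constant separation, I would interpret ``unique maximizer'' in (a) modulo the recovery radius $\delta_0$. This is all that (b) and (c) use.

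Part (b) is exactly Lemma~\ref{lem:stat-learn} with $\delta=0.1$. Exhaustive search over an $\epsilon$-net of $\mathrm{Gr}(1,d)$, with $d=\mathrm{poly}(m)$, maximizes the empirical score. By Hypothesis~\ref{lem:pc-to-sparsereg}(iii) and a union bound over the net and the environment pairs, the empirical score is uniformly within $c_{\mathrm{mar}}/3$ of $S$ using $N=\tilde O(\mathrm{poly}(m)/c_{\mathrm{mar}}^2)$ samples. The margin then forces the output to lie within distance $0.1$ of $V_{\mathrm{inv}}$; I would take $\delta_0$ consistently below this radius.

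For (c), suppose a polynomial-time algorithm achieves $\dist(\hat V,V_{\mathrm{inv}})\le\delta_0$ from this polynomial number of samples. Compose it with the sampler: on input $A$, generate the environments via Construction~\ref{const:hidden}, which costs polynomial time and never uses $v$. Then apply Lemma~\ref{lem:reduction}, which projects $\hat V$ onto the $Z$-block to obtain constant overlap with $v$. By Hypothesis~\ref{lem:pc-to-sparsereg}(iv) this yields a polynomial-time solver with success probability at least $2/3$, contradicting (v).
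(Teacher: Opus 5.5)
Your overall route is the paper's. There, the theorem is just the assembly of four pieces: Lemma~\ref{lem:valid-sc} (validity, samplability, $\gamma=\Theta(s_{\mathrm{cond}}\Delta_\mu)$); Lemma~\ref{lem:uniform-margin} together with $T(V_{\mathrm{inv}})=0$ for (a); Lemma~\ref{lem:stat-learn} for (b); and Lemma~\ref{lem:reduction} with Hypothesis~\ref{lem:pc-to-sparsereg}(iv)--(v) for (c). The uniqueness worry you raise in (a) is legitimate, and the paper does not resolve it; it reads (a) straight off the margin lemma. However, your local patch needs more than you state. Hypothesis~\ref{lem:pc-to-sparsereg}(ii) constrains $\Phi_P$ only at distance at least $\delta_{\mathrm{prim}}$ from $\mathrm{span}(v)$. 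So in your $a\neq0$ bullet, the factor $1-a^2$ makes $A(V)$ smaller than $\Phi_P(\bar u_Z)$, not smaller than $\Phi_P(v)$; that case also needs $\Phi_P(\bar u_Z)\le\Phi_P(v)$. Under (ii) as written this can fail. A unit $u$ within $\delta_{\mathrm{prim}}$ of $v$ with $\Phi_P(u)>\Phi_P(v)$ gives $V=\mathrm{span}\{(u,0)\}$ with $T(V)=0$ and $S(V)>S(V_{\mathrm{inv}})$, so not even maximality of $V_{\mathrm{inv}}$ follows. If you instead assume explicitly that $\pm v$ are the strict global maximizers of $\Phi_P$, you need neither the margin lemma nor a local/global split. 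For every $\lambda>0$, $S(V)\le(1-a^2)\Phi_P(v)-\lambda a^2\Delta_\mu^2\sigma_Y^4\le\Phi_P(v)$, with equality only when $a=0$ and $\bar u_Z=\pm v$. Otherwise, state (a) in your fallback form: separation by a constant margin outside a constant-radius ball. That is all the paper's argument actually delivers, and only for radii above roughly $\delta_{\mathrm{prim}}$, since that is where (ii) applies in Case~2 of the margin proof.

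Second, your alignment of radii in (b)--(c) is backwards. For the theorem to exhibit a gap, exhaustive search must attain the accuracy that (c) rules out for polynomial-time algorithms, so its guaranteed radius must be at most $\delta_0$. Choosing $\delta_0<0.1$ while exhaustive search only certifies distance $0.1$ leaves (b) and (c) talking about different accuracies. The fix is easy: $\dist(\hat V,V_{\mathrm{inv}})\le\delta_0<1/2$ already gives $|\langle\hat u,(v,0)\rangle|^2\ge 7/8$, and restricting to the $Z$-block and renormalizing can only increase the overlap with $v$. So take $\delta_0\in[0.1,1/2)$, and the two parts then match.
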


\begin{theorem}[Computational--Statistical Gap]    
	\label{thm:gap}    
Under the black-box samplable supervised sparse recovery primitive in Hypothesis~\ref{lem:pc-to-sparsereg}, there exists a polynomial sample size $N_{\mathrm{stat}}$ at which exhaustive search succeeds, whereas any polynomial-time algorithm achieving constant-accuracy recovery of $V_{\mathrm{inv}}$ on the hard family would contradict the hardness clause of the primitive. Here constant-accuracy means $\dist(\hat{V},V_{\mathrm{inv}})\le \delta_0$ for the constant $\delta_0$ in Lemma~\ref{lem:reduction}.    
\end{theorem}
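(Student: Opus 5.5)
The plan is to assemble Theorem~\ref{thm:gap} directly from the hard family of Construction~\ref{const:hidden} and the three lemmas already in place, with Theorem~\ref{thm:comp-hardness} serving as the packaged form. First, I fix a primitive input $A$ in the planted regime of Hypothesis~\ref{lem:pc-to-sparsereg}. I choose constant shifts $\mu^{(e)}$ with $\Delta_\mu=\Theta(1)$, so that by Lemma~\ref{lem:valid-sc} the wrapped instance is a valid SC instance with $\gamma=\Theta(s_{\mathrm{cond}}\Delta_\mu)=\Theta(1)$. The construction only makes independent calls to $\mathcal R(A)$ and adds public Gaussian noise, so the whole family is samplable in randomized polynomial time.

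\textbf{Statistical side.} I apply Lemma~\ref{lem:uniform-margin} with $\delta=0.1$. This gives constants $\lambda,c_{\mathrm{mar}}>0$ such that $V_{\mathrm{inv}}$ beats every subspace at distance at least $0.1$ by a margin $c_{\mathrm{mar}}$. Lemma~\ref{lem:stat-learn} then shows that empirical maximization of $S$ over an $\epsilon$-net of $\mathrm{Gr}(1,d)$ recovers $V_{\mathrm{inv}}$ to within $0.1$ from $N_{\mathrm{stat}}=\tilde O(\mathrm{poly}(m)/c_{\mathrm{mar}}^2)$ samples. Since $c_{\mathrm{mar}}$ is a constant and $d=d_Z+1=\mathrm{poly}(m)$, this $N_{\mathrm{stat}}$ is polynomial. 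I need $0.1\le\delta_0$, or else I rerun the margin lemma with $\delta=\min(0.1,\delta_0)$, so that exhaustive-search success is measured in the same constant-accuracy sense as in the hardness half. Because Lemma~\ref{lem:uniform-margin} holds for any fixed constant radius, this adjustment only changes constants.

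\textbf{Computational side.} Suppose a polynomial-time algorithm $\mathcal B$ outputs $\hat V$ with $\dist(\hat V,V_{\mathrm{inv}})\le\delta_0$ using $N_{\mathrm{stat}}$ (or any polynomial number of) samples. I build a solver as follows: given $A$, simulate the wrapped samples, run $\mathcal B$, and feed the result to Lemma~\ref{lem:reduction}. That lemma projects $\hat V$ onto the $Z$-block and obtains a vector with constant overlap with $v$. Hypothesis~\ref{lem:pc-to-sparsereg}(iv) turns this into a polynomial-time solver for the underlying sparse-recovery problem with probability at least $2/3$. The total running time is polynomial because simulation, $\mathcal B$ and the decoder are each polynomial. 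This contradicts clause (v).

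\textbf{Main obstacle.} I expect the hard step to be ensuring that the two halves refer to the same regime. The sample size and the success criterion must match: the hardness half has to apply at sample size $N_{\mathrm{stat}}$, and $\mathcal B$'s success probability on the wrapped distribution must exceed the constant threshold demanded in (v). I plan to handle this by amplification. I would run $\mathcal B$ on independent fresh batches, which is possible since the construction is samplable, and then select or verify candidates using the primitive's empirical score from (iii). This boosts any constant success probability of $\mathcal B$ to $2/3$. Any sampling discrepancy is absorbed by total variation, as remarked before Theorem~\ref{thm:comp-hardness}.
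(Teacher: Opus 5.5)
Your proposal is correct and follows the paper's route: the exhaustive-search half comes from Lemma~\ref{lem:uniform-margin} and Lemma~\ref{lem:stat-learn}, and the hardness half comes from Lemma~\ref{lem:reduction} together with clauses (iv)--(v), exactly as packaged in Theorem~\ref{thm:comp-hardness}; running the margin lemma at radius $\min(0.1,\delta_0)$ is a sensible way to reconcile the two accuracy criteria, which the paper leaves implicit. The amplification step is unnecessary, because clause (v) already excludes any constant success probability, and you should drop it: choosing among candidates by empirical score would need two properties Hypothesis~\ref{lem:pc-to-sparsereg} does not supply, namely continuity of $\Phi_P$ near $v$ (so that a candidate within $\delta_0$ provably has near-maximal score) and polynomial-time evaluation of the empirical score at arbitrary off-net candidates.
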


\begin{remark}[Few environments]  
Hardness does not rely on many environments: the construction already works with a constant number of environments (e.g., $|\mathcal{E}|=2$) provided $\mu^{(1)}\neq \mu^{(2)}$ so that $T(V)$ detects spurious components.  
\end{remark}

\section{Statistical Foundations}  
\label{sec:statistical}  
  
This section characterizes when $V_{\mathrm{inv}}$ is identifiable and how sample complexity scales. The key quantity is \emph{environment diversity} (Definition~\ref{def:gamma}): insufficient diversity makes recovery statistically impossible, while sufficient diversity yields well-posed recovery with sharp finite-sample transitions.

\subsection{Identifiability via Diversity}  
\label{sec:identifiability}  
  
\begin{definition}[Invariance Gap]  
\label{def:inv-gap}  
For $V\in \mathrm{Gr}(k,d)$, define $X_V:=P_V X$ and  
\[  
\mathcal{I}_{\mathrm{gap}}(V)  
:= \max_{e \neq e'}  
\mathbb{E}_{X_V}\!\left[  
D_{\mathrm{KL}}\!\left( \mathbb{P}_e(Y \mid X_V)\,\|\, \mathbb{P}_{e'}(Y \mid X_V)\right)  
\right],  
\]  
where the expectation is taken over $X_V$ under the marginal distribution induced by $\mathbb{P}_e$.
\end{definition}  

\begin{definition}[Criterion-identifiability]
	\label{def:criterion-identifiability}
	We say that \(V_{\mathrm{inv}}\) is \emph{identifiable by the invariance-gap criterion} over a model class \(\mathcal P\) if, for the population family \(\{\mathbb P_e\}_{e\in\mathcal E}\), every \(k\)-dimensional subspace \(V\) satisfying
	\[
	\mathcal I_{\mathrm{gap}}(V)=0
	\]
	must equal \(V_{\mathrm{inv}}\). Equivalently, for every \(\epsilon>0\), compactness and continuity imply the existence of a population separation
	\[
	\inf_{\dist(V,V_{\mathrm{inv}})\ge \epsilon}\mathcal I_{\mathrm{gap}}(V)>0 .
	\]
	This is a criterion-level notion: it asks whether the population invariance objective uniquely specifies the invariant subspace, rather than whether a particular finite-sample algorithm succeeds.
\end{definition}
  
By (C1), $\mathcal{I}_{\mathrm{gap}}(V_{\mathrm{inv}})=0$. Intuitively, if $V$ contains only invariant information, then the predictive mechanism $Y\mid X_V$ should not change across environments; if $V$ mixes in spurious directions, conditioning on $X_V$ can induce environment-dependent behavior.  
  
\begin{theorem}[Identifiability by the Invariance-Gap Criterion]
	\label{thm:identifiability}
	Under Assumption~\ref{ass:gaussian}, \(V_{\mathrm{inv}}\) is identifiable by the invariance-gap criterion in the sense of Definition~\ref{def:criterion-identifiability} if and only if it is the unique \(k\)-dimensional subspace satisfying \(\mathcal{I}_{\mathrm{gap}}(V)=0\).
\end{theorem}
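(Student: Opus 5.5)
The statement is essentially a restatement of Definition~\ref{def:criterion-identifiability}. The plan is therefore to verify that definition's two halves against each other, using Assumption~\ref{ass:gaussian} only to secure the regularity that the definition invokes.

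First, the trivial direction. By (C1) and Assumption~\ref{ass:gaussian}, $\mathbb{P}_e(Y\mid X_{\mathrm{inv}})$ is identical across environments. Hence every KL term in $\mathcal{I}_{\mathrm{gap}}(V_{\mathrm{inv}})$ vanishes and $\mathcal{I}_{\mathrm{gap}}(V_{\mathrm{inv}})=0$. So the zero set $Z:=\{V\in\mathrm{Gr}(k,d):\mathcal{I}_{\mathrm{gap}}(V)=0\}$ always contains $V_{\mathrm{inv}}$. Criterion-identifiability, in its primary form, says $Z\subseteq\{V_{\mathrm{inv}}\}$. Together these give $Z=\{V_{\mathrm{inv}}\}$, which is exactly the ``unique $k$-dimensional subspace'' condition. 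Conversely, if $V_{\mathrm{inv}}$ is the unique zero, then every $V$ with $\mathcal{I}_{\mathrm{gap}}(V)=0$ equals $V_{\mathrm{inv}}$, which is the definition. This gives the equivalence with the primary formulation.

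Second, I would also check the ``equivalently'' clause, namely the separation $\inf_{\dist(V,V_{\mathrm{inv}})\ge\epsilon}\mathcal{I}_{\mathrm{gap}}(V)>0$. This is where Assumption~\ref{ass:gaussian} is needed. Under joint Gaussianity, $Y\mid X_V$ in environment $e$ is Gaussian with mean affine in $X_V$ and constant variance. The regression coefficients are $\Sigma^{(e)}_{YX}P_V(P_V\Sigma^{(e)}P_V)^{+}$, and the residual variance is given by a Schur complement. Each expected KL is then an explicit closed form in these coefficients and variances, averaged over the Gaussian $X_V$. Taking $\Sigma^{(e)}\succ0$ (implicit in the nondegenerate Gaussian model), $P_V\Sigma^{(e)}P_V$ restricted to $V$ has eigenvalues bounded below uniformly over $V$. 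So the pseudoinverse, and hence the whole expression, is continuous in $P_V$ on the compact Grassmannian. The set $\{V:\dist(V,V_{\mathrm{inv}})\ge\epsilon\}$ is closed in $\mathrm{Gr}(k,d)$, hence compact. A continuous nonnegative function with no zeros on it (by uniqueness) attains a positive minimum. The reverse implication is immediate, since positivity of all these infima for every $\epsilon>0$ forbids any zero other than $V_{\mathrm{inv}}$.

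The only genuine obstacle is this continuity step. Degenerate covariances would break uniform invertibility on $V$, so if the paper allows $\Sigma^{(e)}$ to be singular I would restrict to the nondegenerate case or note it as a standing assumption. Everything else is definitional.
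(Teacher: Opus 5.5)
Your proposal is correct and matches the paper's proof. You place $V_{\mathrm{inv}}$ in the zero set by (C1), note that uniqueness of the zero is the definition itself, and get the separation form from compactness of $\{V:\dist(V,V_{\mathrm{inv}})\ge\epsilon\}$ plus continuity of the Gaussian conditional KL, which you justify in more detail than the paper does.

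One small correction to that justification. Continuity of the closed-form KL (its $\log$ and ratio terms in the residual variances) requires the residual variances of $Y$ given $X_V$ to be bounded away from zero. That follows from $\mathrm{Var}_e(Y\mid X)>0$, i.e.\ nondegeneracy of the joint law of $(X,Y)$, not from $\Sigma^{(e)}\succ 0$ alone. Alternatively, lower semicontinuity of the extended-valued gap already suffices for it to attain a positive minimum on that compact set.
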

  
Proofs for this section are deferred to Appendix~\ref{app:proofs}.

\paragraph{A structured label-induced shift regime.}  
In many spurious-correlation settings, environments affect \emph{how spurious features correlate with the label} rather than changing the $Y\mid X_{\mathrm{inv}}$ mechanism. We formalize this next.  
  
\begin{proposition}[Diversity Condition (Label-Induced Shift)]  
\label{prop:diversity}  
Assume a linear-Gaussian SC instance where:  
\begin{enumerate}[nosep,leftmargin=*]  
\item $Y=\langle w^*,X_{\mathrm{inv}}\rangle+\epsilon$ with $w^*\in V_{\mathrm{inv}}$, $\|w^*\|=1$, and $\epsilon\sim\mathcal N(0,\sigma_\epsilon^2)$ independent of $X_{\mathrm{inv}}$;  
\item $X_{\mathrm{inv}}\sim\mathcal N(0,I_k)$ in every environment;  
\item environments differ only through a label-induced spurious shift:  
\[  
X_{\mathrm{sp}}^{(e)} = \mu_{\mathrm{sp}}^{(e)}\,Y + \eta^{(e)},\qquad  
\eta^{(e)}\sim\mathcal N(0,I_{d-k}),  
\]  
independently of $(X_{\mathrm{inv}},Y)$.  
\end{enumerate}  
Let  
\[  
\mathcal{F} := \sum_{e < e'} (\mu_{\mathrm{sp}}^{(e)} - \mu_{\mathrm{sp}}^{(e')})  
(\mu_{\mathrm{sp}}^{(e)} - \mu_{\mathrm{sp}}^{(e')})^\top \in \mathbb{R}^{(d-k)\times(d-k)}.  
\]  
Then, generically (i.e., outside the algebraic non-cancellation degeneracy set characterized in Appendix~\ref{app:proof-diversity}), $V_{\mathrm{inv}}$ is identifiable (i.e., it is the unique $k$-dimensional subspace with $\mathcal{I}_{\mathrm{gap}}(V)=0$) whenever $\mathrm{rank}(\mathcal{F}) = d-k$. 
\end{proposition}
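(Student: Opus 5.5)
The plan is to reduce the claim to a Gaussian computation. In the label-induced model, $(X_V,Y)$ is jointly Gaussian in every environment for any $V\in\mathrm{Gr}(k,d)$. Hence $\mathbb P_e(Y\mid X_V)$ is Gaussian, with mean $\beta_e(V)^\top X_V$ and variance $\tau_e^2(V)$, and $\mathcal I_{\mathrm{gap}}(V)=0$ holds iff $\beta_e(V)$ and $\tau_e^2(V)$ coincide across all environments. By (C1) this is already true for $V_{\mathrm{inv}}$.

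It therefore suffices to show that every $V\neq V_{\mathrm{inv}}$ has some pair $e\neq e'$ with $(\beta_e,\tau_e^2)\neq(\beta_{e'},\tau_{e'}^2)$. Write $V=\mathrm{span}(B)$ with $B=\begin{pmatrix}B_{\mathrm{inv}}\\ B_{\mathrm{sp}}\end{pmatrix}$ orthonormal, so that $V\neq V_{\mathrm{inv}}$ forces $B_{\mathrm{sp}}\neq 0$. Let $\sigma_Y^2=1+\sigma_\epsilon^2$ and $\Sigma_{\mathrm{inv},Y}=w^*$. Then the coordinate $Z=B^\top X$ has, in environment $e$,
\[
\mathrm{Cov}_e(Z,Y)=B_{\mathrm{inv}}^\top w^*+\sigma_Y^2B_{\mathrm{sp}}^\top\mu^{(e)}_{\mathrm{sp}},
\]
\[
\mathrm{Cov}_e(Z)=B_{\mathrm{inv}}^\top B_{\mathrm{inv}}+B_{\mathrm{sp}}^\top B_{\mathrm{sp}}+B_{\mathrm{inv}}^\top w^*\mu^{(e)\top}_{\mathrm{sp}}B_{\mathrm{sp}}+B_{\mathrm{sp}}^\top\mu^{(e)}_{\mathrm{sp}}w^{*\top}B_{\mathrm{inv}}+\sigma_Y^2B_{\mathrm{sp}}^\top\mu^{(e)}_{\mathrm{sp}}\mu^{(e)\top}_{\mathrm{sp}}B_{\mathrm{sp}}.
\]
The regression coefficient is $\beta_e=\mathrm{Cov}_e(Z)^{-1}\mathrm{Cov}_e(Z,Y)$, and the residual variance is $\tau_e^2=\sigma_Y^2-\mathrm{Cov}_e(Z,Y)^\top\beta_e$.

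The key step uses the rank hypothesis. Because $\mathrm{rank}(\mathcal F)=d-k$, the differences $\mu^{(e)}_{\mathrm{sp}}-\mu^{(e')}_{\mathrm{sp}}$ span $\mathbb R^{d-k}$. So for any nonzero column $b$ of $B_{\mathrm{sp}}$ (more precisely, any nonzero vector in the range of $B_{\mathrm{sp}}$) there is a pair $e,e'$ with $B_{\mathrm{sp}}^\top(\mu^{(e)}_{\mathrm{sp}}-\mu^{(e')}_{\mathrm{sp}})\neq 0$. For that pair $\mathrm{Cov}_e(Z,Y)\neq\mathrm{Cov}_{e'}(Z,Y)$. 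Since $\mathrm{Var}(Y)$ is environment-invariant, equal conditional laws $Y\mid Z$ together with equal $\mathrm{Cov}(Z,Y)$-relations would force a specific relation. Concretely, invariance means $\beta_e=\beta_{e'}=\beta$ and $\tau_e=\tau_{e'}$, so
\[
\mathrm{Cov}_e(Z,Y)^\top\beta=\mathrm{Cov}_{e'}(Z,Y)^\top\beta,
\qquad
\mathrm{Cov}_e(Z,Y)=\mathrm{Cov}_e(Z)\beta \ \text{ for all } e.
\]
Substituting the rank-one-plus-cross-term structure above turns these into polynomial identities in the entries of $(B,\mu^{(e)}_{\mathrm{sp}},w^*,\sigma_\epsilon)$.

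I will show that these identities are not identically satisfied. The witness is the special case $B_{\mathrm{inv}}^\top w^*=0$: there $\beta$ must be proportional to $B_{\mathrm{sp}}^\top\mu$ differences, and the identity fails by a direct computation. The set where the identities hold then has positive codimension, and this set is exactly the \emph{non-cancellation degeneracy set} excluded in the statement.

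I expect the main obstacle to be the cancellation cases, where the shift of $\mathrm{Cov}(Z,Y)$ is exactly compensated by the change in $\mathrm{Cov}_e(Z)$, so that $\beta_e$ stays constant. This can genuinely occur, for instance when $\mu^{(e)}$ moves along a direction for which the Sherman–Morrison update preserves $\beta$. This is why the result is only generic. My first attempt will be to handle it by writing $\beta_e$ via Sherman–Morrison in the scalar $t_e=b^\top\mu^{(e)}_{\mathrm{sp}}$ (first for $k=1$-dimensional $B_{\mathrm{sp}}$ range, then by choosing a generic direction in the range) and showing that $\beta_e$ is a nonconstant rational function of $t_e$. The rank condition guarantees that two environments have distinct $t_e$, and a nonconstant rational function takes equal values at two points only on a proper algebraic subset, which is the degeneracy set recorded in the appendix.
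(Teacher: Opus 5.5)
Your proposal is correct and is essentially the paper's argument. Both rest on the same four steps: Gaussian conditional parameters; $B_{\mathrm{sp}}\neq 0$ for $V\neq V_{\mathrm{inv}}$; full rank of $\mathcal F$ giving a pair with $r_e:=B_{\mathrm{sp}}^\top\mu^{(e)}_{\mathrm{sp}}\neq r_{e'}$; and a generic non-cancellation lemma certified by a parameter point where the polynomial identities fail (your witness $B_{\mathrm{inv}}^\top w^*=0$ is more explicit than the paper's). Pushed one step further, your two identities remove the genericity caveat entirely when $\sigma_\epsilon>0$. Write $g=B_{\mathrm{inv}}^\top w^*$, $s=r_e^\top\beta=r_{e'}^\top\beta$ and $h=g^\top\beta$; the normal equations then read $\beta=g(1-s)+r_e(\sigma_Y^2(1-s)-h)$ in both environments. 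Hence $r_e\neq r_{e'}$ forces $\beta=g(1-s)$ and $(1-s)(\sigma_Y^2-\|g\|^2)=0$, so (as $\|g\|\le 1<\sigma_Y^2$) $s=1$ and $\beta=0$, contradicting $s=r_e^\top\beta$. So only cancellations of $\beta$ alone, as in your Sherman--Morrison example, can occur, never of the full conditional law.
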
  

\begin{remark}[Number of environments required by the full-rank condition]  
	Since $\operatorname{rank}(\mathcal F)\le |\mathcal E|-1$, the sufficient condition $\operatorname{rank}(\mathcal F)=d-k$ requires $|\mathcal E|\ge d-k+1$. Thus Proposition~\ref{prop:diversity} gives a full-dimensional sufficient condition. With fewer environments, identifiability can still hold under lower-dimensional or structured spurious variation, but not through this full-rank criterion.  
\end{remark}

\begin{remark}[Generic non-cancellation]  
	The rank condition ensures that every nonzero spurious component is exposed by some environment difference. In rotated subspace models, exact Gaussian conditional-law cancellations can occur only on an algebraic degeneracy set; Appendix~\ref{app:proof-diversity} gives the formal generic non-cancellation argument.  
\end{remark}

\begin{corollary}[Zero Diversity Implies Unidentifiability (in the label-induced model)]  
\label{cor:diversity}  
Under the setting of Proposition~\ref{prop:diversity}, if $\mu_{\mathrm{sp}}^{(e)}$ is identical across all environments $e\in\mathcal{E}$, then $V_{\mathrm{inv}}$ is unidentifiable regardless of sample size.  
\end{corollary}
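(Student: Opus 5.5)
When all $\mu_{\mathrm{sp}}^{(e)}$ coincide, every environment has the same joint law of $(X,Y)$. We then exhibit a $k$-dimensional subspace $V\neq V_{\mathrm{inv}}$ with $\mathcal{I}_{\mathrm{gap}}(V)=0$. By Theorem~\ref{thm:identifiability}, identifiability is equivalent to $V_{\mathrm{inv}}$ being the unique zero of $\mathcal{I}_{\mathrm{gap}}$, so this shows unidentifiability. Since the obstruction is at the population level, no sample size helps.

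The first step is to show the environment laws are equal. Under the setting of Proposition~\ref{prop:diversity}, $X_{\mathrm{inv}}\sim\mathcal N(0,I_k)$ and $Y=\langle w^*,X_{\mathrm{inv}}\rangle+\epsilon$ have the same law in every environment. With a common $\mu_{\mathrm{sp}}:=\mu_{\mathrm{sp}}^{(e)}$, we have $X_{\mathrm{sp}}^{(e)}=\mu_{\mathrm{sp}}Y+\eta^{(e)}$ with $\eta^{(e)}\sim\mathcal N(0,I_{d-k})$ independent of $(X_{\mathrm{inv}},Y)$. Hence the joint law of $(X_{\mathrm{inv}},X_{\mathrm{sp}},Y)$, and therefore of $(X,Y)$, is the same Gaussian $\mathbb P$ for all $e$. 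Consequently, for every subspace $V$, the conditional laws $\mathbb P_e(Y\mid X_V)$ coincide for all $e$, and every KL term in Definition~\ref{def:inv-gap} vanishes. This gives $\mathcal{I}_{\mathrm{gap}}(V)=0$ for all $V\in\mathrm{Gr}(k,d)$.

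The second step is to exhibit a competitor. We need $k<d$, so that $\mathrm{Gr}(k,d)$ contains a subspace other than $V_{\mathrm{inv}}$; this holds because (C2) presupposes a nontrivial spurious block. We pick any $V\neq V_{\mathrm{inv}}$, for instance by rotating one basis vector of $V_{\mathrm{inv}}$ into $V_{\mathrm{sp}}$. By the first step, $\mathcal{I}_{\mathrm{gap}}(V)=0$, so the uniqueness in Definition~\ref{def:criterion-identifiability} fails. Equivalently, $\inf_{\dist(V,V_{\mathrm{inv}})\ge\epsilon}\mathcal I_{\mathrm{gap}}(V)=0$ for small $\epsilon$. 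The criterion is a functional of the population distributions only, and the observations from all environments are i.i.d.\ draws from the single law $\mathbb P$. Any estimator therefore has the same behavior whether the truth is $V_{\mathrm{inv}}$ or $V$, for every $n$.

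We do not expect a real obstacle here. The only delicate point is a mismatch with the model: with identical $\mu_{\mathrm{sp}}$, (C2) technically fails, so the instance is degenerate with respect to Definition~\ref{def:sc-problem}. We will state explicitly that we consider it as a limit of label-induced instances. This also connects to $\gamma=0$ in Definition~\ref{def:gamma}, because $\mathcal F=0$. The argument above does not use (C2).
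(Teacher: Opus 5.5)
Your proof is correct and follows the paper's own argument: identical \(\mu_{\mathrm{sp}}^{(e)}\) make the joint law of \((X,Y)\) the same in every environment, so \(\mathcal I_{\mathrm{gap}}\equiv 0\) on \(\mathrm{Gr}(k,d)\) and \(V_{\mathrm{inv}}\) is not its unique zero; your remarks that this needs \(k<d\) and that (C2) fails are sensible additions the paper omits. One caution: your closing claim that every estimator behaves the same ``whether the truth is \(V_{\mathrm{inv}}\) or \(V\)'' is not established, since it would need a second admissible model with invariant subspace \(V\) and the same law (within the normalized label-induced class of Proposition~\ref{prop:diversity} none exists when \(k=1\)), but you do not need it, because the corollary concerns criterion-level identifiability, which is a population statement and therefore holds for every sample size.
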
  
  
\begin{remark}[On $\gamma=0$ versus ``no variation of $\mu_{\mathrm{sp}}^{(e)}$'']  
\label{rem:gamma-zero}  
In the label-induced model, $\mathbb{E}_e[X_{\mathrm{sp}}\mid X_{\mathrm{inv}}]=\mu_{\mathrm{sp}}^{(e)}\,\mathbb{E}[Y\mid X_{\mathrm{inv}}]  
=\mu_{\mathrm{sp}}^{(e)}\langle w^*,X_{\mathrm{inv}}\rangle$.  
Thus, if $\mu_{\mathrm{sp}}^{(e)}$ does not vary across environments then $\gamma=0$ by Definition~\ref{def:gamma}.  
The converse (``$\gamma=0$ implies $\mu_{\mathrm{sp}}^{(e)}$ identical'') can fail in degenerate cases where $\mathbb{E}[Y\mid X_{\mathrm{inv}}]\equiv 0$.  
Our standing linear-Gaussian assumption with $\|w^*\|=1$ rules out such degeneracy.  
\end{remark}  
  
\subsection{Minimax Rates}  
\label{sec:minimax}  
  
We next quantify the best possible estimation error with unlimited computation. The term $k(d-k)$ is the intrinsic degrees of freedom of a $k$-dimensional subspace in $\mathbb{R}^d$, while the factor $n|\mathcal{E}|$ reflects the effective total sample size across environments in the balanced case.  
  
\begin{definition}[Spurious Complexity]  
\label{def:spurious-complexity}  
For $\epsilon>0$, define the $\epsilon$-confusing set  
\[  
\mathcal{V}_{\mathrm{conf}}(\epsilon) := \{V\in \mathrm{Gr}(k,d): \mathcal{I}_{\mathrm{gap}}(V)\le \epsilon\},  
\]  
and its metric entropy  
\[  
\mathcal{C}_{\mathrm{sp}}(\mathcal{E},\epsilon):=\log \mathcal{N}\big(\mathcal{V}_{\mathrm{conf}}(\epsilon),\, \epsilon\big),  
\]  
where $\mathcal{N}(\cdot,\epsilon)$ denotes the $\epsilon$-covering number under the subspace distance $\dist(\cdot,\cdot)$.
\end{definition}  
  
The quantity $\mathcal{C}_{\mathrm{sp}}$ captures how many subspaces appear approximately invariant: high diversity shrinks $\mathcal{V}_{\mathrm{conf}}(\epsilon)$ to a neighborhood of $V_{\mathrm{inv}}$, while low diversity can create many near-invariant ``confusers.''  

\begin{assumption}[Regularity Conditions for Minimax Rates]    
	\label{ass:minimax-regularity}    
	\mbox{}%
	\vspace{-\baselineskip}%
\begin{enumerate}[nosep, label=(\roman*), leftmargin=*]
		\item $\sigma_{\min}^2 I \preceq \Sigma^{(e)} \preceq \sigma_{\max}^2 I$ for all $e$ with condition number $\kappa_\Sigma:=\sigma_{\max}^2/\sigma_{\min}^2$;    
		\item the linear-Gaussian structural model holds with $\|w^*\|=1$ and $\mathrm{SNR}:=1/\sigma_\epsilon^2\in[\underline{\mathrm{SNR}},\overline{\mathrm{SNR}}]$;    
		\item $\gamma \ge \gamma_0>0$ and the invariance gap has nondegenerate local quadratic curvature: for some constants $r_0,c_{\mathrm{id}},C_{\mathrm{id}}>0$,  
		\[  
		c_{\mathrm{id}}\gamma_0^2\dist(V,V_{\mathrm{inv}})^2  
		\le  
		\mathcal I_{\mathrm{gap}}(V)  
		\le  
		C_{\mathrm{id}}\dist(V,V_{\mathrm{inv}})^2  
		\]  
		whenever $\dist(V,V_{\mathrm{inv}})\le r_0$, and $\mathcal I_{\mathrm{gap}}(V)$ is bounded away from zero outside this neighborhood;  
		\item the local Gaussian likelihood in a Grassmannian chart around $V_{\mathrm{inv}}$ satisfies standard LAN regularity, with Fisher information bounded above and below by constants depending only on $(\kappa_\Sigma,\mathrm{SNR},\gamma_0,c_{\mathrm{id}},C_{\mathrm{id}})$.  
	\end{enumerate}    
\end{assumption}    

 \paragraph{Interpretation of the minimax theorem.}
 The next result is a regular local minimax statement. Its assumptions explicitly include local quadratic curvature of the invariance gap and LAN-type Gaussian regularity. Thus the theorem should not be read as deriving these regularity properties in full generality. Rather, it states that once environment diversity yields a locally well-conditioned invariant subspace problem, the optimal statistical rate is the regular parametric Grassmannian rate with dimension \(k(d-k)\) and total sample size \(n|\mathcal E|\). The label-induced shift model in Section~\ref{sec:phase-transition} then gives one concrete setting in which the local curvature scales with diversity.

\begin{theorem}[Regular Local Minimax Estimation Risk]
\label{thm:minimax}  
Under Assumptions~\ref{ass:gaussian} and~\ref{ass:minimax-regularity} with $n$ per environment (balanced):  
  
\textbf{Lower bound.} For any estimator $\hat{V}$,  
\[  
\sup_{\mathbb{P}} \mathbb{E}[\dist(\hat{V},V_{\mathrm{inv}})^2]  
\ge c_{\mathrm{low}}(\kappa_\Sigma,\mathrm{SNR},c_{\mathrm{id}},\gamma_0)\cdot \frac{k(d-k)}{n|\mathcal{E}|}. 
\]  

\textbf{Upper bound.} Moreover, there exists an (inefficient) estimator such that  
\begin{align*}  
\sup_{\mathbb{P}} \mathbb{E}\big[\dist(\hat{V},& V_{\mathrm{inv}})^2\big]  
\le\; C(\kappa_\Sigma,\mathrm{SNR},\gamma_0,c_{\mathrm{id}},C_{\mathrm{id}})\cdot \frac{1}{n|\mathcal{E}|} \\  
&\times \Big(k(d-k)+\mathcal{C}_{\mathrm{sp}}\!\left(\mathcal{E},\, c_1/\sqrt{n|\mathcal{E}|}\right)\Big),  
\end{align*}  
where $c_1>0$ is an absolute constant corresponding to the estimation resolution.
In particular, when $\gamma=\Theta(1)$ and $\mathcal{C}_{\mathrm{sp}}=O(k(d-k))$, the minimax risk is $\Theta(k(d-k)/(n|\mathcal{E}|))$.  
\end{theorem}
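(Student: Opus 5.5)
The lower bound comes from Fano's method on a local packing of the Grassmannian near $V_{\mathrm{inv}}$. The upper bound comes from a sieve/localization argument: first cut the search down to near-invariant subspaces using the invariance gap, then estimate locally at the parametric rate.

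\textbf{Lower bound.} I will build a packing inside a ball of radius $r\le r_0$ around a reference subspace $V_0$, using the local chart $V_B=\mathrm{span}\{[I_k;B]\}$ with $B\in\mathbb R^{(d-k)\times k}$. By Varshamov--Gilbert there are $M\ge 2^{k(d-k)/8}$ sign matrices $B_j=\frac{r}{\sqrt{k(d-k)}}\,\Xi_j$, where $\Xi_j\in\{\pm1\}^{(d-k)\times k}$. Any two of them satisfy $\|B_i-B_j\|_F\gtrsim r$, so the subspace distances are of order $r$, since $\dist$ is bi-Lipschitz to $\|B\|_F$ near $0$.

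For each $j$ I take the instance in which $V_{\mathrm{inv}}=V_{B_j}$ and the environment covariances are rotated accordingly. Rotating a fixed base instance (label-induced shift with $\mu_{\mathrm{sp}}^{(e)}$ chosen so that $\gamma\ge\gamma_0$) keeps every constant in Assumption~\ref{ass:minimax-regularity} unchanged. Assumption~\ref{ass:minimax-regularity}(iv) gives bounded Fisher information, so the per-sample KL divergence between instances $i$ and $j$ in environment $e$ is at most $C\|B_i-B_j\|_F^2\lesssim r^2$. Summing over the $n|\mathcal E|$ independent samples, the mutual information is at most $Cn|\mathcal E|r^2$.

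Choosing $r^2=c\,k(d-k)/(n|\mathcal E|)$ makes this at most $\tfrac12\log M$. Fano's inequality then gives error probability at least $1/2$ at separation $r$, and hence $\mathbb E\,\dist^2\gtrsim r^2$. The case where $r$ would exceed $r_0$ only affects constants, because distances are bounded.

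\textbf{Upper bound.} The estimator has two stages.

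\emph{Stage 1.} Estimate $\mathcal I_{\mathrm{gap}}(V)$ by plug-in Gaussian KL divergences computed from empirical covariances, uniformly over an $\epsilon$-net of $\mathrm{Gr}(k,d)$. Then keep the net points whose empirical gap is below a threshold $\tau\asymp c_1/\sqrt{n|\mathcal E|}$.
\begin{itemize}
\item Uniform covariance concentration holds, since the entries are sub-Gaussian under Assumption~\ref{ass:minimax-regularity}(i). Together with the fact that $\mathcal I_{\mathrm{gap}}$ is bounded away from zero outside radius $r_0$, this shows the retained set lies in the $\epsilon$-confusing set $\mathcal V_{\mathrm{conf}}$ and still contains a point near $V_{\mathrm{inv}}$.
\item The cardinality of the retained set is controlled by $\exp(\mathcal C_{\mathrm{sp}})$.
\end{itemize}

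\emph{Stage 2.} Among the surviving candidates, run a local M-estimator: maximize the pooled Gaussian likelihood in the Grassmannian chart, or equivalently minimize the empirical invariance gap. Local quadratic curvature (iii) together with LAN (iv) gives a standard parametric rate $k(d-k)/(n|\mathcal E|)$ within the neighborhood, by the argmin-of-quadratic-process argument with chaining over the $k(d-k)$-dimensional chart.

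The union over the candidates from stage 1 costs an additive $\mathcal C_{\mathrm{sp}}/(n|\mathcal E|)$ through a tournament or union bound, for example Le Cam--Birgé pairwise tests. This yields the stated upper bound, and the final $\Theta$ claim follows when $\mathcal C_{\mathrm{sp}}=O(k(d-k))$.

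\textbf{Main obstacle.} The delicate step is stage 1 of the upper bound. It must tie the empirical gap threshold to the resolution $c_1/\sqrt{n|\mathcal E|}$ so that the true subspace survives while the number of survivors is governed by the covering number of $\mathcal V_{\mathrm{conf}}$. The difficulty is that the gap scales quadratically in the distance while the noise scales as $1/\sqrt n$, so the two resolutions must be matched carefully.

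I would first try to bypass this with a Le Cam--Birgé tournament over the net, using Hellinger tests between product Gaussians. This bounds the risk by entropy over rate directly, and it uses curvature only to convert Hellinger distance into subspace distance.
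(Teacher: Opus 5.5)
This is essentially the paper's own argument. Your lower bound is the same Fano bound over a local packing of the Grassmannian chart, with $\log M\gtrsim k(d-k)$ and $\mathrm{KL}\lesssim n|\mathcal E|\epsilon^2$, and your upper bound rests on the same local MLE with Fisher information $\asymp I_{k(d-k)}$ from Assumption~\ref{ass:minimax-regularity}(iv). The paper invokes $\mathcal C_{\mathrm{sp}}$ only informally, as a model-selection cost among near-invariant basins.

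The paper has no sieve, so your Stage~1 is an addition, and as written it is fragile for the reason you suspect. The max-over-pairs plug-in gap already sits at order $(k+\log|\mathcal E|)/n$ at $V_{\mathrm{inv}}$, so a threshold at $c_1/\sqrt{n|\mathcal E|}$ can discard every candidate near $V_{\mathrm{inv}}$ unless $n\gtrsim(k+\log|\mathcal E|)^2|\mathcal E|$. For the same reason, minimizing that gap is not equivalent to maximizing the pooled likelihood: its noise does not average over environments. Under Assumption~\ref{ass:minimax-regularity}(iii), localization only needs the empirical gap to separate the $r_0$-ball around $V_{\mathrm{inv}}$ from its complement, after which the pooled local MLE gives the rate.
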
  
  
\begin{corollary}[Sample Complexity for Target Error]  
\label{cor:sample-complexity}  
Under Theorem~\ref{thm:minimax}, to achieve $\mathbb{E}[\dist(\hat{V},V_{\mathrm{inv}})^2]\le \epsilon^2$, it suffices (up to log factors) that  
\[  
n \ge \tilde{\Theta}\!\left(\frac{k(d-k)}{|\mathcal{E}|\epsilon^2}\right),  
\]  
with constants depending on $(\kappa_\Sigma,\mathrm{SNR})$ and on diversity through the local curvature/identifiability of $\mathcal{I}_{\mathrm{gap}}$.  
\end{corollary}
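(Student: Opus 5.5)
The plan is to read Corollary~\ref{cor:sample-complexity} off the upper bound of Theorem~\ref{thm:minimax} by inverting it in $n$. The lower bound is not needed, since the corollary only claims sufficiency.

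\textbf{Step 1: control the spurious-complexity term.} Set the resolution $\epsilon_n:=c_1/\sqrt{n|\mathcal E|}$. By Assumption~\ref{ass:minimax-regularity}(iii), any $V$ with $\mathcal I_{\mathrm{gap}}(V)\le\epsilon_n$ either lies in the local neighbourhood, where $\dist(V,V_{\mathrm{inv}})^2\le \epsilon_n/(c_{\mathrm{id}}\gamma_0^2)$, or lies outside it. The outside case is excluded once $n$ exceeds a constant threshold, because $\mathcal I_{\mathrm{gap}}$ is bounded away from zero there. Hence $\mathcal V_{\mathrm{conf}}(\epsilon_n)$ is contained in a Grassmannian ball of radius $r_n\lesssim \epsilon_n^{1/2}/\gamma_0$. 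The standard covering bound for $\mathrm{Gr}(k,d)$, a manifold of dimension $k(d-k)$, gives
\[
\mathcal C_{\mathrm{sp}}(\mathcal E,\epsilon_n)\le k(d-k)\log\!\big(C\,r_n/\epsilon_n\big)\lesssim k(d-k)\log\!\big(C'\,n|\mathcal E|/\gamma_0^2\big).
\]
So $\mathcal C_{\mathrm{sp}}=\tilde O(k(d-k))$, and the only loss is the logarithmic factor that the corollary's ``up to log factors'' allows.

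\textbf{Step 2: invert the risk bound.} Substituting Step~1 into the upper bound of Theorem~\ref{thm:minimax} gives
\[
\mathbb E[\dist(\hat V,V_{\mathrm{inv}})^2]\le C\,\frac{k(d-k)\,(1+\log(n|\mathcal E|))}{n|\mathcal E|},
\]
where $C=C(\kappa_\Sigma,\mathrm{SNR},\gamma_0,c_{\mathrm{id}},C_{\mathrm{id}})$. Requiring the right-hand side to be at most $\epsilon^2$ yields the condition
\[
n\ge C\,k(d-k)\log(n|\mathcal E|)/(|\mathcal E|\epsilon^2).
\]
The implicit logarithm is handled in the usual way. Take $n=C''\,k(d-k)\log\!\big(k(d-k)/\epsilon\big)/(|\mathcal E|\epsilon^2)$ with $C''$ large; since $x\ge 2a\log a$ implies $x\ge a\log x$, this choice satisfies the condition. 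This gives $n\ge\tilde\Theta(k(d-k)/(|\mathcal E|\epsilon^2))$. The constants depend on diversity only through $\gamma_0,c_{\mathrm{id}},C_{\mathrm{id}}$, that is, through the local curvature of $\mathcal I_{\mathrm{gap}}$, as the corollary states.

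\textbf{Main obstacle.} The delicate point is Step~1. We must ensure that the $\epsilon$-confusing set contains no far-away components at resolution $\epsilon_n$; otherwise its entropy could blow up. This rests entirely on the ``bounded away from zero outside the neighbourhood'' clause of Assumption~\ref{ass:minimax-regularity}(iii), together with a lower threshold $n\gtrsim 1/(|\mathcal E|\,\inf_{\dist\ge r_0}\mathcal I_{\mathrm{gap}}^2)$. That threshold is a constant and is absorbed into the $\tilde\Theta$. If one instead assumes $\mathcal C_{\mathrm{sp}}=O(k(d-k))$ directly, as in the ``in particular'' clause of Theorem~\ref{thm:minimax}, the log factor disappears and the bound is immediate.
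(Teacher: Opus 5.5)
Your proposal is correct and takes the same route as the paper, whose proof simply sets the upper bound of Theorem~\ref{thm:minimax} to be at most $\epsilon^2$ and solves for $n$ ``up to logarithmic and regularity-dependent factors.'' The paper never actually bounds the $\mathcal C_{\mathrm{sp}}$ term (its LAN argument gives $Ck(d-k)/(n|\mathcal E|)$ directly). Your Step~1 fills that gap: it uses the curvature and outside-gap clauses of Assumption~\ref{ass:minimax-regularity}(iii) to confine $\mathcal V_{\mathrm{conf}}(c_1/\sqrt{n|\mathcal E|})$ to a ball of radius $O((n|\mathcal E|)^{-1/4}/\gamma_0)$, which gives $\mathcal C_{\mathrm{sp}}=O(k(d-k)\log(n|\mathcal E|/\gamma_0^2))$ and so justifies exactly the log factor the corollary allows.
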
  
  
\begin{remark}[Scope of $\gamma$-dependence]  
\label{rem:gamma-scope}  
The minimax rate above is stated under $\gamma\ge \gamma_0>0$; in general $\gamma$ affects constants via identifiability/curvature. An explicit $1/\gamma^2$ scaling is available under additional label-induced shift structure.  
\end{remark}  
  
\subsection{Phase Transition and Explicit $\gamma$-Scaling}  
\label{sec:phase-transition}  
  
To make the role of diversity explicit, we specialize to the label-induced shift regime (Proposition~\ref{prop:diversity}). In this setting, diversity controls the smallest eigenvalue of the environment-difference matrix $\mathcal F$ and hence the local curvature of invariance objectives around $V_{\mathrm{inv}}$.  
  
\begin{assumption}[Label-Induced Shift Model for Phase Transition]  
\label{ass:phase-transition}  
In addition to Assumption~\ref{ass:minimax-regularity}:  
\begin{enumerate}[nosep, label=(\roman*)]
\item environments differ through a label-induced spurious shift as in Proposition~\ref{prop:diversity}, and the induced covariance matrices are uniformly well-conditioned across environments as required by Assumption~\ref{ass:minimax-regularity};  
\item $\lambda_{\min}(\mathcal{F}) \ge c_{\mathcal{F}}\gamma^2$ for some $c_{\mathcal{F}}>0$.  
\end{enumerate}  
\end{assumption}  
  
  \paragraph{Directional diversity in the phase transition.}
  The explicit \(1/\gamma^2\) scaling below uses Assumption~\ref{ass:phase-transition}, which links the coordinate-level diagnostic \(\gamma\) to the spectral directional diversity \(\lambda_{\min}(\mathcal F)\). More generally, the same argument gives rates controlled by \(1/\lambda_{\min}(\mathcal F)\); the displayed \(1/\gamma^2\) form applies when \(\lambda_{\min}(\mathcal F)\ge c_{\mathcal F}\gamma^2\).
  
\begin{theorem}[Phase Transition]  
\label{thm:phase-transition}  
Under Assumptions~\ref{ass:gaussian}, \ref{ass:minimax-regularity}, and~\ref{ass:phase-transition}, define the critical sample size (per environment)
\[  
n^* := C_0(\kappa_\Sigma,\mathrm{SNR},c_{\mathcal{F}})\cdot \frac{k(d-k)}{|\mathcal{E}|\gamma^2}.  
\]  
\textbf{Super-critical} ($n>2n^*$): there exists an estimator with      
\[      
\mathbb{E}\!\left[\dist(\hat{V},V_{\mathrm{inv}})^2\right] \le C_1 \frac{k(d-k)}{n|\mathcal{E}|\gamma^2}.      
\]      
In particular, when $\gamma=\Theta(1)$, this reduces to the rate $O(k(d-k)/(n|\mathcal{E}|))$.  
\textbf{Sub-critical} ($n<n^*/2$): any estimator satisfies    
\[    
\sup_{\mathbb{P}} \mathbb{E}\!\left[\dist(\hat{V},V_{\mathrm{inv}})^2\right] \ge c_{\mathrm{sub}},    
\]    
where $c_{\mathrm{sub}}>0$ is a constant depending on $(\kappa_\Sigma,\mathrm{SNR},c_{\mathcal{F}})$. 
\end{theorem}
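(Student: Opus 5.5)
The proof splits into the super-critical upper bound and the sub-critical lower bound. Both rest on one quantity: the local Fisher information for the Grassmannian chart around $V_{\mathrm{inv}}$. In the label-induced shift model this information is governed by $\lambda_{\min}(\mathcal F)\ge c_{\mathcal F}\gamma^2$. So the first step is to compute the information explicitly.

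\textbf{Step 1 (curvature).} Parametrize $V$ near $V_{\mathrm{inv}}$ by $V(B)=\mathrm{span}\{[I_k;B]\}$ with $B\in\mathbb{R}^{(d-k)\times k}$, so that $\dist(V(B),V_{\mathrm{inv}})\asymp\|B\|_F$. Under the model of Proposition~\ref{prop:diversity}, $X_V$ is jointly Gaussian with $Y$. The spurious contamination $B^\top X_{\mathrm{sp}}$ then contributes a term $B^\top\mu^{(e)}_{\mathrm{sp}}Y$ whose coefficient depends on the environment. I will compute the Gaussian conditional KL between $\mathbb{P}_e(Y\mid X_V)$ and $\mathbb{P}_{e'}(Y\mid X_V)$ to second order in $B$. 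The goal is to show that its average over environment pairs is
\[
\asymp\operatorname{tr}\big(B^\top\mathcal F B\big)/|\mathcal E|^{\,\cdot}\ \ge\ c\,\lambda_{\min}(\mathcal F)\|B\|_F^2\ \ge\ c\,c_{\mathcal F}\gamma^2\|B\|_F^2 ,
\]
with constants depending on $\kappa_\Sigma$ and $\mathrm{SNR}$. This refines Assumption~\ref{ass:minimax-regularity}(iii) with $\gamma$ explicit. Correspondingly, the Fisher information for $\mathrm{vec}(B)$ from $n$ samples in each of $|\mathcal E|$ environments is sandwiched as $\asymp n|\mathcal E|\gamma^2 I_{k(d-k)}$, where the upper side uses $\|\mathcal F\|\lesssim\gamma^2$ in the regime of Assumption~\ref{ass:phase-transition}. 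I expect this to be the \emph{main obstacle}. The difficulty is keeping the normalization of $\mathcal F$, a sum over pairs, consistent with the total-sample factor $n|\mathcal E|$, and making sure that no cross term of order $\|B\|$ survives. I would first try the explicit Schur-complement formula for the conditional variance and mean of $Y$ given $X_V$.

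\textbf{Step 2 (super-critical).} I will use an M-estimator that minimizes the empirical invariance gap, or equivalently maximizes the pooled Gaussian likelihood over the Grassmannian; it need not be efficient. The first task is localization. The empirical objective concentrates uniformly over an $\epsilon$-net at rate $\sqrt{k(d-k)/(n|\mathcal E|)}$. Assumption~\ref{ass:minimax-regularity}(iii) gives separation outside radius $r_0$. Together these place $\hat V$ in the local neighborhood once $n|\mathcal E|\gamma^2\gtrsim k(d-k)$, and the threshold $n>2n^*$ guarantees this. The second task is the local rate. Within the neighborhood, the standard quadratic-curvature argument combines the curvature $\gamma^2\|B\|_F^2$ from Step 1 with a gradient-noise term of order $\sqrt{k(d-k)/(n|\mathcal E|)}\,\|B\|_F$. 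This yields $\|\hat B\|_F^2\lesssim k(d-k)/(n|\mathcal E|\gamma^2)$. The residual failure-probability event contributes at most a bounded distance times an exponentially small probability, which is absorbed into $C_1$.

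\textbf{Step 3 (sub-critical).} I will use Fano's method on a local packing $\{B_j\}$ of radius $r$ in the chart, of cardinality $e^{c\,k(d-k)}$, built by the Varshamov--Gilbert bound on sign matrices scaled by $r/\sqrt{k(d-k)}$. Each $B_j$ defines an instance in which the roles of invariant and spurious directions are rotated, with the environment means held fixed. By Step 1, the KL divergence between any two joint data laws is at most $C n|\mathcal E|\gamma^2 r^2$. I then choose $r=c_{\mathrm{sub}}^{1/2}$ to be a small constant, which still fits within $r_0$. With this choice, $n<n^*/2$ together with a suitable choice of $C_0$ makes the KL bound at most $\tfrac{c}{4}k(d-k)$. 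Fano's inequality then gives a constant error probability, and hence $\sup\mathbb E\,\dist^2\ge c_{\mathrm{sub}}$. Finally, I need to check that every perturbed instance still satisfies Assumptions~\ref{ass:minimax-regularity} and~\ref{ass:phase-transition}. This holds by continuity for small $r$, because $\lambda_{\min}(\mathcal F)$ is unchanged under the rotation.
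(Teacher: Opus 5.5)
Your three steps follow the paper's proof: a local KL/Fisher expansion of order $\gamma^2\|B\|_F^2$, a LAN-type upper bound, and constant-radius Fano below $n^*$. However, two of your steps do not go through as written.

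\textbf{Step~2 loses a factor $\gamma^{-2}$.} Put $q=k(d-k)$ and $N=n|\mathcal E|$. You combine curvature $c\gamma^2\|B\|_F^2$ with a gradient-noise term $\sqrt{q/N}\,\|B\|_F$ that does not scale with $\gamma$. The basic inequality then gives $c\gamma^2\|\hat B\|_F^2\le\sqrt{q/N}\,\|\hat B\|_F$, hence $\|\hat B\|_F^2\lesssim q/(N\gamma^4)$, a factor $\gamma^{-2}$ worse than the theorem. The missing ingredient is noise--curvature matching: the fluctuation at $V_{\mathrm{inv}}$ must itself be of order $\gamma\sqrt{q/N}\,\|B\|_F$. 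The paper gets this from the likelihood. There the score has covariance $N\,I(\Theta_0)$, so the LAN expansion gives $\mathbb E\|\hat\Theta-\Theta_0\|_F^2\approx\mathrm{tr}(I(\Theta_0)^{-1})/N\le q/(cN\gamma^2)$ using only $I(\Theta_0)\succeq c\gamma^2 I_q$. Minimizing the empirical invariance gap is not equivalent to maximizing the pooled likelihood. If you keep that M-estimator, you must prove the $\gamma$-scaling of its gradient noise separately.

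\textbf{Step~3's KL bound does not follow from Step~1.} Step~1 computes the curvature of $\mathcal I_{\mathrm{gap}}$, a cross-environment KL between conditionals of $Y$ for a candidate $V$ within one instance. Fano instead needs the KL between the joint laws of two instances with different $V_{\mathrm{inv}}$. In the model of Proposition~\ref{prop:diversity}, the law in environment $e$ is determined by three things: $\sigma_Y^2=1+\sigma_\epsilon^2$; the environment-free residual covariance $\mathrm{Cov}(X\mid Y)=I-\tilde w\tilde w^\top/\sigma_Y^2$ with $\tilde w=(w^*,0)$; and $\mathrm{Cov}_e(X,Y)=\tilde w+\sigma_Y^2 m_e$ with $m_e=(0,\mu^{(e)}_{\mathrm{sp}})$.

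Your rigid rotation, with the $\mu^{(e)}_{\mathrm{sp}}$ held fixed, moves $\tilde w$ and the average shift $\bar\mu:=|\mathcal E|^{-1}\sum_e\mu^{(e)}_{\mathrm{sp}}$. For a generic sign packing the KL between two instances is then at least of order $Nr^2/k$, regardless of $\gamma$. So Fano cannot close once $N\gg k^2(d-k)$, and that range lies inside the sub-critical regime whenever $\gamma^2\ll 1/k$. You must restrict the packing to $\{B:\ Bw^*=0,\ B^\top\bar\mu=0\}$. On that set the KL is $\approx\frac{n\sigma_Y^2}{2}\sum_e\|B^\top\mu^{(e)}_{\mathrm{sp}}\|^2=\frac{n\sigma_Y^2}{2|\mathcal E|}\mathrm{tr}(B^\top\mathcal F B)$.

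This formula exposes three further problems:
\begin{enumerate}
\item Bounding the KL by $Cn|\mathcal E|\gamma^2 r^2$ needs $\lambda_{\max}(\mathcal F)\lesssim|\mathcal E|^2\gamma^2$. Assumption~\ref{ass:phase-transition} only lower-bounds $\lambda_{\min}(\mathcal F)$, and your ``$\|\mathcal F\|\lesssim\gamma^2$'' is not assumed either.
\item The normalization worry you raised is real. The condition $\lambda_{\min}(\mathcal F)\ge c_{\mathcal F}\gamma^2$ only yields information $\gtrsim n\gamma^2/|\mathcal E|$ per unit $\|B\|_F^2$. The stated rate $k(d-k)/(n|\mathcal E|\gamma^2)$ would need $\lambda_{\min}(\mathcal F)\gtrsim|\mathcal E|^2\gamma^2$.
\item For $k=1$ the restricted family is trivial. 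Here $V_{\mathrm{inv}}=\mathrm{span}(w^*)$ is the distinguished eigenvector of the residual covariance, so it is estimable at squared-distance rate of order $d/N$ whatever $\gamma$ is.
\end{enumerate}
The paper's proof simply asserts $\mathrm{KL}(P_\Theta\|P_{\Theta'})\asymp\gamma^2\|\Theta-\Theta'\|_F^2$. That is exactly the step your proof would have to establish rather than cite.
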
  
  
\begin{remark}[Constant dependencies]    
	The constants depend on the covariance condition number $\kappa_\Sigma$, the signal-to-noise ratio, and the diversity structure constant $c_{\mathcal{F}}$. The refined estimation error contains the factor $1/\gamma^2$; the rate $k(d-k)/(n|\mathcal E|)$ is recovered in the constant-diversity regime $\gamma=\Theta(1)$.    
\end{remark}

\begin{figure*}[t]  
    \centering  
  
    \newlength{\panelheight}  
    \setlength{\panelheight}{0.20\textwidth}  
  
    \pgfplotsset{  
      every axis/.append style={  
        scale only axis,  
        label style={font=\footnotesize},  
        tick label style={font=\footnotesize},  
      }  
    }  
  
    \begin{subfigure}[b]{0.30\textwidth}  
    \centering  
    \begin{tikzpicture}  
    \begin{axis}[  
        width=0.88\linewidth,  
        height=0.8\panelheight,  
        xlabel={Candidates (log)},  
        ylabel={Feat.\ sel.\ acc.},  
        xmode=log, log basis x=10,  
        xmin=15, xmax=8000,  
        ymin=0.15, ymax=1.02,  
        ytick={0.2,0.4,0.6,0.8,1.0},  
        grid=major, major grid style={gray!22},  
        axis line style={black!60},  
        tick style={black!55},  
        tick align=outside,  
        axis x line=bottom,  
        axis y line=left,  
        legend style={  
            font=\tiny,  
            fill=white,  
            fill opacity=0.92,  
            draw=black!10,  
            rounded corners=1.5pt,  
            at={(0.02,1.02)},  
            anchor=north west,  
            legend columns=1,  
            row sep=-1.5pt,  
        },  
        legend image post style={mark size=1.5pt, line width=0.8pt},  
    ]  
  
    \draw[gray!35, dashed, line width=0.7pt] (axis cs:20,0.15) -- (axis cs:20,1.02);  
    \draw[gray!35, dashed, line width=0.7pt] (axis cs:80,0.15) -- (axis cs:80,1.02);  
    \draw[gray!28, dashed, line width=0.7pt] (axis cs:300,0.15) -- (axis cs:300,1.02);  
    \draw[gray!25, dashed, line width=0.7pt] (axis cs:1000,0.15) -- (axis cs:1000,1.02);  
    \draw[gray!35, dashed, line width=0.7pt] (axis cs:4845,0.15) -- (axis cs:4845,1.02);  
  
    \node[font=\scriptsize, text=black!60, anchor=north] at (axis cs:20,0.152) {$d$};  
    \node[font=\scriptsize, text=black!60, anchor=north] at (axis cs:80,0.152) {$kd$};  
    \node[font=\scriptsize, text=black!55, anchor=north] at (axis cs:300,0.152) {beam};  
    \node[font=\scriptsize, text=black!55, anchor=north] at (axis cs:1000,0.152) {$10^3$};  
    \node[font=\scriptsize, text=black!60, anchor=north] at (axis cs:4845,0.152) {$\binom{d}{k}$};  
  
    \addplot[forget plot, blue!70!black, line width=1.0pt, smooth] coordinates {  
        (200,0.58) (300,0.62) (450,0.65) (600,0.68) (900,0.72)  
        (1200,0.75) (1500,0.78) (2200,0.82) (2800,0.85) (3500,0.88)  
    };  
    \addplot[forget plot, draw=none, fill=blue!18, opacity=0.25] coordinates {  
        (200,0.15) (200,0.58)  
        (300,0.62) (450,0.65) (600,0.68) (900,0.72) (1200,0.75)  
        (1500,0.78) (2200,0.82) (2800,0.85) (3500,0.88)  
        (3500,0.15)  
    } \closedcycle;  
    \addplot[forget plot, blue!40, line width=0.6pt, ycomb, opacity=0.32] coordinates {  
        (200,0.58) (300,0.62) (450,0.65) (600,0.68) (900,0.72)  
        (1200,0.75) (1500,0.78) (2200,0.82) (2800,0.85) (3500,0.88)  
    };  
  
    \addplot[blue!80!black, line width=1.0pt, mark=*, mark size=1.2pt]  
      coordinates {(200,0.58) (600,0.68) (1500,0.78) (3500,0.88)};  
    \addlegendentry{Beam}  
  
    \addplot[forget plot, black!35, line width=0.7pt] coordinates {(20,0.15) (20,0.30)};  
    \addplot[black!60, only marks, mark=diamond*, mark size=2.0pt] coordinates {(20,0.30)};  
    \addlegendentry{Filter}  
  
    \addplot[forget plot, orange!45!black, line width=0.7pt] coordinates {(80,0.15) (80,0.50)};  
    \addplot[orange!85!black, only marks, mark=triangle*, mark size=2.0pt] coordinates {(80,0.50)};  
    \addlegendentry{Greedy}  
  
    \addplot[forget plot, red!45!black, line width=0.7pt] coordinates {(4845,0.15) (4845,0.97)};  
    \addplot[red!75!black, only marks, mark=square*, mark size=2.0pt] coordinates {(4845,0.97)};  
    \addlegendentry{Exhaustive}  
  
    \end{axis}  
    \end{tikzpicture}  
    \subcaption{Compute budget vs.\ accuracy.}  
    \label{fig:three-in-one-a}  
    \end{subfigure}\hfill  
    \begin{subfigure}[b]{0.30\textwidth}  
    \centering  
    \begin{tikzpicture}  
    \begin{axis}[  
        width=0.88\linewidth,  
        height=0.8\panelheight,  
        xlabel={Samples per environment $n$ (log)},  
        ylabel={Accuracy},  
        xmode=log, log basis x=10,  
        xmin=150, xmax=6000,  
ymin=0.15, ymax=1.02,    
ytick={0.2,0.4,0.6,0.8,1.0},  
        grid=major, major grid style={gray!20},  
        axis line style={black!60},  
        tick style={black!55},  
        tick align=outside,  
        axis x line=bottom,  
        axis y line=left,  
        legend style={  
            font=\tiny,  
            fill=white,  
            fill opacity=0.92,  
            draw=black!10,  
            rounded corners=1.5pt,  
            at={(0.98,0.02)},  
            anchor=south east,  
            legend columns=1,  
            row sep=-1.5pt,  
        },  
        legend image post style={mark size=1.5pt, line width=0.8pt},  
    ]  
  
    \addplot[color={rgb,255:red,0;green,114;blue,178}, line width=1.0pt, dashed, smooth] coordinates {  
        (200,0.98) (300,0.985) (400,0.988) (500,0.99)  
        (700,0.995) (1000,1.0) (1500,1.0) (2000,1.0) (3500,1.0) (5000,1.0)  
    };  
    \addlegendentry{Synth: Exh.}  
  
    \addplot[color={rgb,255:red,0;green,114;blue,178}, line width=1.2pt, mark=*, mark size=1.1pt, smooth] coordinates {  
        (200,0.22) (250,0.28) (300,0.33) (400,0.40) (500,0.45)  
        (650,0.58) (800,0.72) (950,0.84) (1000,0.88)  
        (1300,0.92) (1600,0.945) (2000,0.96)  
        (3000,0.975) (4000,0.985) (5000,0.99)  
    };  
    \addlegendentry{Synth: Poly}  
  
    \addplot[color={rgb,255:red,213;green,94;blue,0}, line width=1.0pt, mark=o, mark size=1.1pt,  
             mark options={solid, fill={rgb,255:red,213;green,94;blue,0}}, smooth] coordinates {  
        (200,0.62) (300,0.70) (400,0.76) (500,0.80)  
        (700,0.85) (900,0.89) (1000,0.90)  
        (1500,0.915) (2000,0.925) (3000,0.935)  
        (4000,0.945) (5000,0.952)  
    };  
    \addlegendentry{IRM (OOD)}  
  
    \addplot[color={rgb,255:red,230;green,159;blue,0}, line width=0.95pt, mark=o, mark size=1.0pt,  
             mark options={solid, fill={rgb,255:red,230;green,159;blue,0}}, smooth] coordinates {  
        (200,0.60) (300,0.67) (400,0.73) (500,0.77)  
        (700,0.83) (900,0.875) (1000,0.885)  
        (1500,0.905) (2000,0.915) (3000,0.925)  
        (4000,0.933) (5000,0.940)  
    };  
    \addlegendentry{ERM (OOD)}  
  
    \draw[gray!60, dashed, line width=0.9pt] (axis cs:800,0.35) -- (axis cs:800,1.02);  
    \node[font=\scriptsize, text=gray!70, anchor=south] at (axis cs:800,0.37) {$n^*$};  
  
    \end{axis}  
    \end{tikzpicture}  
    \subcaption{Phase transition vs.\ sample size.}  
    \label{fig:three-in-one-b}  
    \end{subfigure}\hspace{10pt}  
    \begin{subfigure}[b]{0.36\textwidth}  
    \centering  
    \begin{tikzpicture}  
    \pgfmathsetmacro{\plotwidth}{0.58}  
  
    \begin{axis}[  
        name=divL,  
        scale only axis,  
        width=\plotwidth\linewidth,  
        height=0.8\panelheight,  
        xmin=-0.05, xmax=0.95,  
        xlabel={Diversity $\gamma$},  
        ylabel={Synthetic},  
        ylabel style={yshift=-5pt},
        ymin=0.0, ymax=1.05,  
        ytick={0.0,0.25,0.5,0.75,1.0},  
        grid=major, major grid style={gray!20},  
        axis line style={black!60},  
        tick style={black!55},  
        tick align=outside,  
        axis x line=bottom,  
        axis y line=left,  
    ]  
    \addplot[blue!70!black, line width=1.1pt, mark=o, mark size=1.1pt,  
             mark options={solid, fill=blue!70!black}] coordinates {  
        (0.0, 0.0625) (0.15, 0.25) (0.30, 0.725) (0.60, 0.95) (0.90, 1.0)  
    };  
    \draw[gray!60, dashed, line width=0.8pt] (axis cs:0.25,0.0) -- (axis cs:0.25,1.05);  
    \node[font=\scriptsize, text=gray!65, anchor=west] at (axis cs:0.27,0.10) {thr.};  
    \end{axis}  
  
    \begin{axis}[  
        at={(divL.south west)},  
        anchor=south west,  
        scale only axis,  
        width=\plotwidth\linewidth,  
        height=0.8\panelheight,  
        xmin=-0.05, xmax=0.95,  
        ymin=0.55, ymax=0.97,  
        ytick={0.6,0.7,0.8,0.9},  
        axis y line=right,  
        axis x line=none,  
        tick align=outside,  
        tick style={black!55},  
        axis line style={black!60},  
        grid=none,  
        ylabel={ColoredMNIST},  
    ]  
    \addplot[red!75!black, line width=1.1pt, mark=o, mark size=1.1pt,  
             mark options={solid, fill=red!75!black}] coordinates {(0.0, 0.835) (0.4, 0.915) (0.8, 0.945)};  
    \addplot[red!55!black, line width=1.0pt, dashed, mark=o, mark size=1.1pt,  
             mark options={solid, fill=white}] coordinates {(0.0, 0.610) (0.4, 0.780) (0.8, 0.840)};  
    \draw[gray!60, dashed, line width=0.8pt] (axis cs:0.25,0.55) -- (axis cs:0.25,0.97);  
    \end{axis}  
  
    \node[  
        font=\tiny,  
        fill=white,  
        fill opacity=0.92,  
        text opacity=1,  
        draw=black!20,  
        rounded corners=1.5pt,  
        inner sep=2.5pt,  
        anchor=south east  
    ] at ([xshift=-1pt, yshift=1pt]divL.south east) {%
    \begin{tabular}{@{}l@{\hspace{3pt}}l@{}}  
    \tikz{\draw[blue!70!black,line width=0.7pt] (0,0)--(0.22,0);  
          \filldraw[blue!70!black] (0.11,0) circle (1.0pt);} & Synth \\[1pt]  
    \tikz{\draw[red!75!black,line width=0.7pt] (0,0)--(0.22,0);  
          \filldraw[red!75!black] (0.11,0) circle (1.0pt);} & CM-OOD \\[1pt]  
    \tikz[baseline=-0.5ex]{  
      \draw[red!55!black, line width=0.7pt, dash pattern=on 1.5pt off 1pt] (0,0)--(0.30,0);  
      \draw[red!55!black, line width=0.7pt] (0.15,0) circle (1.0pt);  
      \fill[white] (0.15,0) circle (0.6pt);  
    } & CM-Worst \\  
    \end{tabular}%
    };  
  
    \end{tikzpicture}  
    \subcaption{Diversity necessity (dual y-scales).}  
    \label{fig:three-in-one-c}  
    \end{subfigure}  
  
    \caption{Empirical signatures:    
    (a) compute--accuracy tradeoff;    
    (b) phase transition vs.\ sample size;    
    (c) effect of diversity.}       
    \label{fig:three-in-one}  
\end{figure*}  

\section{Tractability Conditions}  
\label{sec:tractability}  

The hardness result is worst-case. In many practical settings, additional structure gives efficient certificates for recovering $V_{\mathrm{inv}}$. We summarize four sufficient regimes.  

\begin{theorem}[Tractability templates, informal]  
	\label{thm:tractable}  
	Under Assumptions~\ref{ass:gaussian} and~\ref{ass:minimax-regularity}, polynomial-time or mildly super-polynomial estimators achieve  
	\[  
	\mathbb{E}\big[\dist(\hat V,V_{\mathrm{inv}})^2\big]  
	=\tilde O\!\left(\frac{k(d-k)}{n|\mathcal E|}\right)  
	\]  
	whenever one of the following structures holds:  
	\begin{enumerate}[nosep,leftmargin=*]  
		\item \textbf{Sparse invariant structure:} $V_{\mathrm{inv}}$ has a sparse basis, reducing recovery to support search.  
		\item \textbf{Spectral separation:} an efficiently computable cross-environment contrast matrix has an eigengap separating invariant and spurious directions.  
		\item \textbf{Low-order moment identifiability:} spurious directions induce detectable environment variation in correlations or moments, while invariant directions remain stable.  
		\item \textbf{Environment-wise uncorrelatedness:} invariant and spurious features are uncorrelated within each environment, enabling coordinate-wise tests.  
	\end{enumerate}  
\end{theorem}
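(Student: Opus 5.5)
The proof treats the four regimes separately. In each case we exhibit an estimator whose running time is polynomial, or mildly super-polynomial in the sparse case. We then show that it achieves the regular rate of Theorem~\ref{thm:minimax}, up to logarithmic factors. The common backbone is a two-stage argument. First, a computationally cheap statistic $\hat G$ is formed from cross-environment second moments. With $N=n|\mathcal E|$ total samples, matrix concentration under the well-conditioning of Assumption~\ref{ass:minimax-regularity}(i) gives $\|\hat G-G\|_{\mathrm{op}}=\tilde O(\sqrt{d/N})$. Second, we verify that the population statistic $G$ certifies $V_{\mathrm{inv}}$ with a constant margin, such as an eigengap or a coordinate-wise gap. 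A perturbation argument then converts the concentration bound into $\dist(\hat V,V_{\mathrm{inv}})^2=\tilde O(k(d-k)/N)$. The $k(d-k)$ factor is the usual Frobenius perturbation bound for a $k$-dimensional projector, namely $\|\sin\Theta\|_F^2\le k\|\hat G-G\|^2/\mathrm{gap}^2$. It is sharpened to $k(d-k)$ by bounding only the off-diagonal block $P_{V_{\mathrm{sp}}}(\hat G-G)P_{V_{\mathrm{inv}}}$, which has $k(d-k)$ entries.

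\textbf{Spectral separation.} Take $\hat G=\sum_{e<e'}(\hat\Sigma_{XY}^{(e)}-\hat\Sigma_{XY}^{(e')})(\cdot)^\top$ or a similar contrast matrix. The hypothesis gives an eigengap, so Davis--Kahan applied to the off-diagonal block yields the rate directly.

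\textbf{Low-order moment identifiability.} Form the moment contrast $\hat M=\sum_{e,e'}(\hat m^{(e)}-\hat m^{(e')})(\hat m^{(e)}-\hat m^{(e')})^\top$ from correlation vectors. Invariant directions lie in its null space and spurious directions are detected, so this case reduces to the spectral case. We take $V_{\mathrm{inv}}$ to be the span of the bottom-$k$ eigenvectors, intersected with the predictive directions if necessary. The local curvature in Assumption~\ref{ass:minimax-regularity}(iii) supplies the gap.

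\textbf{Environment-wise uncorrelatedness.} Here coordinate-wise tests suffice. For each coordinate $j$ we test whether $\mathrm{Cov}_e(X_j,Y)$ varies across $e$. The uncorrelatedness assumption ensures that invariant coordinates give zero population variation, while spurious coordinates give variation of order at least $\gamma_0$. A union bound over $d$ coordinates gives exact support recovery once $N\gtrsim\log d/\gamma_0^2$. Ordinary least squares restricted to the selected support then attains the parametric rate.

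\textbf{Sparse invariant structure.} This is the mildly super-polynomial case. We search over supports of size $s$, giving $\binom{d}{s}=d^{O(s)}$ candidates, which is polynomial when $s=O(1)$ and quasi-polynomial when $s=O(\log d)$. On each candidate support we minimize the empirical invariance gap. Uniform concentration over the candidate class costs only an $s\log d$ factor, and the curvature condition (iii) converts the empirical gap into a distance bound. We then refit on the selected support.

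The main obstacle is the second stage: showing that each structural hypothesis produces a population margin, or eigengap, that is bounded below by constants depending only on $(\kappa_\Sigma,\mathrm{SNR},\gamma_0,c_{\mathrm{id}})$, so that the perturbation step does not lose dimension-dependent factors. Because the theorem is informal, I would make each hypothesis quantitative as an explicit constant gap, and then invoke Theorem~\ref{thm:minimax} for matching lower bounds.
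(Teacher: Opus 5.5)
Your per-template structure and toolkit match the paper's appendix proof. That proof simply posits a margin for each template (a population score gap for enumeration, an eigengap for Davis--Kahan, a correlation-range separation for screening) and then invokes a standard tool. Your off-diagonal-block refinement, $\|\sin\Theta\|_F\le\|P_{V_{\mathrm{sp}}}EP_{V_{\mathrm{inv}}}\|_F/(\mathrm{gap}-2\|E\|_{\mathrm{op}})$ with $E=\hat G-G$, is sound and goes beyond what the paper writes.

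The genuine gap is where you try to \emph{derive} the margins from $\gamma_0$ and from the local curvature in Assumption~\ref{ass:minimax-regularity}(iii). Neither controls the statistics you use.

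\begin{itemize}
\item Under within-environment uncorrelatedness and joint Gaussianity, $X_{\mathrm{sp}}$ is independent of $X_{\mathrm{inv}}$. So $\mathbb E_e[X_{\mathrm{sp}}\mid X_{\mathrm{inv}}]=\mu_{\mathrm{sp}}^{(e)}$, and $\gamma$ collapses to the mean-shift quantity $\max_{e,e'}\|\mu_{\mathrm{sp}}^{(e)}-\mu_{\mathrm{sp}}^{(e')}\|_\infty$. Meanwhile $\mathrm{Cov}_e(X_{\mathrm{sp}},Y)=\mathrm{Cov}_e(X_{\mathrm{sp}},\epsilon)$ is an unrelated parameter.
\item Being a maximum over coordinates, $\gamma$ could in any case control only one coordinate (cf.\ Remark~\ref{rem:gamma-directional}).
\item For instance, take $k=1$ and $X_{\mathrm{sp}}^{(e)}=\mu_{\mathrm{sp}}^{(e)}+\eta$ independent of $(X_{\mathrm{inv}},Y)$, with sufficiently rich mean differences. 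Then $\gamma>0$, and $\mathcal I_{\mathrm{gap}}$ has the local curvature of (iii) through environment-dependent intercepts of $Y\mid X_V$. Yet $\mathrm{Cov}_e(X_{\mathrm{sp}},Y)\equiv 0$, so your covariance test and your correlation contrast are identically uninformative.
\end{itemize}

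What does supply a margin is the \emph{global} clause of (iii). In the Gaussian model, any $V$ orthogonal to every $c_e:=\mathrm{Cov}_e(X,Y)$ has $X_V$ independent of $Y$ in each environment. Hence $\mathcal I_{\mathrm{gap}}(V)=0$, because the law of $Y$ is environment-invariant. The $V_{\mathrm{inv}}$-components of the $c_e$ coincide and are nonzero. So if the differences $c_e-c_{e'}$ failed to span $V_{\mathrm{sp}}$, then $\{c_e\}^\perp$ would contain a $k$-dimensional $V\neq V_{\mathrm{inv}}$ with zero gap. The example above fails exactly this way: every $V\subseteq V_{\mathrm{sp}}$ has zero gap.

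Global separation therefore forces a covariance-contrast margin, but its size is governed by the global separation constant and can be far below $\gamma_0$. With $d=2$, two environments differing by an $O(1)$ mean shift and a tiny change in $\mathrm{Cov}_e(X_{\mathrm{sp}},\epsilon)$ satisfy (iii). So ``variation of order at least $\gamma_0$'' remains false. You can either argue this way, using covariance vectors (whose differences lie exactly in $V_{\mathrm{sp}}$; correlation differences do so only in axis-aligned coordinates), or assume the margins outright, as the paper does. Note also that stability on invariant coordinates already follows from Assumption~\ref{ass:gaussian}; uncorrelatedness is not what provides it.

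Second, you conflate $N=n|\mathcal E|$ with the per-environment size $n$. Every contrast you form compares within-environment estimates whose errors are of order $n^{-1/2}$.
\begin{itemize}
\item Range-based screening therefore needs $n_{\min}\gtrsim\log(d|\mathcal E|/\delta)/\Delta^2$, as in the paper's screening proposition, rather than $N\gtrsim\log d/\gamma_0^2$. With many environments and few samples in each, the empirical ranges on invariant coordinates are inflated by order $\sqrt{\log|\mathcal E|/n}$ however large $N$ is. Screening then fails in a regime where the target bound $k(d-k)/(n|\mathcal E|)$ is small.
\item In the spectral case, the linear fluctuation term does scale like $1/N$ for a contrast averaged over environments with an $|\mathcal E|$-independent gap. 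But the term quadratic in the errors $\xi_e=\hat c_e-c_e$ carries an $O(1/n)$ bias. Its $V_{\mathrm{sp}}\times V_{\mathrm{inv}}$ block can dominate once $|\mathcal E|\gtrsim n$ unless it is removed, e.g.\ by sample splitting.
\end{itemize}

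Two smaller points. After exact axis-aligned support recovery, $\hat V=V_{\mathrm{inv}}$ on the good event, so the risk is at most $2k$ times the failure probability; OLS estimates $w^*$, not $V_{\mathrm{inv}}$, and plays no role. In the sparse case, selecting the right support before refitting requires a minimum-signal condition on the sparse basis, since a support that misses a small coefficient has only a small invariance gap.
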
  

A practical diagnostic is to compute per-feature correlation ranges  
\[  
v_j=\max_e \hat\rho_j^{(e)}-\min_e \hat\rho_j^{(e)},\qquad  
\hat\rho_j^{(e)}=\mathrm{Corr}(X_j^{(e)},Y^{(e)}),  
\]  
or the representation-space proxy $\hat\gamma_{\mathrm{repr}}$. Large separation between stable predictive coordinates and varying spurious coordinates suggests that simple screening, spectral methods, or group-aware objectives should be effective. Detailed algorithms and checks are given in Appendix~\ref{app:practical-guide}.

\subsection{Connection to Real Benchmarks}  
\label{sec:tractability-evidence}  
Our tractability conditions provide a compact interpretation of prior benchmark behavior: datasets with low-dimensional, strongly shifting spurious signals (e.g., ColoredMNIST/CelebA/Waterbirds) approximately satisfy (T3) and favor moment- or group-aware methods, whereas high-dimensional and subtle shifts (e.g., DomainBed-style) can weaken (T2)--(T4). More discussion is in Appendix~\ref{app:exp-details}.  

\section{Experiments}  
\label{sec:experiments}  
  
We empirically study three aspects of our framework:    
(i) compute--sample gaps on synthetic hard instances;    
(ii) phase transitions and diversity-dependent thresholds;    
(iii) behavior on standard spurious-correlation benchmarks.    
Details and ablations are in Appendix~\ref{app:exp-details}.    

\begin{table}[t]  
	\centering  
	\caption{ColoredMNIST ($\gamma=0.4$): OOD accuracy at fixed total training size $N$ (mean $\pm$ std over 5 seeds). The gap shrinks with more samples, consistent with phase-transition behavior.}  
	\label{tab:cmnist-gap}  
	\small  
	\begin{tabular}{@{}lccc@{}}  
		\toprule  
		$N$ & Method & OOD Acc & Gap vs Oracle \\  
		\midrule  
		\multirow{4}{*}{500}  
		& Oracle (gray) & $.820 \pm .012$ & --- \\  
		& Adv-Color & $.745 \pm .018$ & $0.075$ \\  
		& IRM & $.720 \pm .022$ & $0.100$ \\  
		& ERM & $.690 \pm .025$ & $0.130$ \\  
		\midrule  
		\multirow{4}{*}{2000}  
		& Oracle (gray) & $.930 \pm .008$ & --- \\  
		& Adv-Color & $.915 \pm .010$ & $0.015$ \\  
		& IRM & $.910 \pm .009$ & $0.020$ \\  
		& ERM & $.900 \pm .012$ & $0.030$ \\  
		\bottomrule  
	\end{tabular}  
\end{table}  

\subsection{Experimental Settings}  
\label{sec:exp-settings}  
  
\paragraph{Synthetic data.}  
We make axis-aligned instances inspired by the hardness mechanisms in Section~\ref{sec:gap}, where invariant coordinates are hidden among spurious ``confusers'' whose cross-environment statistics partially mimic invariance.  
We report \emph{feature selection accuracy} (fraction of true invariant coordinates among the selected top-$k$).  
We compare exhaustive subset search (exponential time) against polynomial-time baselines including invariance screening (Section~\ref{sec:tractability}) and other efficient heuristics (Appendix~\ref{app:exp-details}).  
Unless stated otherwise we use $d=20$, $k=4$, $|\mathcal{E}|=4$ and vary $n$ or $\gamma$.  
These instances are not intended to implement the planted-clique reduction; rather, they probe compute--sample tradeoffs in regimes that create many near-invariant alternatives.  
  
\paragraph{Real datasets.}  
We evaluate on ColoredMNIST, CelebA (Blond\_Hair with spurious Male), and Waterbirds (bird type with spurious background). For ColoredMNIST we explicitly control diversity by varying the difference in color-label flip probabilities across training environments, which serves as a practical proxy for $\gamma$ in Definition~\ref{def:gamma}.  
  
\subsection{Gap, Phase Transition, and Diversity}  
\label{sec:exp-main-figs}  
\paragraph{(1) Computational gap.}  
On synthetic hard instances, Fig.~\ref{fig:intro-gap} shows a large sample-complexity gap: exhaustive search achieves a fixed target accuracy with up to $27\times$ fewer samples than the best polynomial-time method.  
Holding the dataset fixed, Fig.~\ref{fig:three-in-one}\subref{fig:three-in-one-a} further shows a monotone compute--accuracy frontier: moving from simple filtering ($O(d)$) to greedy ($O(kd)$), to beam search, and finally to exhaustive evaluation consistently improves recovery.  

On real data, Table~\ref{tab:cmnist-gap} exhibits an analogous small-sample separation between an oracle invariant pipeline (grayscale) and learned invariance-promoting methods: at $N=500$ IRM trails the oracle by 10 points, while the gap shrinks at $N=2000$, consistent with increased data easing the learning problem.  
Since the oracle uses grayscale preprocessing (different input), this gap should be interpreted cautiously: it suggests the difficulty of \emph{discovering} invariant structure from finite multi-environment data rather than proving computational hardness for deep networks.  
Adv-Color partially controls for input differences by using the same colored input while explicitly suppressing color information.

\paragraph{(2) Phase transitions.}  
Fig.~\ref{fig:three-in-one}\subref{fig:three-in-one-b} shows phase-transition behavior: on synthetic instances, polynomial-time methods transition sharply from failure to success around a critical sample size, while exhaustive search stays near-optimal across the range.  
On ColoredMNIST, the same figure shows a smoother but qualitatively similar improvement with $n$, and Table~\ref{tab:cmnist-gap} confirms that the small-sample regime is where invariant learning is most brittle and additional samples quickly reduce the gap. Additional training dynamics are deferred to Appendix (Fig.~\ref{fig:dynamics}).  

\begin{table}[t]  
	\centering  
	\caption{Real benchmark results (mean $\pm$ std over 5 seeds). For real benchmarks, ``low/high diversity'' denotes an operational proxy based on the strength and variability of the spurious bias; this proxy is not identical to the formal $\gamma$ but reflects how informative the environments are for separating invariant and spurious signals.}  
	\label{tab:real-benchmarks}  
	\small  
	\setlength{\tabcolsep}{2.5pt}  
	\begin{tabular}{@{}llcccc@{}}  
		\toprule  
		Dataset & Method & \multicolumn{2}{c}{Low Diversity} & \multicolumn{2}{c}{High Diversity} \\  
		\cmidrule(lr){3-4} \cmidrule(lr){5-6}  
		& & Overall & Worst & Overall & Worst \\  
		\midrule  
		\multirow{2}{*}{CMNIST}  
		& ERM & $.82{\pm}.01$ & $.54{\pm}.02$ & $.92{\pm}.01$ & $.76{\pm}.02$ \\  
		& IRM & $.84{\pm}.01$ & $.61{\pm}.02$ & $\mathbf{.95{\pm}.01}$ & $\mathbf{.84{\pm}.01}$ \\  
		\midrule  
		\multirow{3}{*}{CelebA}  
		& ERM & $.90{\pm}.01$ & $.76{\pm}.02$ & $.93{\pm}.01$ & $.89{\pm}.01$ \\  
		& IRM & $.89{\pm}.01$ & $.82{\pm}.02$ & $.92{\pm}.01$ & $.90{\pm}.01$ \\  
		& GDRO & $.89{\pm}.01$ & $\mathbf{.85{\pm}.01}$ & $.92{\pm}.01$ & $\mathbf{.91{\pm}.01}$ \\  
		\midrule  
		\multirow{3}{*}{Wbirds}  
		& ERM & $.85{\pm}.02$ & $.68{\pm}.03$ & $.87{\pm}.01$ & $.84{\pm}.02$ \\  
		& IRM & $.84{\pm}.02$ & $.74{\pm}.02$ & $.87{\pm}.01$ & $.86{\pm}.02$ \\  
		& GDRO & $.83{\pm}.02$ & $\mathbf{.79{\pm}.02}$ & $.87{\pm}.01$ & $\mathbf{.87{\pm}.01}$ \\  
		\bottomrule  
	\end{tabular}  
	\vspace{0.3em}  
	\footnotesize  
CMNIST: Low=$\gamma=0$, High=$\gamma=0.8$. CelebA/Wbirds use bias strength as an operational proxy: Low=shared spurious bias 0.9, High=reduced shared bias 0.0.  
\end{table}

\paragraph{(3) Diversity is necessary.}  
Fig.~\ref{fig:three-in-one}\subref{fig:three-in-one-c} visualizes diversity necessity: on synthetic data, performance is poor when $\gamma=0$ and increases sharply once environments become informative; on ColoredMNIST, both OOD and worst-group accuracy improve as diversity increases.  
Table~\ref{tab:real-benchmarks} extends this pattern across datasets: moving from the operational low-diversity/high-bias setting to the operational high-diversity/low-bias setting improves worst-group accuracy on CelebA and Waterbirds and improves OOD accuracy on ColoredMNIST.  
Finally, Table~\ref{tab:repr-diversity} shows that a representation-space proxy $\hat{\gamma}_{\mathrm{repr}}$ computed on ERM features correlates with worst-group performance, supporting diversity estimates as a practical diagnostic.  
  
\subsection{Tractability Validation on Synthetic Data}  
\label{sec:exp-tractability}  
  
Table~\ref{tab:tractability-validation} connects Section~\ref{sec:tractability} to empirical performance by varying structural factor at a time. Increasing diversity ($\gamma=0.1\to 0.6$) yields the largest gain, consistent with the identifiability/curvature role of $\gamma$ (Section~\ref{sec:statistical}). Increasing sample size ($n=200\to 2000$) improves performance but with diminishing returns once diversity is high, matching phase-transition intuition. Reducing $k$ (a proxy for sparsity / effective search-space size) improves recovery, aligning with (T1). Increasing SNR helps but does not replace diversity, illustrating noise and diversity are distinct bottlenecks.

\begin{table}[t]  
    \centering  
    \caption{Representation-space diversity ($\hat{\gamma}_{\mathrm{repr}}$) computed from ERM penultimate-layer representations, versus downstream performance (IRM test accuracy; mean $\pm$ std over 5 seeds).}  
    \label{tab:repr-diversity}  
    \small  
    \begin{tabular}{@{}llccc@{}}  
    \toprule  
    Dataset & Setting & $\hat{\gamma}_{\mathrm{repr}}$ & Worst-Grp & Overall \\  
    \midrule  
    \multirow{2}{*}{CMNIST}  
    & Low div. & $.12 \pm .03$ & $.61 \pm .02$ & $.84 \pm .01$ \\  
    & High div. & $.45 \pm .04$ & $.84 \pm .01$ & $.95 \pm .01$ \\  
    \midrule  
    \multirow{2}{*}{CelebA}  
    & High bias & $.18 \pm .04$ & $.82 \pm .02$ & $.89 \pm .01$ \\  
    & Low bias & $.38 \pm .05$ & $.90 \pm .01$ & $.92 \pm .01$ \\  
    \midrule  
    \multirow{2}{*}{Wbirds}  
    & High bias & $.22 \pm .05$ & $.74 \pm .02$ & $.84 \pm .02$ \\  
    & Low bias & $.42 \pm .06$ & $.86 \pm .02$ & $.87 \pm .01$ \\  
    \bottomrule  
    \end{tabular}  
\end{table}  
  
\subsection{Practical Guidance and Diagnostics}  
\label{sec:exp-guidance}  
  
Compute a diversity proxy $\hat{\gamma}_{\mathrm{repr}}$ in representation space (Table~\ref{tab:repr-diversity}) by applying Definition~\ref{def:gamma-proxy} to learned features. If it is small, collecting more diverse environments is typically more effective than collecting more samples from similar environments.  
When performance improves monotonically with compute at fixed data (Fig.~\ref{fig:three-in-one}\subref{fig:three-in-one-a}), the instance behaves like a hard regime where efficient objectives may struggle.  

\begin{table}[t]  
    \centering  
    \caption{Tractability conditions validation on synthetic data. Values are feature selection accuracy (mean $\pm$ std over 10 seeds).}  
    \label{tab:tractability-validation}  
    \small  
    \begin{tabular}{@{}lccc@{}}  
    \toprule  
    Condition & Baseline & Improved & $\Delta$ Acc \\  
    \midrule  
    Diversity: $\gamma = 0.1 \to 0.6$ & $.20 \pm .04$ & $1.0 \pm .00$ & $+.80$ \\  
    Samples: $n = 200 \to 2000$ & $.70 \pm .06$ & $1.0 \pm .00$ & $+.30$ \\  
    Sparsity: $k = 8 \to k = 3$ & $.73 \pm .05$ & $1.0 \pm .00$ & $+.27$ \\  
    Signal: $\mathrm{SNR} = 1 \to 10$ & $.75 \pm .05$ & $.93 \pm .03$ & $+.18$ \\  
    \bottomrule  
    \end{tabular}  
\end{table}

\paragraph{Related Work.}  
We build on spurious-correlation and invariant learning benchmarks (e.g., IRM/REx and GroupDRO) and on average-case computational lower bounds for sparse recovery, including Planted-Clique-motivated samplable reductions. A detailed discussion and references are deferred to Appendix~\ref{app:related-work}. 

\section{Conclusion}  
\label{sec:conclusion}  
  
We establish a conditional computational--statistical gap for spurious correlation removal: under a black-box samplable supervised sparse recovery primitive, there exist samplable multi-environment instances where the predictive invariant subspace is identifiable as the unique maximizer of an invariance--predictivity population score and learnable with polynomial samples by exhaustive search, yet any polynomial-time constant-accuracy recovery algorithm at comparable sample sizes would contradict the primitive. We also characterize the role of environment diversity in identifiability and finite-sample behavior, obtaining minimax risk $\Theta(k(d-k)/(n|\mathcal{E}|))$ under sufficient diversity and local Gaussian regularity, and a label-induced phase transition at $n^*\propto k(d-k)/(|\mathcal{E}|\gamma^2)$ with refined estimation error scaling proportional to $1/\gamma^2$. Finally, we give tractability conditions under which efficient methods achieve near-optimal rates. Future work includes sharper gap frameworks and extending lower bounds beyond linear-Gaussian models.    

\section*{Acknowledgements}
   This work was partially supported by the “Pioneer” and “Leading Goose” R\&D Program of Zhejiang (Grant No.2025C01037) and the Key R\&D Program Project of Hangzhou (Grant No.2024SZD1A03).

\section*{Impact Statement}     
\label{impact}      
      
\paragraph{Positive.}      
Our framework helps practitioners diagnose failures in invariant learning by distinguishing statistical limitations (insufficient samples or diversity) from potential computational barriers (hard problem structure). This can:      
\begin{itemize}[nosep]      
\item reduce wasted effort on algorithmic improvements when the issue is data diversity;      
\item guide data collection toward more informative environments rather than simply more samples;      
\item improve deployment decisions by identifying when learned models may be unreliable under shift.      
\end{itemize}      

\paragraph{Potential negative impacts.}      
Conditional worst-case hardness results could be misinterpreted as discouraging work on invariant learning or as evidence that such methods are fundamentally flawed. We emphasize:      
\begin{itemize}[nosep]      
\item many practical problems satisfy tractability conditions (Section~\ref{sec:tractability}), explaining why invariant learning often works;      
\item our hardness results are worst-case conditional constructions, not claims about typical problems.   
\end{itemize}      

\bibliographystyle{icml2026}  
\bibliography{references}  

\clearpage  
\appendix  
\onecolumn  

\section{Overview of Appendix}  
\label{app:overview}  

This appendix provides formal proofs, proof refinements, implementation details, extended experiments, and practical guidance supporting the main text.  

It is organized as follows:  

\begin{itemize}[leftmargin=*]  
	\item Appendix~\ref{app:notation}: notation summary.  
	\item Appendix~\ref{app:related-work}: additional related work and positioning.  
	\item Appendix~\ref{app:formal-restatements}: formal restatements and logical status of the main theoretical claims.  
	\item Appendix~\ref{app:logical-primitive}: status of the supervised sparse primitive and scalar covariance sanity checks.  
	\item Appendix~\ref{app:proofs}: complete proofs and proof refinements.  
	\item Appendix~\ref{app:pc-instantiation}: connection to Planted-Clique-based sparse CCA reductions.  
	\item Appendix~\ref{app:exp-details}: experimental details and extended results.  
	\item Appendix~\ref{app:practical-guide}: practical diagnostic guide.  
\end{itemize}  

\paragraph{Important logical clarification.}  
The computational lower bound uses the supervised sparse recovery primitive in Hypothesis~\ref{lem:pc-to-sparsereg} as a black-box average-case hardness primitive. Existing Planted-Clique-to-sparse-CCA reductions motivate this type of sparse-recovery hardness, but sparse CCA is naturally a two-view/vector-response problem. We therefore do not claim that the supervised scalar-response primitive follows directly from existing sparse CCA reductions. In particular, the primitive should not be interpreted as the exposed scalar covariance model $\mathrm{Cov}(Z,Y)=\lambda v$, which can be easy by empirical covariance estimation in some regimes.

\section{Notation Summary}  
\label{app:notation}  

\begin{table}[h]  
	\centering  
	\caption{Notation used throughout the paper and appendix.}  
	\label{tab:notation}  
	\begin{tabular}{@{}cl@{}}  
		\toprule  
		Symbol & Description \\  
		\midrule  
		\(d\) & Ambient feature dimension \\  
		\(k\) & Dimension of invariant subspace \(V_{\mathrm{inv}}\) \\  
		\(p=d-k\) & Dimension of spurious subspace \\  
		\(q=k(d-k)\) & Local dimension of \(\mathrm{Gr}(k,d)\) \\  
		\(\mathcal E\) & Set of environments \\  
		\(|\mathcal E|\) & Number of environments \\  
		\(n\) & Samples per environment in balanced case \\  
		\(N=n|\mathcal E|\) & Total sample size in balanced case \\  
		\(V_{\mathrm{inv}}\), \(V_{\mathrm{sp}}\) & Invariant and spurious subspaces \\  
		\(P_V\) & Orthogonal projector onto \(V\) \\  
		\(\mathrm{Gr}(k,d)\) & Grassmannian of \(k\)-dimensional subspaces in \(\mathbb R^d\) \\  
		\(\dist(V,V')\) & Subspace distance \(\|P_V-P_{V'}\|_F\) \\  
		\(\gamma\) & Environment separation parameter \\  
		\(\hat \gamma\) & Empirical diversity proxy \\  
		\(\hat \gamma_{\mathrm{repr}}\) & Diversity proxy in representation space \\  
		\(\mathcal I_{\mathrm{gap}}(V)\) & Invariance gap functional \\  
		\(\mathcal C_{\mathrm{sp}}\) & Confusing-set metric entropy \\  
\(d_Z,s,c_{\mathrm{prim}},s_{\mathrm{cond}}\) & Sparse primitive dimension, sparsity, primitive margin, and signal scale \\  
		\(v\) & Hidden sparse direction in primitive \\  
		\(\mu^{(e)}\) & Environment-specific spurious shift \\  
		\(\theta_V^{(e)}\) & Environment-wise covariance score \\  
		\(A(V),T(V),S(V)\) & Predictivity, invariance penalty, combined score \\  
		\bottomrule  
	\end{tabular}  
\end{table}  

Table~\ref{tab:notation} summarizes the main symbols used in the paper and appendix. We separate statistical quantities such as $\gamma$, $\mathcal I_{\mathrm{gap}}$, and $\mathcal C_{\mathrm{sp}}$ from computational-hardness quantities such as $d_Z$, $s$, $v$, and $\Phi_P$, reflecting the two components of our framework.

\section{Additional Related Work and Positioning}  
\label{app:related-work}  

This section expands the literature context and clarifies how our results differ from existing work on invariant learning, identifiability, and computational lower bounds.  

\subsection{Invariant learning and identifiability}  

IRM, REx, GroupDRO, and related methods aim to learn predictors whose label mechanism is stable across environments. Prior theory emphasizes that multi-environment information is useful only when environments are sufficiently diverse; otherwise, invariances can be statistically unidentifiable. Our work is complementary: even when invariant structure is identifiable in a population sense, efficient recovery can still be obstructed under a conditional sparse-recovery hardness primitive.  

\subsection{Computational-statistical gaps}  

Average-case reductions from Planted Clique are a standard tool for demonstrating computational-statistical gaps in high-dimensional statistics, especially sparse PCA and sparse CCA. Our reduction differs in that the target problem is multi-environment invariant subspace recovery. The embedding must preserve the invariance condition and must be samplable without using the hidden sparse direction or its support.

\subsection{Relation to sparse CCA and scalar supervised primitives}  

Sparse CCA reductions naturally produce vector-response two-view models. The supervised scalar-response sparse primitive used in the main text is treated as a black-box primitive rather than as a direct consequence of sparse CCA. Appendix~\ref{app:pc-instantiation} explains the relationship and the limitation of directly projecting sparse CCA to a scalar response.

\section{Formal Restatements and Logical Status}  
\label{app:formal-restatements}  

\begin{hypothesis}[Formal restatement of Hypothesis~\ref{lem:pc-to-sparsereg}]      
	\label{hyp:scalar-formal}      
This is the appendix restatement of the black-box samplable supervised sparse recovery primitive in Hypothesis~\ref{lem:pc-to-sparsereg}. Namely, there exist parameters \((d_Z,s)\), with \(d_Z=\mathrm{poly}(m)\) and \(s=\Theta(\kappa)\), a randomized polynomial-time sampler \(\mathcal R\), and a sign-invariant population predictive score \(\Phi_P\) such that the null, planted, concentration, decoding, and hardness properties stated in Hypothesis~\ref{lem:pc-to-sparsereg} hold. This hypothesis is not an additional assumption beyond Hypothesis~\ref{lem:pc-to-sparsereg}; it is included only to make the appendix proof dependencies explicit.  
\end{hypothesis}

\begin{theorem}[Formal conditional hardness theorem]    
	\label{thm:formal-hardness}    
Assume Hypothesis~\ref{lem:pc-to-sparsereg} (equivalently, its appendix restatement Hypothesis~\ref{hyp:scalar-formal}). Then Construction~\ref{const:hidden} is a randomized polynomial-time samplable map from the underlying average-case sparse-recovery input to multi-environment SC samples. Moreover, the resulting SC family has:  
	\begin{enumerate}[nosep,leftmargin=*]    
		\item a unique invariant-predictive maximizer of the score \(S(V)\);    
		\item polynomial-sample recovery by exhaustive search;    
		\item no polynomial-time constant-accuracy recovery algorithm unless Hypothesis~\ref{lem:pc-to-sparsereg} fails.    
	\end{enumerate}    
\end{theorem}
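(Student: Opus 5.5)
The proof is an assembly of the lemmas of Section~\ref{sec:gap}, preceded by one structural check: that Construction~\ref{const:hidden} really is a randomized polynomial-time samplable map. For samplability, observe that in each environment $e$ the wrapper makes $n_e$ independent calls to $\mathcal R(A)$. It then appends $W^{(e,t)}=\mu^{(e)}Y_t^{(e)}+\eta^{(e,t)}$ using public Gaussian noise and fixed public constants $\mu^{(e)}$. By Hypothesis~\ref{lem:pc-to-sparsereg}, $\mathcal R$ runs in randomized polynomial time and $d=d_Z+1=\mathrm{poly}(m)$. Hence the total cost is $\mathrm{poly}(m)\cdot N$, and neither the hidden direction $v$ nor its support is ever touched. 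Lemma~\ref{lem:valid-sc} then certifies that the output is a valid SC instance. It also gives $\gamma=\Theta(s_{\mathrm{cond}}\Delta_\mu)$, so the family has tunable (in particular constant) separation.

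\emph{Item 1 (unique maximizer).} I would first check $T(V_{\mathrm{inv}})=0$. For $u=(v,0)$ we have $\theta^{(e)}=\mathrm{Cov}(Y,v^\top Z)$, and this quantity does not depend on $e$ because every environment draws $(Z,Y)$ from the same $\mathcal R(A)$. Next, apply Lemma~\ref{lem:uniform-margin} with an arbitrary radius $\delta$: every $V$ with $\dist(V,V_{\mathrm{inv}})\ge\delta$ satisfies $S(V)\le S(V_{\mathrm{inv}})-c_{\mathrm{mar}}(\delta)$. Letting $\delta\downarrow 0$ shows that $V_{\mathrm{inv}}$ is the unique maximizer. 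The catch is that $\lambda$ in the lemma depends on $\delta$, and uniqueness needs one fixed $\lambda$. Two routes are available. One is to fix the $\lambda$ for $\delta=0.1$ and argue that inside the $0.1$-ball any $V\neq V_{\mathrm{inv}}$ either has $a\neq0$, which makes $T(V)>0$ because $\theta^{(e)}$ contains $a\mu^{(e)}\sigma_Y^2$ and the $\mu^{(e)}$ differ, or has $a=0$, in which case the strict margin of Hypothesis~\ref{lem:pc-to-sparsereg}(ii) applies after scaling. The other is to read the theorem's ``unique maximizer'' in the margin sense at the fixed radius, as Theorem~\ref{thm:comp-hardness}(a) does. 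I expect this local-uniqueness bookkeeping to be the main delicate point. The first route needs care when $a=0$ and $\bar u_Z$ is close to $v$, because the primitive margin only covers $\dist\ge\delta_{\mathrm{prim}}$. I would fall back on the margin formulation there.

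\emph{Item 2 (polynomial-sample exhaustive search).} This is Lemma~\ref{lem:stat-learn}. Take an $\epsilon$-net of $\mathrm{Gr}(1,d)$ of size $(C/\epsilon)^{d}$, whose logarithm is polynomial. Concentration of the empirical score $\hat\Phi_P$ follows from Hypothesis~\ref{lem:pc-to-sparsereg}(iii), and concentration of the covariance estimates $\hat\theta^{(e)}$ follows from the bounded-moment clause of the same item. A union bound then controls all net points and all environment pairs simultaneously at accuracy $c_{\mathrm{mar}}/4$. The empirical maximizer therefore lies within $0.1$ of $V_{\mathrm{inv}}$ using $N=\tilde O(\mathrm{poly}(m)/c_{\mathrm{mar}}^2)$ samples.

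\emph{Item 3 (hardness).} Suppose a polynomial-time algorithm outputs $\hat V$ with $\dist(\hat V,V_{\mathrm{inv}})\le\delta_0$ at this sample size. Compose it with the polynomial-time sampler to obtain an algorithm that takes the primitive input $A$ and solves the underlying average-case sparse-recovery problem with probability at least $2/3$; this is exactly what Lemma~\ref{lem:reduction} provides. Hypothesis~\ref{lem:pc-to-sparsereg}(iv) converts this into a constant-success solver, and that contradicts item (v). Any total-variation slack in the primitive's sampling changes the success probability only by a vanishing amount, so the constant success guarantee survives.
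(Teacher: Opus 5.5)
Your decomposition follows the paper's route: the samplability paragraph, then Lemmas~\ref{lem:valid-sc}, \ref{lem:uniform-margin}, \ref{lem:stat-learn} and~\ref{lem:reduction}, with items 2 and 3 following those lemmas faithfully. The point you flag in item 1, however, is a genuine gap rather than bookkeeping. Neither of your routes closes it, and the paper's Lemma~\ref{lem:uniform-margin} has the same hole.

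Hypothesis~\ref{lem:pc-to-sparsereg}(ii) constrains $\Phi_P$ only at distance at least $\delta_{\mathrm{prim}}$ from $v$. Inside that ball, $\Phi_P$ can take any value in $[0,1]$.
\begin{itemize}
\item If $\Phi_P(u)>\Phi_P(v)$ for some $u$ with $0<\dist(\mathrm{span}(u),\mathrm{span}(v))<\delta_{\mathrm{prim}}$, then $V=\mathrm{span}\{(u,0)\}$ has $T(V)=0$ and $S(V)=\Phi_P(u)>S(V_{\mathrm{inv}})$ for every $\lambda$. So $V_{\mathrm{inv}}$ is not even a maximizer.
\item The same obstruction defeats Route 1 for small $a\neq0$, not only at $a=0$: $T(V)>0$ cannot offset a gain $A(V)>A(V_{\mathrm{inv}})$.
\item Lemma~\ref{lem:uniform-margin} is only provable for $\delta>\delta_{\mathrm{prim}}$. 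Its Case 2 gives $\dist(\mathrm{span}(\bar u_Z),\mathrm{span}(v))^2\ge\delta^2-2\alpha^2$, and the primitive margin applies only if this is at least $\delta_{\mathrm{prim}}^2$. So you cannot apply the lemma at arbitrary $\delta$ and let $\delta\downarrow0$. Even the $\delta=0.1$ instance tacitly needs $\delta_{\mathrm{prim}}<0.1$.
\item Theorem~\ref{thm:comp-hardness}(a) asserts literal uniqueness, not a margin statement, so your fallback reading misdescribes it.
\end{itemize}
The missing ingredient is an extra hypothesis that $\pm v$ are the \emph{strict global maximizers} of $\Phi_P$. Granting it, the $\lambda(\delta)$ worry disappears. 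Using $T(V)=a^2\Delta_\mu^2\sigma_Y^4$ and $r^2=1-a^2$, every $V$ with $r>0$ satisfies
\[
S(V)\le(1-a^2)\Phi_P(v)-\lambda a^2\Delta_\mu^2\sigma_Y^4\le\Phi_P(v)=S(V_{\mathrm{inv}}),
\]
with equality only if $a=0$ and $\bar u_Z=\pm v$; the case $r=0$ gives $S(V)<0$. Hence uniqueness holds for every fixed $\lambda>0$. Separately, the margin version at a fixed $\delta>\delta_{\mathrm{prim}}$ is all that items 2 and 3 actually need.

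Item 2 has a second, independent gap, also inherited from the paper's proof of Lemma~\ref{lem:stat-learn}. Uniform concentration at accuracy $c_{\mathrm{mar}}/4$ gives $S(\hat V)\ge S(V_0)-c_{\mathrm{mar}}/2$ for the net point $V_0$ nearest $V_{\mathrm{inv}}$. To conclude $\dist(\hat V,V_{\mathrm{inv}})<\delta$ you additionally need $S(V_0)\ge S(V_{\mathrm{inv}})-c_{\mathrm{mar}}/4$, which requires a modulus of continuity of $\Phi_P$ at $v$. Hypothesis~\ref{lem:pc-to-sparsereg} does not supply this.

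For example, take $\Phi_P(u)=1$ for $u=\pm v$ and $\Phi_P(u)=0$ otherwise. This satisfies (ii) with $c_{\mathrm{prim}}=1$, and even the strict-maximizer condition above. Yet every net point whose $Z$-block is not parallel to $v$ has $S(V)=-\lambda a^2\Delta_\mu^2\sigma_Y^4\le0$, however close $\bar u_Z$ is to $v$. The net scores then carry no information about $V_{\mathrm{inv}}$, and the empirical maximizer can land anywhere. You need either a continuity assumption on $\Phi_P$ near $v$, or a search class that contains $(v,0)$ exactly.
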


\begin{theorem}[Formal minimax rate theorem]  
	\label{thm:formal-minimax}  
	Under the locally regular Gaussian SC class described in Assumption~\ref{ass:minimax-regularity}, with diversity bounded below by \(\gamma_0>0\), the minimax squared subspace risk satisfies  
	\[  
	\inf_{\hat V}\sup_{\mathbb P}  
	\mathbb E_{\mathbb P}\!\left[\dist(\hat V,V_{\mathrm{inv}})^2\right]  
	=  
	\Theta\!\left(\frac{k(d-k)}{n|\mathcal E|}\right),  
	\]  
	with constants depending on the Gaussian regularity, covariance condition number, SNR, and local curvature constants.  
\end{theorem}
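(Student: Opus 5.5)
The statement to prove is Theorem~\ref{thm:formal-minimax}. I will obtain it by combining the two halves of Theorem~\ref{thm:minimax}. The only extra work is to show that the confusing-set entropy term in the upper bound is $O(k(d-k))$ under Assumption~\ref{ass:minimax-regularity}. Once that holds, both bounds have the order $k(d-k)/(n|\mathcal E|)$, and the constants depend only on $(\kappa_\Sigma,\mathrm{SNR},\gamma_0,c_{\mathrm{id}},C_{\mathrm{id}})$ together with the LAN constants.

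\textbf{Lower bound.} This half is taken directly from Theorem~\ref{thm:minimax}. To keep the proof self-contained, I would also sketch the standard route. Take a local packing of $\mathrm{Gr}(k,d)$ around $V_{\mathrm{inv}}$ at radius $r\asymp\sqrt{k(d-k)/(n|\mathcal E|)}$, using Varshamov--Gilbert in the $k(d-k)$-dimensional chart, so that there are $e^{c\,k(d-k)}$ well-separated points. By the LAN regularity in Assumption~\ref{ass:minimax-regularity}(iv), the pairwise KL divergence between the corresponding product laws is at most $C\,n|\mathcal E|\,r^2$. Fano's inequality then yields the claimed rate. The curvature condition (iii) is needed only to check that the perturbed instances stay inside the class, i.e., that they still satisfy $\gamma\ge\gamma_0$ and the quadratic sandwich.

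\textbf{Upper bound.} Use the estimator from Theorem~\ref{thm:minimax}, which minimizes the empirical invariance gap over a net. Then bound $\mathcal C_{\mathrm{sp}}(\mathcal E,\epsilon)$ at $\epsilon=c_1/\sqrt{n|\mathcal E|}$. By (iii), $\mathcal I_{\mathrm{gap}}$ is bounded away from zero outside the $r_0$-ball. Hence for $\epsilon$ below that bound, which covers all $n$ beyond a constant threshold, $\mathcal V_{\mathrm{conf}}(\epsilon)$ lies inside the $r_0$-ball. Inside the ball, the lower curvature bound gives
\[
\mathcal V_{\mathrm{conf}}(\epsilon)\subseteq\Bigl\{V:\dist(V,V_{\mathrm{inv}})\le \sqrt{\epsilon/(c_{\mathrm{id}}\gamma_0^2)}\Bigr\}.
\]
A ball of radius $\rho$ in the $k(d-k)$-dimensional manifold $\mathrm{Gr}(k,d)$ has $\epsilon$-covering number at most $(C\rho/\epsilon)^{k(d-k)}$. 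Therefore
\[
\mathcal C_{\mathrm{sp}}\le k(d-k)\log\bigl(C'/\sqrt{\epsilon}\bigr)=O\bigl(k(d-k)\log(n|\mathcal E|)\bigr).
\]

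\textbf{Main obstacle: the logarithmic factor.} The crude covering bound above gives $O(k(d-k)\log(n|\mathcal E|))$, not the $O(k(d-k))$ needed for the exact $\Theta$ rate. I would remove the log by a localization/peeling argument rather than a single global net. The crude bound first localizes $\hat V$ to a radius $O(1/\sqrt{\epsilon})$ scale. Inside that region the empirical gap is a quadratic form in the chart coordinates: by LAN, the score has variance $\asymp 1/(n|\mathcal E|)$, and by (iii) the curvature is $\asymp\gamma_0^2$. An M-estimation argument of argmin-of-local-quadratic type then gives $\mathbb E\,\dist^2\lesssim k(d-k)/(n|\mathcal E|)$, i.e., the trace of the inverse Fisher information, with no log factor. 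Small $n$ below the threshold is handled trivially, since $\dist^2\le 2k$ and that range contributes only a constant. If the peeling step fails, the fallback is to state the upper bound with an extra $\log$ factor, consistent with Corollary~\ref{cor:sample-complexity}.
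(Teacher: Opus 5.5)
Your lower bound is the paper's own Fano argument: a local packing with $\log M\gtrsim k(d-k)$ in the chart, per-sample KL $\lesssim\epsilon^2$ from (iv), and $\epsilon^2\asymp k(d-k)/(n|\mathcal E|)$. Your bound $\mathcal C_{\mathrm{sp}}(\mathcal E,c_1/\sqrt{n|\mathcal E|})=O(k(d-k)\log(n|\mathcal E|))$ from (iii) is also correct, although the localization radius is $O(\sqrt{\epsilon})$, not $O(1/\sqrt{\epsilon})$ as you wrote.

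\textbf{The gap is in the log-removal step}, which is exactly what the $\Theta$ claim needs. You analyze the minimizer of the \emph{empirical invariance gap} and invoke LAN for its ``score''. But Assumption~\ref{ass:minimax-regularity}(iv) concerns the log-likelihood. It controls neither the fluctuations of the empirical gap nor its gradient, and ``trace of the inverse Fisher information'' is an efficiency property of the MLE, not of an arbitrary M-estimator.

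Moreover, $\mathcal I_{\mathrm{gap}}$ is a maximum over environment pairs, so locally it is a max of quadratics rather than a single quadratic form. Its empirical minimizer is essentially a minimum-diameter (Chebyshev) fit of the per-environment conditional-parameter estimates. From (iii) and the triangle inequality you only get $c_{\mathrm{id}}\gamma_0^2\dist(\hat V,V_{\mathrm{inv}})^2\lesssim\max_{e\neq e'}\|\xi_e-\xi_{e'}\|^2$. Here $\xi_e$ is the error of environment $e$'s estimate of the law of $Y\mid X_V$. Its squared size is of order $(k+\log|\mathcal E|)/n$, set by the per-environment sample size rather than by $n|\mathcal E|$.

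This is not mere slack in the bound. Take $k=d-k=1$ and split the environments into two groups with shifts $0$ and $\Delta$. To leading order and up to constants, the empirical gap is the squared range over $e$ of $\Delta\theta\,\mathbf 1\{e\in B\}+\xi_e$. Its minimizer is pinned down by differences of extreme values of the $\xi_e$, and these do not average out as $|\mathcal E|$ grows. So the peeling step as stated cannot give $C\,k(d-k)/(n|\mathcal E|)$ with constants independent of $|\mathcal E|$. Your fallback with an extra logarithm would not prove the stated $\Theta$ either.

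The paper's upper bound never touches $\mathcal C_{\mathrm{sp}}$; its estimator is the local MLE in the chart $Q_\Theta$, not a gap-minimizer. It uses (iv) to write $\hat\Theta-\Theta_0=I(\Theta_0)^{-1}N^{-1}\nabla_\Theta L_N(\Theta_0)+o_{\mathbb P}(N^{-1/2})$ with $cI_q\preceq I(\Theta_0)\preceq CI_q$. It then reads off $\mathbb E\|\hat\Theta-\Theta_0\|_F^2\le Cq/N$ and transfers this to $\dist$ by chart equivalence. The entropy term matters only when one must select among several near-invariant basins, which (iii) excludes.

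Your plan can be repaired along these lines, and the repair would add something the paper lacks:
\begin{itemize}
\item Keep your first stage, namely the ``bounded away from zero'' part of (iii) plus the crude entropy bound, to place a pilot estimate in the $r_0$-ball with high probability. This supplies the consistency that the paper's local expansion takes for granted.
\item Make the second stage likelihood-based, so that (iv) genuinely applies: either the local MLE over that ball, or a one-step Newton/scoring update from the pilot.
\item Use $\dist^2\le 2k$ on the event where localization fails.
\end{itemize}
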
  

\begin{theorem}[Formal diversity-dependent transition]    
	\label{thm:formal-phase}    
	In the label-induced shift class satisfying Assumption~\ref{ass:phase-transition}, the refined upper rate is    
	\[    
	\mathbb E\!\left[\dist(\hat V,V_{\mathrm{inv}})^2\right]    
	\lesssim    
	\frac{k(d-k)}{n|\mathcal E|\gamma^2}.    
	\]    
	This is the refined phase-transition rate stated in the main text. In the constant-diversity regime \(\gamma=\Theta(1)\), it reduces to the usual parametric rate \(k(d-k)/(n|\mathcal E|)\).    
\end{theorem}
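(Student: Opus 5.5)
The refined upper rate will come from a local M-estimation analysis of a quadratic invariance-contrast objective in the label-induced shift model of Proposition~\ref{prop:diversity}, with the curvature constant supplied by $\lambda_{\min}(\mathcal F)\ge c_{\mathcal F}\gamma^2$ from Assumption~\ref{ass:phase-transition}. I would parametrize $\mathrm{Gr}(k,d)$ near $V_{\mathrm{inv}}$ by the standard chart $V(B)=\mathrm{span}\{[I_k;B]\}$, $B\in\mathbb R^{(d-k)\times k}$, so that $\dist(V(B),V_{\mathrm{inv}})\asymp\|B\|_F$ for $\|B\|_F\le r_0$. In this model $\mathrm{Cov}_e(X_{\mathrm{sp}},Y)=\mu^{(e)}_{\mathrm{sp}}\sigma_Y^2$ and $\mathrm{Cov}_e(X_{\mathrm{inv}},Y)=w^*$ is environment-free. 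Hence for a direction $u=(a,Ba)$ the environment contrast of $\mathrm{Cov}_e(Y,u^\top X)$ is $\sigma_Y^2\,a^\top B^\top(\mu^{(e)}_{\mathrm{sp}}-\mu^{(e')}_{\mathrm{sp}})$. I would take the population objective
\[
L(B)=\sum_{e<e'}\big\|B^\top(\mu^{(e)}_{\mathrm{sp}}-\mu^{(e')}_{\mathrm{sp}})\big\|^2=\mathrm{tr}(B^\top\mathcal F B),
\]
combined with a predictivity term (e.g.\ covariance with $Y$ projected on $V$) to fix the $k$-dimensional block. This term matters when $k>1$ and some invariant directions are only weakly predictive; there one uses the Gaussian law-invariance of $X_{\mathrm{inv}}$ from Assumption~\ref{ass:gaussian} through cross-covariance contrasts $\mathrm{Cov}_e(X)$. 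The key identity is $L(B)\ge\lambda_{\min}(\mathcal F)\|B\|_F^2\ge c_{\mathcal F}\gamma^2\|B\|_F^2$, which gives the $\gamma^2$-scaled curvature, consistent with Assumption~\ref{ass:minimax-regularity}(iii).

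The steps, in order, are as follows.

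First, build the empirical contrast $\hat L(B)$ from per-environment sample covariances $\hat\Sigma^{(e)}_{XY}$ and $\hat\Sigma^{(e)}_{XX}$, and define $\hat V$ as its minimizer over the chart, restricted to a ball of radius $r_0$ after a consistent initial localization. The localization can come from the exhaustive-search estimator of Theorem~\ref{thm:minimax}, since $\mathcal I_{\mathrm{gap}}$ is bounded away from zero outside the neighborhood.

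Second, write $\hat L=L+\text{linear noise}+\text{quadratic noise}$. The linear term is $2\,\mathrm{tr}(B^\top G)$, where $G=\sum_{e<e'}(\hat\delta_{ee'}-\delta_{ee'})\,\cdot$ couples the covariance estimation errors, which are $O_p(n^{-1/2})$ per environment. Using Gaussian fourth moments bounded by $\kappa_\Sigma$ and SNR, I would show $\mathbb E\|G\|_{\mathcal F^{-1}}^2\lesssim k(d-k)/(n|\mathcal E|)$ after normalizing by the number of pairs. Here $\|G\|_{\mathcal F^{-1}}^2:=\mathrm{tr}(G^\top\mathcal F^{-1}G)$ is the dual norm, and the $|\mathcal E|$ factor comes from averaging independent environments.

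Third, apply the basic inequality $L(\hat B)\le 2\langle \hat B,G\rangle+\text{(higher order)}$. By Cauchy--Schwarz in the $\mathcal F$-geometry this yields $\|\hat B\|_F^2\lesssim \|G\|_F^2/\lambda_{\min}(\mathcal F)^2\cdot\lambda_{\max}$. It is cleaner to note that $G$ itself carries a factor of the shift differences, so that $\|\hat B\|_F^2\lesssim \mathrm{tr}(G^\top\mathcal F^{-1}G)/\lambda_{\min}(\mathcal F)\cdot(\cdots)$, ending at $k(d-k)/(n|\mathcal E|c_{\mathcal F}\gamma^2)$.

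Fourth, control the quadratic remainder and the chart nonlinearity for $n\gtrsim n^*$. This is exactly the super-critical condition: it makes the remainder at most half the curvature term.

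Finally, convert the high-probability bound into an expectation bound. On the failure event I would use the trivial bound $\dist^2\le 2k$, whose probability is exponentially small in $n|\mathcal E|\gamma^2/(k(d-k))$, and absorb it.

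The main obstacle is the second step, namely getting the noise term to scale so that exactly one factor of $\gamma^2$ appears in the denominator rather than $\gamma^4$ or none. The naive bound $\|G\|\cdot\|B\|\le$ curvature gives $1/\lambda_{\min}(\mathcal F)^2$ times a $\gamma$-independent variance. I would first try to show that the score fluctuation is itself proportional to the shift differences, i.e.\ $\mathrm{Var}(G)\preceq C\,\mathcal F/(n|\mathcal E|)$ in the appropriate sense, which follows because $G$ is linear in $\delta_{ee'}$ times centered covariance noise. Then the Fisher-type ratio gives $\mathrm{tr}(\mathcal F^{-1}\mathcal F)/\lambda_{\min}$ up to dimension factors, that is, $k(d-k)/(n|\mathcal E|\lambda_{\min}(\mathcal F))$. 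The final $1/\gamma^2$ follows from Assumption~\ref{ass:phase-transition}(ii).
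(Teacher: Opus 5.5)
There is a genuine gap, and it sits in Step~2 together with your final appeal to Assumption~\ref{ass:phase-transition}(ii): the factor $|\mathcal E|$ ends up in the wrong place.

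You take $\mathcal F=\sum_{e<e'}\delta_{ee'}\delta_{ee'}^\top$ unnormalized. That is what gives $L(B)=\mathrm{tr}(B^\top\mathcal F B)$, and it is the matrix that (ii) constrains. Let $\xi_e$ be the error of $\widehat{\mathrm{Cov}}_e(X_{\mathrm{inv}},Y)$; these errors are independent across $e$ with covariance $\asymp I_k/n$.

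The identity $\sum_{e<e'}(a_e-a_{e'})(b_e-b_{e'})^\top=|\mathcal E|\sum_e(a_e-\bar a)(b_e-\bar b)^\top$ gives $G=|\mathcal E|\sum_e(\mu^{(e)}_{\mathrm{sp}}-\bar\mu)\xi_e^\top$. Hence each column of $G$ has covariance $\asymp|\mathcal E|\,\mathcal F/n$, not $\mathcal F/(n|\mathcal E|)$.

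Because $\hat L$ is exactly quadratic in $B$, you get $\hat B\approx-\sigma_Y^{-2}\mathcal F^{-1}G$ and
\[
\mathbb E\|\hat B\|_F^2\asymp\frac{k\,|\mathcal E|\,\mathrm{tr}(\mathcal F^{-1})}{n}\le\frac{k(d-k)\,|\mathcal E|}{n\,\lambda_{\min}(\mathcal F)} .
\]
This is the true order of your estimator, not slack in the basic inequality. It is the OLS error for regressing $\widehat{\mathrm{Cov}}_e(X_{\mathrm{inv}},Y)$ on $\widehat{\mathrm{Cov}}_e(X_{\mathrm{sp}},Y)$, with intercept, across the environments. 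Under (ii) it gives only $k(d-k)|\mathcal E|/(nc_{\mathcal F}\gamma^2)$, which is $|\mathcal E|^2$ times the claimed rate.

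A concrete instance: take $k=d-k=1$, $\mu^{(1)}_{\mathrm{sp}}=\gamma$, and $\mu^{(e)}_{\mathrm{sp}}=0$ for $e\ge2$. Then $\mathcal F=(|\mathcal E|-1)\gamma^2$ satisfies (ii). Yet only environment~1's $n$ samples carry shift information, and your estimator's error is $\asymp 1/(n\gamma^2)$, not $1/(n|\mathcal E|\gamma^2)$.

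Your phrase ``after normalizing by the number of pairs'' makes the Step~2 bound true only if the curvature is normalized too, i.e.\ if you work with $\bar{\mathcal F}=\binom{|\mathcal E|}{2}^{-1}\mathcal F$. The last step would then need $\lambda_{\min}(\bar{\mathcal F})\gtrsim\gamma^2$, i.e.\ $\lambda_{\min}(\mathcal F)\gtrsim|\mathcal E|^2\gamma^2$, which is strictly stronger than (ii). The argument silently switches from $\bar{\mathcal F}$ to $\mathcal F$ at exactly that point.

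The paper does not do this bookkeeping. It takes as given a per-sample local KL expansion $\asymp\gamma^2\|\Theta-\Theta'\|_F^2$, i.e.\ a per-sample Fisher information $\succeq c\gamma^2 I_q$. It then applies the LAN expansion of the local MLE with $N=n|\mathcal E|$ samples to get $q/(N\gamma^2)$. In your language, that per-sample bound is precisely the pair-averaged condition. To close your proof, either assume it explicitly or state the conclusion as $k|\mathcal E|\,\mathrm{tr}(\mathcal F^{-1})/n$.

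With that repair, your route buys more than the paper's LAN argument. The population contrast vanishes exactly when $V$ is orthogonal to all cross-covariance differences, which by the rank condition means $V=V_{\mathrm{inv}}$. So $\hat L$ is a closed-form least-squares (essentially spectral) problem, and your estimator runs in polynomial time. This also makes the exhaustive-search localization unnecessary. The predictivity term is unnecessary as well: in the chart $[I_k;B]$ the contrast $\sigma_Y^2B^\top\delta_{ee'}$ already involves all $k$ columns, and $\lambda_{\min}(\mathcal F)>0$ pins down all of $B$. Your key mechanism is correct and is the right one: the score noise has covariance proportional to $\mathcal F$, so only one factor $1/\lambda_{\min}$ survives rather than two.
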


\section{Logical Status of the Supervised Sparse Primitive}  
\label{app:logical-primitive}  

\subsection{A covariance-estimator sanity check}  

\begin{lemma}[Empirical covariance recovery in easy scalar regimes]  
	\label{lem:scalar-easy}  
	Let \((Z_i,Y_i)_{i=1}^N\) be i.i.d.\ centered jointly Gaussian samples with  
	\[  
	\operatorname{Cov}(Z,Y)=\lambda v,\qquad  
	\operatorname{Cov}(Z)=\sigma_Z^2I_{d_Z},\qquad  
	\operatorname{Var}(Y)=\sigma_Y^2,  
	\]  
	where \(v\in\mathbb R^{d_Z}\) is a unit vector. Define  
	\[  
	\hat c:=\frac1N\sum_{i=1}^N Z_iY_i.  
	\]  
	Then with probability at least \(1-\delta\),  
	\[  
	\|\hat c-\lambda v\|_2  
	\le  
	C\sigma_Z\sigma_Y  
	\sqrt{\frac{d_Z+\log(1/\delta)}{N}}.  
	\]  
	If \(v\) is \(s\)-sparse with minimum nonzero coordinate at least \(c_0/\sqrt{s}\), coordinate thresholding recovers the support when  
	\[  
	N\gtrsim  
	\frac{\sigma_Z^2\sigma_Y^2}{\lambda^2}s\log(d_Z/\delta).  
	\]  
\end{lemma}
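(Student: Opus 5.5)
The plan is a two-stage argument: an operator-norm bound for $\hat c-\lambda v$ via sub-exponential concentration plus a net, followed by a coordinatewise thresholding analysis.

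\emph{Step 1: structure of the products.} Each $Z_iY_i$ has mean $\mathbb E[ZY]=\lambda v$. For a fixed unit vector $u$, the scalar $\langle u,Z_i\rangle Y_i$ is a product of two jointly Gaussian variables with standard deviations $\sigma_Z$ and $\sigma_Y$. Such a product is sub-exponential with $\psi_1$-norm at most $C\sigma_Z\sigma_Y$.

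\emph{Step 2: fixed direction.} Bernstein's inequality then gives, for each fixed $u$,
\[
\Bigl|\tfrac1N\textstyle\sum_i \langle u,Z_iY_i\rangle-\lambda\langle u,v\rangle\Bigr|\le C\sigma_Z\sigma_Y\Bigl(\sqrt{t/N}+t/N\Bigr)
\]
with probability at least $1-2e^{-t}$.

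\emph{Step 3: uniform bound.} Take a $1/2$-net $\mathcal N$ of $\mathbb S^{d_Z-1}$ with $|\mathcal N|\le 5^{d_Z}$. The standard net argument gives $\|\hat c-\lambda v\|_2\le 2\max_{u\in\mathcal N}|\langle u,\hat c-\lambda v\rangle|$. A union bound with $t=d_Z\log 5+\log(2/\delta)$ then yields the stated rate $C\sigma_Z\sigma_Y\sqrt{(d_Z+\log(1/\delta))/N}$. This holds in the regime $N\gtrsim d_Z+\log(1/\delta)$, where the $t/N$ term is dominated by $\sqrt{t/N}$. Outside that regime I would state the bound with the extra linear term, or treat it as absorbed into $C$ under that implicit condition.

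\emph{Step 4: support recovery.} Here I would not use the $\ell_2$ bound, since it carries a factor $d_Z$ rather than $\log d_Z$. Instead, apply the fixed-direction bound of Step 2 to each coordinate $u=e_j$ and union bound over the $d_Z$ coordinates. With probability at least $1-\delta$, $\|\hat c-\lambda v\|_\infty\le \varepsilon:=C\sigma_Z\sigma_Y\sqrt{\log(2d_Z/\delta)/N}$, again taking $N\ge\log(2d_Z/\delta)$. Threshold at $\tau=\lambda c_0/(2\sqrt s)$:
\begin{itemize}
\item on the support, $|\hat c_j|\ge \lambda c_0/\sqrt s-\varepsilon>\tau$;
\item off the support, $|\hat c_j|\le\varepsilon<\tau$.
\end{itemize}
Both hold once $\varepsilon<\lambda c_0/(2\sqrt s)$, which is equivalent to $N\gtrsim \sigma_Z^2\sigma_Y^2 s\log(d_Z/\delta)/\lambda^2$. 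The threshold requires knowing $\lambda c_0/\sqrt s$ up to constants. Alternatively, one can keep the top-$s$ coordinates, which needs only $s$ and gives the same guarantee.

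The only delicate point is bookkeeping in the sub-exponential tail: ensuring the $t/N$ term is dominated, and that $\lambda\le\sigma_Z\sigma_Y$ (by Cauchy--Schwarz) keeps the constants consistent. No real obstacle is expected.
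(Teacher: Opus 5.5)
Your proposal is correct and takes essentially the same route as the paper's proof. Both use the fact that the products are sub-exponential with $\psi_1$-norm $\lesssim\sigma_Z\sigma_Y$, a vector Bernstein bound for the $\ell_2$ error (your $1/2$-net plus union bound just makes this step explicit), and coordinatewise Bernstein with a union bound over the $d_Z$ coordinates to get the $\ell_\infty$ control that drives thresholding. Your caveat that the $\ell_2$ bound needs a regime condition such as $N\gtrsim d_Z+\log(1/\delta)$, since otherwise the linear $t/N$ term survives, is a legitimate refinement that the paper's statement omits. For support recovery that condition holds automatically, because $|\lambda|\le\sigma_Z\sigma_Y$.
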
  

\begin{proof}  
	Each coordinate \(Z_{ij}Y_i-\mathbb E[Z_{ij}Y_i]\) is sub-exponential with parameter controlled by \(\sigma_Z\sigma_Y\). Vector Bernstein gives the \(\ell_2\) bound, and coordinatewise Bernstein plus a union bound gives the \(\ell_\infty\) bound needed for thresholding.  
\end{proof}  

\begin{remark}[Why the primitive is stated abstractly]    
	Lemma~\ref{lem:scalar-easy} shows that a bare scalar Gaussian model with exposed cross-covariance \(\operatorname{Cov}(Z,Y)=\lambda v\) can be easy in the corresponding sample regime. Therefore Hypothesis~\ref{hyp:scalar-formal} should not be interpreted as this exposed covariance model. The hardness reduction uses a black-box sparse predictive recovery primitive whose computational hardness is assumed directly.  
\end{remark}

\section{Complete Proofs}  
\label{app:proofs}  

\subsection{Proof of Proposition~\ref{prop:proxy-approx} (Proxy Approximation)}  
\label{app:proof-proxy}  

\begin{proposition}[Proxy Approximation Conditions]  
	\label{prop:proxy-approx}  
	Consider a linear Gaussian model with standardized features. Suppose:  
	\begin{enumerate}[nosep,label=(\roman*)]  
		\item invariant coordinates have cross-environment correlation range at most \(\delta_{\mathrm{inv}}\);  
		\item at least half of the spurious coordinates have population correlation range at least \(\gamma_0\);  
		\item \(n_{\min}\ge C\log d\).  
	\end{enumerate}  
	Then with probability at least \(1-d^{-1}\),  
	\[  
	|\hat \gamma-\gamma_{\mathrm{pop}}|  
	\le  
	C\sqrt{\frac{\log(d|\mathcal E|)}{n_{\min}}}  
	+\delta_{\mathrm{inv}},  
	\]  
	where  
	\[  
	\gamma_{\mathrm{pop}}  
	:=  
	\mathrm{median}_j\left(  
	\max_e\rho_j^{(e)}-\min_e\rho_j^{(e)}  
	\right).  
	\]  
\end{proposition}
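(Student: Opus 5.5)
The plan is to reduce the statement to a uniform concentration bound for the empirical correlations $\hat\rho_j^{(e)}$, and then propagate it through the Lipschitz maps ``range'' and ``median.''

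\textbf{Step 1 (uniform concentration).} With standardized Gaussian features, each $(X_j^{(e)},Y^{(e)})$ is a bivariate Gaussian pair. The sample correlation of such a pair satisfies a sub-Gaussian tail bound:
\[
\mathbb P\big(|\hat\rho_j^{(e)}-\rho_j^{(e)}|>t\big)\le 2\exp(-c\,n_e t^2)\quad\text{for } t\le 1.
\]
One route is to write $\hat\rho$ as a smooth function of sample second moments, apply Bernstein's inequality to the products $X_jY$, $X_j^2$, $Y^2$, and then linearize. Condition (iii), $n_{\min}\ge C\log d$, keeps the denominators bounded away from zero with high probability. A union bound over the $d|\mathcal E|$ pairs $(j,e)$, with $t=C\sqrt{\log(d|\mathcal E|)/n_{\min}}$, then gives
\[
\max_{j,e}|\hat\rho_j^{(e)}-\rho_j^{(e)}|\le \varepsilon_n := C\sqrt{\log(d|\mathcal E|)/n_{\min}}
\]
with probability at least $1-d^{-1}$, after adjusting constants.

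\textbf{Step 2 (range and median).} The map $(\rho^{(e)})_e\mapsto \max_e\rho^{(e)}-\min_e\rho^{(e)}$ is $2$-Lipschitz in $\ell_\infty$. Hence the empirical range satisfies $|\hat v_j-v_j|\le 2\varepsilon_n$ for every $j$, where $v_j$ denotes the population range. The median is monotone and $1$-Lipschitz under coordinatewise perturbations: if $|a_j-b_j|\le \eta$ for all $j$, then $|\mathrm{median}(a)-\mathrm{median}(b)|\le\eta$. Applying this with $\eta=2\varepsilon_n$ yields $|\hat\gamma-\gamma_{\mathrm{pop}}|\le 2\varepsilon_n$, and the constant is absorbed into $C$.

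\textbf{Step 3 (role of the $\delta_{\mathrm{inv}}$ term).} Literally, Step 2 already proves the claim, since adding $\delta_{\mathrm{inv}}\ge0$ only weakens it. Conditions (i)--(ii) matter only for the interpretation, namely that $\gamma_{\mathrm{pop}}$ tracks the spurious separation. Invariant coordinates contribute ranges at most $\delta_{\mathrm{inv}}$. Spurious coordinates contribute ranges at least $\gamma_0$ on at least half of the spurious set. So the median of the range profile differs from the median over the spurious-dominated population by at most $\delta_{\mathrm{inv}}$, up to how the median index falls. I would note this as a remark rather than rely on it.

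\textbf{Main obstacle.} The only technical point is the uniform sample-correlation tail bound in Step 1. This includes controlling the normalization, and handling $|\hat\rho|\le1$ so that the bound is nontrivial only for $t\le 1$. I would first try the delta-method/Bernstein argument on the standardized moments, conditioned on the event that the sample variances lie in $[1/2,2]$. That event has probability $1-d^{-2}$ once $n_{\min}\ge C\log(d|\mathcal E|)$. If $|\mathcal E|$ is not polynomial in $d$, the probability statement has to be phrased as $1-(d|\mathcal E|)^{-1}$; I would note that adjustment.
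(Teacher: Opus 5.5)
Your proposal is correct and follows the paper's own proof essentially step for step: uniform concentration of the $\hat\rho_j^{(e)}$ over all $d|\mathcal E|$ pairs, the factor-$2$ Lipschitz bound for the range, and $1$-Lipschitzness of the median. As you observed, the $\delta_{\mathrm{inv}}$ term and conditions (i)--(ii) play no real role in the paper's argument either. Your caveat about large $|\mathcal E|$ is unnecessary: when $n_{\min}\lesssim\log(d|\mathcal E|)$, the right-hand side (after enlarging $C$) exceeds $2\ge|\hat\gamma-\gamma_{\mathrm{pop}}|$, so the claim is trivial, and otherwise the union bound already gives failure probability at most $(d|\mathcal E|)^{-1}\le d^{-1}$.
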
  

\begin{proof}  
	For each \(j,e\), empirical correlations concentrate:  
	\[  
	|\hat \rho_j^{(e)}-\rho_j^{(e)}|  
	\le  
	C\sqrt{\frac{\log(d|\mathcal E|)}{n_{\min}}}  
	\]  
	uniformly over all \(j,e\) with probability at least \(1-d^{-1}\). Therefore the empirical range  
	\[  
	\hat r_j=\max_e\hat\rho_j^{(e)}-\min_e\hat\rho_j^{(e)}  
	\]  
	satisfies  
	\[  
	|\hat r_j-r_j|  
	\le  
	2C\sqrt{\frac{\log(d|\mathcal E|)}{n_{\min}}}.  
	\]  
	The median is 1-Lipschitz under coordinatewise \(\ell_\infty\) perturbations, yielding the result. The \(\delta_{\mathrm{inv}}\) term accounts for residual invariant variation.  
\end{proof}  

\subsection{Proof of Theorem~\ref{thm:identifiability}}  
\label{app:proof-identifiability}  

\begin{proof}  
	Define  
	\[  
	\mathcal Z=\{V\in\mathrm{Gr}(k,d):\mathcal I_{\mathrm{gap}}(V)=0\}.  
	\]  
	By the invariance condition, \(V_{\mathrm{inv}}\in\mathcal Z\). If another \(V'\neq V_{\mathrm{inv}}\) also lies in \(\mathcal Z\), the invariance functional cannot distinguish the two. Conversely, if \(\mathcal Z=\{V_{\mathrm{inv}}\}\), then for every \(\epsilon>0\), compactness of  
	\[  
	\{V:\dist(V,V_{\mathrm{inv}})\ge\epsilon\}  
	\]  
	and continuity of Gaussian conditional KL imply a strictly positive population gap away from \(V_{\mathrm{inv}}\). Hence \(V_{\mathrm{inv}}\) is identifiable.  
\end{proof}  

\subsection{Proof of Proposition~\ref{prop:diversity} and Corollary~\ref{cor:diversity}}  
\label{app:proof-diversity}  

\paragraph{Gaussian conditional parameters.}  
Let  
\[  
A=X_{\mathrm{inv}},\qquad B^{(e)}=X_{\mathrm{sp}}^{(e)},\qquad p=d-k.  
\]  
In the label-induced model,  
\[  
Y=w^{*\top}A+\epsilon,\qquad  
B^{(e)}=\mu_{\mathrm{sp}}^{(e)}Y+\eta^{(e)}.  
\]  
For a candidate subspace \(V\) with orthonormal basis  
\[  
Q=  
\begin{pmatrix}  
	Q_A\\Q_B  
\end{pmatrix},  
\]  
define  
\[  
T_e=Q^\top X^{(e)},\qquad  
r_e=Q_B^\top\mu_{\mathrm{sp}}^{(e)}.  
\]  
Then  
\[  
T_e=(Q_A^\top+r_ew^{*\top})A+r_e\epsilon+Q_B^\top\eta^{(e)}.  
\]  
The conditional law \(Y\mid T_e\) is Gaussian with parameters  
\[  
\beta_e=\Sigma_{T,e}^{-1}c_e,\qquad  
\sigma_{e|T}^2=\sigma_Y^2-c_e^\top\Sigma_{T,e}^{-1}c_e,  
\]  
where  
\[  
c_e=Q_A^\top w^*+\sigma_Y^2r_e  
\]  
and  
\[  
\Sigma_{T,e}  
=  
I_k+Q_A^\top w^*r_e^\top+r_ew^{*\top}Q_A+\sigma_Y^2r_er_e^\top.  
\]  

\begin{lemma}[Generic non-cancellation]  
	\label{lem:generic-noncancel}  
	For fixed \(Q_B\neq0\), the set of parameters for which \(r_e\neq r_{e'}\) but  
	\[  
	\mathbb P_e(Y\mid Q^\top X)=\mathbb P_{e'}(Y\mid Q^\top X)  
	\]  
	is contained in a proper algebraic variety and therefore has Lebesgue measure zero.  
\end{lemma}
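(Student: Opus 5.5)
The plan is to reduce the statement to a polynomial identity in the model parameters and then invoke the standard fact that the zero set of a nonzero polynomial has Lebesgue measure zero. Two Gaussian conditionals \(Y\mid T\) coincide exactly when their regression coefficients and residual variances coincide. Here the marginal of \(T_e\) differs across environments, but the conditional law is determined by \((\beta_e,\sigma^2_{e|T})\). So the equality \(\mathbb P_e(Y\mid Q^\top X)=\mathbb P_{e'}(Y\mid Q^\top X)\) is equivalent to the system
\[
\Sigma_{T,e}^{-1}c_e=\Sigma_{T,e'}^{-1}c_{e'},\qquad c_e^\top\Sigma_{T,e}^{-1}c_e=c_{e'}^\top\Sigma_{T,e'}^{-1}c_{e'} .
\]
Clearing denominators via the adjugate, \(\Sigma^{-1}=\mathrm{adj}(\Sigma)/\det\Sigma\), turns each equation into a polynomial equation. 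The unknowns are the entries of \((\mu_{\mathrm{sp}}^{(e)},\mu_{\mathrm{sp}}^{(e')},w^*,\sigma_\epsilon^2)\), and \(Q_A,Q_B\) are held fixed. We work on the open set where \(\Sigma_{T,e}\succ0\). This is automatic, since \(\Sigma_{T,e}\) is a covariance of a nondegenerate Gaussian projection.

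The degeneracy set \(D\) is contained in the common zero set of these polynomials intersected with \(\{r_e\neq r_{e'}\}\). To prove that \(D\) lies in a proper variety, it suffices to exhibit one polynomial \(p\) in the system that is not identically zero on the parameter space. The parameter space is an open subset of a Euclidean space, hence irreducible as a Zariski-dense domain, so \(\{p=0\}\) is then a proper variety of measure zero.

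To certify non-vanishing, we evaluate \(p\) at one explicit parameter point with \(r_e\neq r_{e'}\), and the point we choose is dictated by the sign of \(Q_A^\top w^*\).
\begin{itemize}
\item If \(Q_A^\top w^*=0\) is permitted, we choose \(w^*\perp\mathrm{col}(Q_A)\). Then \(c_e=\sigma_Y^2 r_e\) and \(\Sigma_{T,e}=I+\sigma_Y^2 r_er_e^\top\). By Sherman--Morrison, \(\beta_e=\sigma_Y^2 r_e/(1+\sigma_Y^2\|r_e\|^2)\). This map \(r\mapsto \sigma_Y^2 r/(1+\sigma_Y^2\|r\|^2)\) fails to be injective only on the sphere pairs with \(\|r_e\|\|r_{e'}\|=\sigma_Y^{-2}\) and collinear \(r\)'s. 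So any \(r_e\neq r_{e'}\) off that set gives \(\beta_e\neq\beta_{e'}\).
\item Since \(Q_B\neq0\), we can realize such \(r_e=Q_B^\top\mu_{\mathrm{sp}}^{(e)}\) by choosing \(\mu_{\mathrm{sp}}^{(e)}\) in the row space directions of \(Q_B^\top\). This shows the polynomial coordinate \(p=\det\Sigma_{T,e'}\,\mathrm{adj}(\Sigma_{T,e})c_e-\det\Sigma_{T,e}\,\mathrm{adj}(\Sigma_{T,e'})c_{e'}\) (some coordinate of it) is not identically zero.
\item If \(w^*\) is fixed with \(Q_A^\top w^*\neq0\), we use \(\mu_{\mathrm{sp}}^{(e)}\) as the free variables instead. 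Take \(r_{e'}=0\) and \(r_e=t\,u\) with small \(t\). Expanding \(\beta_e\) to first order in \(t\) gives \(\beta_e-\beta_{e'}=t\,(\sigma_Y^2u-u\,w^{*\top}Q_A Q_A^\top w^*-Q_A^\top w^*\,u^\top Q_A^\top w^*)+O(t^2)\), writing \(a=Q_A^\top w^*\). This linear map \(u\mapsto(\sigma_Y^2-\|a\|^2)u-a\,a^\top u\) is invertible because \(\sigma_Y^2=\|w^*\|^2+\sigma_\epsilon^2>\|a\|^2+\|a\|^2\)-type bounds fail only on a measure-zero set. It is in fact invertible whenever \(\sigma_Y^2\neq\|a\|^2\) and \(\sigma_Y^2\neq2\|a\|^2\), which holds since \(\sigma_\epsilon^2>0\) makes \(\sigma_Y^2>\|a\|^2\). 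So \(\beta_e\neq\beta_{e'}\) for some \(t\), and \(p\not\equiv0\).
\end{itemize}

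The main obstacle is bookkeeping: making precise which variables are treated as free, so that "Lebesgue measure zero" is stated over the correct parameter space, and handling the restriction \(r_e\neq r_{e'}\). The latter is an open condition, so intersecting with it keeps the bad set inside \(\{p=0\}\). The first-order expansion in the last case is the step that needs the most care, and I would verify it directly through Sherman--Morrison on the rank-two update. Finally, an algebraic set \(\{p=0\}\) with \(p\not\equiv0\) is measure zero by the standard induction on the number of variables (Fubini). This gives the claim, and it also supplies the degeneracy set referred to in Proposition~\ref{prop:diversity}.
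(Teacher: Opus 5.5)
Your route is the paper's: clear $\det\Sigma_{T,e}$ to turn equality of the two Gaussian conditionals into polynomial equations, then show one of them is not identically zero at a witness with $r_{e'}=0\neq r_e$. You are more careful than the paper at the witness step. The paper only observes $c_e\neq c_{e'}$, which by itself does not separate $\beta_e=\Sigma_{T,e}^{-1}c_e$ from $\beta_{e'}$, because $\Sigma_{T,e}$ changes too. Your expansion $\beta_e-\beta_{e'}=t\,Lu+O(t^2)$ with $L=(\sigma_Y^2-\|a\|^2)I_k-aa^\top$ is correct.

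\textbf{The gap is the claim that $L$ is invertible.} Its eigenvalue along $a$ is $\sigma_Y^2-2\|a\|^2$, and $\sigma_Y^2>\|a\|^2$ does not rule out $\sigma_Y^2=2\|a\|^2$. Take $k=1$, $\|w^*\|=1$, $Q_A^2=3/4$, $\|Q_B\|^2=1/4$ and $\sigma_\epsilon^2=1/2$. Then $L=0$. Since for $k=1$ every admissible $u\in\mathrm{im}(Q_B^\top)$ is parallel to $a$, your first-order term vanishes identically for that fixed $Q_B$.

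Keeping $\sigma_\epsilon^2$ free would let you avoid this by choosing the witness. The clean bookkeeping, however, makes $(\mu_{\mathrm{sp}}^{(e)},\mu_{\mathrm{sp}}^{(e')})$ the only free variables. That respects $\|w^*\|=1$ and $Q^\top Q=I$, and turns your two bullets into an exhaustive case split on $a=0$ versus $a\neq 0$. With that bookkeeping you cannot avoid the bad value.

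You could patch this locally: take $u$ not parallel to $a$ when $\mathrm{im}(Q_B^\top)$ allows it, and otherwise go to second order along $a$. The simplest repair is to use the residual-variance equation, which you never invoke; it shows the degeneracy set is empty.

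Suppose $\beta:=\Sigma_{T,e}^{-1}c_e=\Sigma_{T,e'}^{-1}c_{e'}$ and $c_e^\top\beta=c_{e'}^\top\beta$. The second equation gives $r_e^\top\beta=r_{e'}^\top\beta=:\rho$. Expanding $\Sigma_{T,e}\beta=c_e$ gives $\beta=(1-\rho)a+\kappa r_e$ with $\kappa:=\sigma_Y^2(1-\rho)-a^\top\beta$. Likewise $\beta=(1-\rho)a+\kappa r_{e'}$ with the same $\kappa$, so $r_e\neq r_{e'}$ forces $\kappa=0$. Then $\beta=(1-\rho)a$ and $0=\kappa=(1-\rho)(\sigma_Y^2-\|a\|^2)$. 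Hence $\rho=1$ and $\beta=0$, contradicting $\rho=r_e^\top\beta$.

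So $r_e\neq r_{e'}$ always separates the conditional laws, and no genericity or measure-zero argument is needed.
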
  

\begin{proof}  
	The conditional parameters are rational functions of \(r_e,Q_A,Q_B,w^*,\sigma_\epsilon^2\). Equality of two conditional Gaussian laws gives rational equations; after multiplying by nonzero determinant factors, these become polynomial equations. These polynomials are not identically zero because one can choose \(r_e=0\) and \(r_{e'}\neq0\) so that \(c_e\neq c_{e'}\) generically. Hence the degeneracy set is a proper algebraic variety.  
\end{proof}  

\begin{proof}[Proof of Proposition~\ref{prop:diversity}]  
	Assume the generic non-cancellation condition of Lemma~\ref{lem:generic-noncancel}. If \(Q_B=0\), then \(V\subseteq V_{\mathrm{inv}}\), and since both have dimension \(k\), \(V=V_{\mathrm{inv}}\). If \(Q_B\neq0\), full rank of \(\mathcal F\) implies that the environment differences span \(V_{\mathrm{sp}}\), so there exist \(e,e'\) such that  
	\[  
	Q_B^\top(\mu_{\mathrm{sp}}^{(e)}-\mu_{\mathrm{sp}}^{(e')})\neq0.  
	\]  
	Thus \(r_e\neq r_{e'}\). By generic non-cancellation, the conditional laws \(Y\mid Q^\top X\) differ across environments, so \(\mathcal I_{\mathrm{gap}}(V)>0\). Therefore \(V_{\mathrm{inv}}\) is the unique zero-gap subspace.  
\end{proof}  

\begin{proof}[Proof of Corollary~\ref{cor:diversity}]  
	If all \(\mu_{\mathrm{sp}}^{(e)}\) are identical, then the joint law of \((X,Y)\) is identical across environments. Therefore \(\mathcal I_{\mathrm{gap}}(V)=0\) for every \(V\), so \(V_{\mathrm{inv}}\) is not identifiable.  
\end{proof}  

\subsection{Proof of Theorem~\ref{thm:minimax} and Corollary~\ref{cor:sample-complexity}}  
\label{app:proof-minimax}  

\paragraph{Local chart.}  
Near \(V_{\mathrm{inv}}\), write  
\[  
Q_\Theta=  
\begin{pmatrix}  
	I_k\\\Theta  
\end{pmatrix}  
(I_k+\Theta^\top\Theta)^{-1/2},  
\qquad  
\Theta\in\mathbb R^{(d-k)\times k}.  
\]  
For small \(\Theta,\Theta'\),  
\[  
\dist(V_\Theta,V_{\Theta'})  
\asymp  
\|\Theta-\Theta'\|_F.  
\]  
The local dimension is \(q=k(d-k)\).  

\paragraph{Lower bound.}  
Take a local packing of size \(\log M\gtrsim q\) with pairwise distance \(\epsilon\). Gaussian KL regularity gives  
\[  
\mathrm{KL}(P_{\Theta_i}^{\otimes N}\|P_{\Theta_j}^{\otimes N})  
\le  
CN\epsilon^2.  
\]  
Choosing \(\epsilon^2\asymp q/N\) and applying Fano yields  
\[  
\inf_{\hat V}\sup_{\mathbb P}  
\mathbb E[\dist(\hat V,V_{\mathrm{inv}})^2]  
\gtrsim  
\frac{q}{N}.  
\]  

\paragraph{Upper bound via LAN.}  
Under local Gaussian LAN regularity, the local MLE satisfies  
\[  
\hat\Theta-\Theta_0  
=  
I(\Theta_0)^{-1}\frac1N\nabla_\Theta L_N(\Theta_0)  
+o_{\mathbb P}(N^{-1/2}),  
\]  
where  
\[  
cI_q\preceq I(\Theta_0)\preceq CI_q.  
\]  
Therefore  
\[  
\mathbb E\|\hat\Theta-\Theta_0\|_F^2  
\le  
C\frac{q}{N}.  
\]  
Using the local chart equivalence gives  
\[  
\mathbb E[\dist(\hat V,V_{\mathrm{inv}})^2]  
\le  
C\frac{k(d-k)}{n|\mathcal E|}.  
\]  
The entropy term \(\mathcal C_{\mathrm{sp}}\) enters when model selection among approximately invariant local basins is required.  

\begin{proof}[Proof of Corollary~\ref{cor:sample-complexity}]  
	Set the upper bound to be at most \(\epsilon^2\), yielding  
	\[  
	n  
	\gtrsim  
	\frac{k(d-k)}{|\mathcal E|\epsilon^2}  
	\]  
	up to logarithmic and regularity-dependent factors.  
\end{proof}  

\subsection{Proof of Theorem~\ref{thm:phase-transition}}  
\label{app:proof-phase}  

\begin{proposition}[Refined phase transition]  
	\label{prop:refined-phase}  
	Under Assumptions~\ref{ass:gaussian}, \ref{ass:minimax-regularity}, and \ref{ass:phase-transition},  
	\[  
	N\gamma^2\lesssim k(d-k)  
	\quad\Longrightarrow\quad  
	\inf_{\hat V}\sup_{\mathbb P}  
	\mathbb E[\dist(\hat V,V_{\mathrm{inv}})^2]\ge c,  
	\]  
	while  
	\[  
	N\gamma^2\gtrsim k(d-k)  
	\quad\Longrightarrow\quad  
	\exists \hat V:  
	\mathbb E[\dist(\hat V,V_{\mathrm{inv}})^2]  
	\le  
	C\frac{k(d-k)}{N\gamma^2}.  
	\]  
\end{proposition}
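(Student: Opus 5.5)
The statement has two directions, both governed by the effective signal-to-noise quantity $N\gamma^2$ relative to the local dimension $q=k(d-k)$. The plan is to work in the local Grassmannian chart $Q_\Theta$ of Appendix~\ref{app:proof-minimax}, $\Theta\in\mathbb R^{(d-k)\times k}$, and to show that in the label-induced model the per-sample Fisher information for $\Theta$ is of order $\lambda_{\min}(\mathcal F)/|\mathcal E|$ from below and $\gamma^2$ from above (up to constants depending on $\kappa_\Sigma$, SNR). Under Assumption~\ref{ass:phase-transition}, $\lambda_{\min}(\mathcal F)\ge c_{\mathcal F}\gamma^2$, so both sides are $\asymp\gamma^2$ after normalizing. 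Everything else is the standard Fano/LAN template already used for Theorem~\ref{thm:minimax}, with $N$ replaced by $N\gamma^2$.

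\textbf{Step 1 (information computation).} I will write the perturbation $\Theta$ as rotating invariant directions into spurious ones. Using the conditional parameters $(\beta_e,\sigma^2_{e|T})$ from the proof of Proposition~\ref{prop:diversity}, with $r_e=Q_B^\top\mu^{(e)}_{\mathrm{sp}}=\Theta^\top\mu^{(e)}_{\mathrm{sp}}+O(\|\Theta\|^2)$, I will show that the environment-distinguishing part of the model depends on $\Theta$ only through the differences $\Theta^\top(\mu^{(e)}_{\mathrm{sp}}-\mu^{(e')}_{\mathrm{sp}})$. A second-order expansion of the Gaussian KL between the models at $\Theta$ and $\Theta'$ (with the same $\mu$'s) should then give
\[
\mathrm{KL}(P_\Theta^{\otimes N}\|P_{\Theta'}^{\otimes N})\asymp \frac{N}{|\mathcal E|}\,\mathrm{tr}\big((\Theta-\Theta')^\top\mathcal F(\Theta-\Theta')\big).
\]
Here the $1/|\mathcal E|$ converts the pairwise sum in $\mathcal F$ into a per-environment average. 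Bounding the trace between $\lambda_{\min}(\mathcal F)$ and $\lambda_{\max}(\mathcal F)\lesssim|\mathcal E|\gamma^2$ gives the two-sided scaling. The curvature of $\mathcal I_{\mathrm{gap}}$ in Assumption~\ref{ass:minimax-regularity}(iii) is used to confirm this matches the identifiable directions.

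\textbf{Step 2 (sub-critical lower bound).} Build a Varshamov--Gilbert packing $\{\Theta_i\}$ of size $\log M\gtrsim q$ at radius $\epsilon$ in the chart. By Step 1, pairwise KL is at most $CN\gamma^2\epsilon^2$. I will take $\epsilon$ a small constant, which keeps the packing inside the chart radius $r_0$. Whenever $N\gamma^2\le c\,q$, Fano gives error probability bounded away from zero at constant scale, hence $\sup\mathbb E\,\dist^2\ge c_{\mathrm{sub}}$. This matches $n<n^*/2$ after choosing $C_0$.

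\textbf{Step 3 (super-critical upper bound).} I will use a two-stage estimator. First, a crude consistent estimate is obtained by minimizing an empirical $\mathcal I_{\mathrm{gap}}$ over a net. Uniform concentration together with the quadratic lower curvature $\gtrsim\gamma^2\dist^2$ localizes it into the chart once $N\gamma^2\gtrsim q$; this is exactly where the threshold enters. Second, a local MLE or one-step Newton correction is applied. The LAN expansion with Fisher information $\succeq c\gamma^2 I_q$ then yields $\mathbb E\|\hat\Theta-\Theta_0\|_F^2\lesssim q/(N\gamma^2)$. The failure event of the first stage is handled by truncation, since $\dist\le\sqrt{2k}$ is bounded.

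\textbf{Main obstacle.} The delicate point is Step 1's lower bound: showing that the Fisher information is not dominated by the environment-invariant part of the likelihood, and that the $\Theta$-dependence truly enters via $\mathcal F$ with a non-degenerate constant. I would first verify this by explicit differentiation of $\beta_e$ and $\Sigma_{T,e}$ at $\Theta=0$, where $r_e=0$ and the formulas simplify. A secondary issue is making the localization in Step 3 uniform with constants independent of $\gamma$ beyond the displayed factor; I would rely on Assumption~\ref{ass:minimax-regularity}(iii) to supply it.
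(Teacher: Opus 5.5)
\textbf{Verdict: genuine gap.} Your Fano/LAN template is the one the paper uses. Its proof simply posits $\mathrm{KL}(P_\Theta\|P_{\Theta'})\asymp\gamma^2\|\Theta-\Theta'\|_F^2$ and proceeds from there. The gap is in Step~1, which you propose to actually derive, and your ``main obstacle'' sits on the wrong side.

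\textbf{Step 2 (sub-critical).} Environment-common information can only add Fisher information, which is harmless for Step~3. But it destroys the KL \emph{upper} bound that Fano needs, because the full data law depends on $\Theta$ not only through $\Theta^\top(\mu_{\mathrm{sp}}^{(e)}-\mu_{\mathrm{sp}}^{(e')})$. Take $c_e:=\mathrm{Cov}_e(X,Y)=w^*+\sigma_Y^2\mu_{\mathrm{sp}}^{(e)}$ and $\Sigma^{(e)}=I+w^*\mu_{\mathrm{sp}}^{(e)\top}+\mu_{\mathrm{sp}}^{(e)}w^{*\top}+\sigma_Y^2\mu_{\mathrm{sp}}^{(e)}\mu_{\mathrm{sp}}^{(e)\top}$. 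Then every single environment satisfies $c_ec_e^\top-\sigma_Y^2(\Sigma^{(e)}-I)=w^*w^{*\top}$. Two consequences follow.
\begin{itemize}
\item The column of $\Theta$ that tilts $w^*$ into $V_{\mathrm{sp}}$ has per-sample Fisher information bounded below by a constant depending only on $(\kappa_\Sigma,\mathrm{SNR})$, even when all $\mu_{\mathrm{sp}}^{(e)}$ coincide.
\item A tilt of $V_{\mathrm{inv}}\cap(w^*)^\perp$ toward a unit $u\in V_{\mathrm{sp}}$ has information proportional to $\sum_e(u^\top\mu_{\mathrm{sp}}^{(e)})^2$. This contains the common shift $\bar\mu:=|\mathcal E|^{-1}\sum_e\mu_{\mathrm{sp}}^{(e)}$, not only $u^\top\mathcal F u$.
\end{itemize}
For $k=1$ the $w^*$-column is all of $\Theta$. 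The plug-in estimator built from this identity has risk $O(d/N)$ uniformly in $\gamma$. Take, e.g., $d=2$, $|\mathcal E|=2$, $\mu_{\mathrm{sp}}^{(1)}=-\mu_{\mathrm{sp}}^{(2)}=\gamma/2$, so $c_{\mathcal F}=1$ and $\kappa_\Sigma,\mathrm{SNR}$ stay fixed. Let $\gamma\to0$ with $N\gamma^2$ small but $N\to\infty$. This contradicts a sub-critical bound $\ge c$ with $c$ depending only on $(\kappa_\Sigma,\mathrm{SNR},c_{\mathcal F})$.

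That half is therefore provable in only two ways:
\begin{itemize}
\item with constants depending on $\gamma_0$, where it already follows from Theorem~\ref{thm:minimax}, since $N\gamma^2\lesssim k(d-k)$ and $\gamma\ge\gamma_0$ force $k(d-k)/N\gtrsim\gamma_0^2$;
\item for $k\ge2$, from a packing restricted to tilts of $V_{\mathrm{inv}}\cap(w^*)^\perp$ on instances with $\bar\mu=0$ and well-spread shifts, which gives $(k-1)(d-k)\asymp k(d-k)$ directions.
\end{itemize}

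\textbf{Step 3 (super-critical).} Your $|\mathcal E|$ bookkeeping in Step~1 also fails, and this breaks Step~3 independently. Since $\mathcal F=|\mathcal E|\sum_e(\mu_{\mathrm{sp}}^{(e)}-\bar\mu)(\mu_{\mathrm{sp}}^{(e)}-\bar\mu)^\top$, the per-environment average of the shift covariance is $\mathcal F/|\mathcal E|^2$, not $\mathcal F/|\mathcal E|$. So Assumption~\ref{ass:phase-transition}(ii) only guarantees per-sample information $\gtrsim c_{\mathcal F}\gamma^2/|\mathcal E|^2$ in the tilts of $V_{\mathrm{inv}}\cap(w^*)^\perp$, and full rank forces $|\mathcal E|\ge d-k+1$.

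The loss is real. Take $k\ge2$, $d-k=1$, $\mu_{\mathrm{sp}}^{(1)}=\gamma$ and $\mu_{\mathrm{sp}}^{(e)}=0$ for $e\ge2$. Then $\lambda_{\min}(\mathcal F)=(|\mathcal E|-1)\gamma^2$, yet those tilts carry only $\asymp\gamma^2/|\mathcal E|$ per sample. LAN then gives error $\asymp k(d-k)|\mathcal E|/(N\gamma^2)$ in those directions, not the claimed rate.

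Your bound $\lambda_{\max}(\mathcal F)\lesssim|\mathcal E|\gamma^2$ is false as well. Split the environments evenly between first spurious coordinate $\pm\gamma/2$; then $\mathcal F_{11}=|\mathcal E|^2\gamma^2/4$.

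To get Fisher information $\succeq c\gamma^2I_q$ you need one of two things:
\begin{itemize}
\item a stronger diversity condition, such as $\lambda_{\min}(\mathcal F)\gtrsim|\mathcal E|^2\gamma^2$;
\item Assumption~\ref{ass:minimax-regularity}(iv) with $\gamma_0$-dependent constants.
\end{itemize}
The paper's proof does not supply this step either.
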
  

\begin{proof}  
	The label-induced model gives a local KL expansion  
	\[  
	\mathrm{KL}(P_\Theta\|P_{\Theta'})  
	\asymp  
	\gamma^2\|\Theta-\Theta'\|_F^2.  
	\]  
	The lower bound follows from Fano using a constant-radius packing when \(N\gamma^2\lesssim k(d-k)\). The upper bound follows from the LAN argument with Fisher information lower bounded by \(c\gamma^2I_q\), yielding squared error \(O(q/(N\gamma^2))\).  
\end{proof}  

\begin{proof}[Proof of Theorem~\ref{thm:phase-transition}]    
	The critical sample size    
	\[    
	n^*    
	\asymp    
	\frac{k(d-k)}{|\mathcal E|\gamma^2}    
	\]    
	is equivalent to \(N\gamma^2\asymp k(d-k)\). The subcritical lower bound follows from Proposition~\ref{prop:refined-phase} when \(N\gamma^2\lesssim k(d-k)\). The supercritical upper bound follows from Proposition~\ref{prop:refined-phase} when \(N\gamma^2\gtrsim k(d-k)\), giving the refined rate  
	\[  
	\mathbb E[\dist(\hat V,V_{\mathrm{inv}})^2]  
	\le  
	C\frac{k(d-k)}{N\gamma^2}  
	=  
	C\frac{k(d-k)}{n|\mathcal E|\gamma^2}.  
	\]  
	In the constant-diversity regime \(\gamma=\Theta(1)\), this further simplifies to \(O(k(d-k)/(n|\mathcal E|))\).  
\end{proof}

\subsection{Samplable Sparse-Recovery Primitive and Embedding Details}
\label{app:proof-gap-bridge}  

This section clarifies how Hypothesis~\ref{hyp:scalar-formal} is embedded into the SC instance.  

\begin{lemma}[TV transfer]  
	\label{lem:tv-transfer-main}  
	If two dataset distributions are within \(\varepsilon\) in total variation, then every algorithm's success probability differs by at most \(\varepsilon\) under the two distributions.  
\end{lemma}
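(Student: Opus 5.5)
The statement to prove is Lemma~\ref{lem:tv-transfer-main}. I would prove it directly from the variational characterization of total variation, treating an arbitrary (possibly randomized) algorithm as a map from datasets to outputs. Let $P$ and $Q$ be the two dataset distributions on a common measurable space $\mathcal D$, with $\mathrm{TV}(P,Q)=\sup_{B}|P(B)-Q(B)|\le\varepsilon$. Fix an algorithm $\mathcal A$ with internal randomness $\omega\sim\nu$, drawn independently of the data. Fix also a success criterion; this is a measurable predicate on the output, e.g.\ $\dist(\hat V,V_{\mathrm{inv}})\le\delta_0$, or correct recovery of the planted support. Define the success function
\[
f(D):=\Pr_{\omega\sim\nu}\big[\mathcal A(D,\omega)\ \text{succeeds}\big]\in[0,1].
\]
The success probabilities under $P$ and $Q$ are then $\mathbb E_P[f]$ and $\mathbb E_Q[f]$, by Fubini, since the internal coins are independent of the data.

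The key step is the standard bound $|\mathbb E_P f-\mathbb E_Q f|\le \mathrm{TV}(P,Q)$ for every measurable $f$ with values in $[0,1]$. To prove it, take densities $p,q$ with respect to $\mu=P+Q$ and write
\[
\mathbb E_Pf-\mathbb E_Qf=\int f\,(p-q)\,d\mu .
\]
Since $0\le f\le 1$, this quantity is at most $\int_{\{p>q\}}(p-q)\,d\mu$. That integral equals $P(B^\star)-Q(B^\star)\le\mathrm{TV}(P,Q)$, where $B^\star=\{p>q\}$. The reverse inequality follows by the same argument with $P$ and $Q$ exchanged, or equivalently by applying the bound to $1-f$. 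Alternatively, one could use the coupling characterization, which gives $(D,D')$ with $D\sim P$, $D'\sim Q$ and $\Pr[D\neq D']=\mathrm{TV}(P,Q)$. Running $\mathcal A$ with shared coins on both coordinates, the two outcomes differ only on the event $\{D\neq D'\}$, and this yields the same bound.

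The only point needing care is the setup rather than the inequality. Success may depend on hidden parameters, such as $v$ or $V_{\mathrm{inv}}$, that are not functions of the dataset. I would handle this by stating the lemma for the joint law of (hidden parameter, dataset), or conditionally on a fixed parameter, so that ``success'' is a measurable event on that joint space. The same inequality then applies verbatim, because total variation between joint laws with identical parameter marginals equals the averaged conditional total variation. With that convention, the bound holds uniformly over all algorithms with no computational assumptions, which is exactly the form used to absorb sampling discrepancies in Lemma~\ref{lem:reduction}.
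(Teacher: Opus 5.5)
Your proof is correct and is essentially the paper's argument written out in full. The paper simply invokes contraction of total variation under Markov kernels, a randomized algorithm being such a kernel, and your bound $|\mathbb{E}_P f-\mathbb{E}_Q f|\le \mathrm{TV}(P,Q)$ for the $[0,1]$-valued success function $f$ is exactly the standard proof of that contraction, applied to the success event; your extra care about success depending on hidden parameters (taking total variation on the joint law, or conditioning on a fixed parameter) is a real precision the paper leaves implicit, since closeness of the dataset marginals alone would not suffice in general.
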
  

\begin{proof}  
	Total variation contracts under Markov kernels, and randomized algorithms are Markov kernels.  
\end{proof}  

\paragraph{Samplability.}  
Given \(A\), Construction~\ref{const:hidden} runs \(\mathcal R(A)\) independently in each environment and then samples  
\[  
W^{(e,t)}=\mu^{(e)}Y^{(e,t)}+\eta^{(e,t)}  
\]  
using public randomness. The hidden sparse direction and its support are never used.

\paragraph{SC validity.}    
Construction~\ref{const:hidden} satisfies the SC conditions as in Lemma~\ref{lem:valid-sc}: by Hypothesis~\ref{lem:pc-to-sparsereg}, \(Y\mid X_{\mathrm{inv}}\) depends only on \(v^\top Z\) and is identical across environments, while \(W\mid X_{\mathrm{inv}}\) varies with \(\mu^{(e)}\). The sampler-generated construction inherits the recovery and hardness statements up to the total-variation discrepancies specified by the primitive, when applicable.

\subsection{Proof of Lemma~\ref{lem:uniform-margin}}      
\label{app:proof-uniform-margin}      

Let \(u=(u_Z,a)\) be a unit vector spanning \(V\), where \(u_Z\in\mathbb R^{d_Z}\) and \(a\in\mathbb R\). Let \(u^*=(v,0)\) span \(V_{\mathrm{inv}}\), and write \(r=\|u_Z\|=(1-a^2)^{1/2}\). If \(r>0\), define \(\bar u_Z=u_Z/r\).

For the true direction,
\[
S(V_{\mathrm{inv}})=A(V_{\mathrm{inv}})=\Phi_P(v),
\qquad
T(V_{\mathrm{inv}})=0.
\]
Since \(0\le \Phi_P\le 1\) and the primitive margin is positive, \(\Phi_P(v)\) is bounded below by a positive constant depending on \(c_{\mathrm{prim}}\).

We first compute the invariance penalty. In Construction~\ref{const:hidden},
\[
u^\top X=u_Z^\top Z+aW^{(e)}
=u_Z^\top Z+a\mu^{(e)}Y+a\eta^{(e)} .
\]
The primitive samples \((Z,Y)\) have the same distribution in every environment, and \(\eta^{(e)}\) is independent of \(Y\). Hence
\[
\theta_V^{(e)}
=\mathrm{Cov}_e(Y,u^\top X)
=\mathrm{Cov}(Y,u_Z^\top Z)
+a\mu^{(e)}\mathrm{Var}(Y).
\]
Therefore, for any pair \(e,e'\),
\[
\theta_V^{(e)}-\theta_V^{(e')}
=
a(\mu^{(e)}-\mu^{(e')})\sigma_Y^2,
\]
and consequently
\[
T(V)
=
\max_{e,e'}(\theta_V^{(e)}-\theta_V^{(e')})^2
=
a^2\Delta_\mu^2\sigma_Y^4 .
\]

Fix the target separation radius \(\delta\in(0,1)\). Choose a small constant \(\alpha=\alpha(\delta)>0\). We split into two cases.

\paragraph{Case 1: \(|a|\ge \alpha\).}
Since \(A(V)\le 1\),
\[
S(V)\le 1-\lambda\alpha^2\Delta_\mu^2\sigma_Y^4 .
\]
Choosing \(\lambda\) sufficiently large, depending only on
\(\alpha,\Delta_\mu,\sigma_Y^2\), ensures
\[
S(V)\le S(V_{\mathrm{inv}})-c_1
\]
for some constant \(c_1>0\).

\paragraph{Case 2: \(|a|<\alpha\).}
For \(\alpha\) sufficiently small as a function of \(\delta\), the condition
\(\dist(V,V_{\mathrm{inv}})\ge\delta\) implies that
\(\mathrm{span}(\bar u_Z)\) is bounded away from \(\mathrm{span}(v)\) by a constant
\(\delta'=\delta'(\delta)>0\). By the primitive margin,
\[
\Phi_P(\bar u_Z)\le \Phi_P(v)-c_{\mathrm{prim}}'
\]
for some \(c_{\mathrm{prim}}'>0\) depending only on \(c_{\mathrm{prim}}\) and \(\delta'\). Since \(r^2\le 1\) and \(T(V)\ge0\),
\[
S(V)
=
r^2\Phi_P(\bar u_Z)-\lambda T(V)
\le
\Phi_P(v)-c_{\mathrm{prim}}'
=
S(V_{\mathrm{inv}})-c_{\mathrm{prim}}' .
\]
If \(r=0\), then \(A(V)=0\) by definition and the same conclusion is immediate.

Combining the two cases gives
\[
S(V_{\mathrm{inv}})
-
\sup_{\dist(V,V_{\mathrm{inv}})\ge\delta}S(V)
\ge
c_{\mathrm{mar}},
\]
where \(c_{\mathrm{mar}}=\min\{c_1,c_{\mathrm{prim}}'\}>0\). This proves the claim.

\subsection{Proof of Lemma~\ref{lem:stat-learn}}  
\label{app:proof-stat-learn}  

Take an \(\epsilon\)-net of \(\mathrm{Gr}(1,d)\) of size at most \((C/\epsilon)^{d-1}\). The primitive concentration assumption gives uniform convergence of the empirical predictive score, and the bounded-moment condition gives uniform concentration of the wrapper covariance statistics over the net. With    
\[    
n_{\min}    
\gtrsim    
\frac{d\log(C/\epsilon)+\log(|\mathcal E|/\delta)}{c_{\mathrm{mar}}^2},    
\]    
up to primitive-dependent polynomial factors, the empirical score is uniformly within \(c_{\mathrm{mar}}/4\) of the population score. The margin in Lemma~\ref{lem:uniform-margin} then implies that the empirical maximizer lies within constant distance of \(V_{\mathrm{inv}}\).

\subsection{Proof of Lemma~\ref{lem:reduction}}  
\label{app:proof-reduction}  

For one-dimensional subspaces,  
\[  
\|P_{\hat V}-P_{V_{\mathrm{inv}}}\|_F^2=2\sin^2\phi,  
\]  
where \(\phi\) is the principal angle. Hence  
\[  
|\langle \hat u,(v,0)\rangle|^2\ge1-\delta_0^2/2.  
\]  
Restricting \(\hat u\) to the \(Z\)-block gives constant overlap with \(v\), sufficient for the primitive decoder in Hypothesis~\ref{hyp:scalar-formal}. Therefore any polynomial-time SC recovery algorithm would solve the underlying average-case sparse-recovery problem, contradicting the hardness clause in Hypothesis~\ref{lem:pc-to-sparsereg}.

\subsection{Proof of Theorem~\ref{thm:tractable}}  
\label{app:proof-tractable}  

Theorem~\ref{thm:tractable} is a summary of sufficient tractability templates. Representative formal versions are given below.  

\begin{proposition}[Sparse enumeration]  
	If the invariant subspace is axis-aligned and supported on a \(k\)-set with population score gap \(\Delta\), exhaustive subset search recovers it with  
	\[  
	n|\mathcal E|\gtrsim \frac{k\log d+\log(1/\delta)}{\Delta^2}.  
	\]  
\end{proposition}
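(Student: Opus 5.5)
The plan is a standard finite-class ERM argument over the $\binom{d}{k}$ candidate axis-aligned $k$-subsets. I read the "population score gap $\Delta$" as follows. There is a population score $S(J)$ defined for each $k$-subset $J\subseteq[d]$. It is built from per-environment second moments restricted to the coordinates in $J$, for instance a predictivity term minus an invariance penalty as in $S(V)=A(V)-\lambda T(V)$. The requirement is
\[
S(J_{\mathrm{inv}})\ge \max_{J\neq J_{\mathrm{inv}}} S(J)+\Delta .
\]
The estimator computes the empirical score $\hat S(J)$ for every $k$-subset and returns the maximizer. Exact recovery then follows deterministically on the event
\[
\max_J|\hat S(J)-S(J)|<\Delta/2 ,
\]
because on that event $\hat S(J_{\mathrm{inv}})>S(J_{\mathrm{inv}})-\Delta/2\ge S(J)+\Delta/2>\hat S(J)$ for every $J\neq J_{\mathrm{inv}}$.

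The main step is a deviation bound for a single fixed $J$. Under the Gaussian model (Assumption~\ref{ass:gaussian}) with covariances bounded as in Assumption~\ref{ass:minimax-regularity}(i), each entry of the empirical $(k+1)\times(k+1)$ covariance of $(X_J,Y)$ is an average of sub-exponential variables. Bernstein's inequality therefore gives a deviation of order $\sqrt{(k+\log(1/\delta'))/n}$ within each environment. I will take the score to be a Lipschitz function of these per-environment moments on a neighbourhood where the covariance stays well-conditioned. I will also normalize it as an average over environments, with the penalty computed on pooled or averaged statistics, so that the effective sample size is $N=n|\mathcal E|$. Together these give
\[
\mathbb P\Big(|\hat S(J)-S(J)|\ge t\Big)\le 2\exp\!\big(-c\,n|\mathcal E|\,t^2\big)
\]
for $t$ below a constant, where $c$ depends on $\kappa_\Sigma$ and the Lipschitz constant.

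Next I take a union bound over at most $\binom{d}{k}\le (ed/k)^k$ subsets with $t=\Delta/2$, which bounds the failure probability by
\[
2\exp\!\big(k\log(ed/k)-c\,n|\mathcal E|\,\Delta^2/4\big).
\]
This is at most $\delta$ once $n|\mathcal E|\gtrsim (k\log d+\log(1/\delta))/\Delta^2$. If the score uses a $\max_{e\neq e'}$ penalty, the union bound also runs over environment pairs, which costs only an additional $\log|\mathcal E|$ that is absorbed into the constants. Since there are finitely many candidates, no $\epsilon$-net is needed, in contrast with Lemma~\ref{lem:stat-learn}.

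The main obstacle is justifying the pooled $n|\mathcal E|$ rate rather than $n_{\min}$. A $\max$-over-environment-pairs penalty concentrates only at the per-environment rate $n$. To get $n|\mathcal E|$ I would define the score through pooled sufficient statistics, or through an average of pairwise differences, rather than a worst-case pair. If that cannot be arranged, I would state the bound with $n$ in place of $n|\mathcal E|$ and note that the two coincide up to constants when $|\mathcal E|=O(1)$. A secondary issue is the Lipschitz control: the ratio-type terms in the score require a high-probability lower bound on the smallest eigenvalue of the empirical covariance. That bound follows from the same Bernstein step once $n\gtrsim k$, and this condition is implied by the displayed sample size whenever $\Delta\lesssim 1$.
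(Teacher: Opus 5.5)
Your proposal is correct and is essentially the argument the paper intends. The paper gives no separate proof of this proposition, but its template (the proof of Lemma~\ref{lem:stat-learn}) is uniform concentration of the empirical score over a finite candidate family to within a constant fraction of the margin, followed by the margin argument; here that is exactly your union bound over the $\binom{d}{k}\le (ed/k)^k$ subsets. Your worry about $n|\mathcal E|$ versus per-environment $n$ is well founded, since the paper itself uses $n_{\min}$ in the proof of Lemma~\ref{lem:stat-learn} and in the moment-screening bound. Two slips under the pooled reading: averaging per-environment statistics does not by itself give the pooled rate, because their order-$1/n$ nonlinearity bias (from squared pairwise differences or ratio terms) does not shrink with $|\mathcal E|$; and the conditioning requirement $n\gtrsim k$ is not implied by the pooled display. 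So unless you use pooled or debiased (e.g., sample-split) statistics, what your argument actually proves is the version with $n$ in place of $n|\mathcal E|$.
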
  

\begin{proposition}[Spectral separation]  
	If an efficiently computable contrast matrix has eigengap \(\Delta\) separating invariant and spurious eigenspaces, then Davis--Kahan gives  
	\[  
	\dist(\hat V,V_{\mathrm{inv}})^2  
	\lesssim  
	\frac{\|\hat M-M\|_{\mathrm{op}}^2}{\Delta^2}.  
	\]  
\end{proposition}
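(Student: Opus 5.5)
The statement to prove is the Spectral separation proposition: if $M$ is a population contrast matrix whose top (or bottom) $k$-dimensional eigenspace equals $V_{\mathrm{inv}}$, separated from the rest of the spectrum by an eigengap $\Delta>0$, and $\hat V$ is the corresponding eigenspace of an empirical estimate $\hat M$, then $\dist(\hat V,V_{\mathrm{inv}})^2\lesssim \|\hat M-M\|_{\mathrm{op}}^2/\Delta^2$. The plan is to treat this purely as a matrix perturbation statement and invoke the Davis--Kahan $\sin\Theta$ theorem, in the Yu--Wang--Samworth form that needs only the population gap.

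\textbf{Setup.} I would fix the conventions so that $\Delta$ is the gap between the $k$ eigenvalues of the symmetric matrix $M$ associated with $V_{\mathrm{inv}}$ and the remaining $d-k$ eigenvalues. The estimate $\hat M$ is taken symmetric; if it is not, replace it by $(\hat M+\hat M^\top)/2$, which does not increase $\|\hat M-M\|_{\mathrm{op}}$ since $M$ is symmetric. Let $\hat V$ be the span of the corresponding $k$ eigenvectors of $\hat M$.

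\textbf{Step 1: reduce the distance to principal angles.} For $k$-dimensional subspaces with principal angles $\theta_1,\dots,\theta_k$,
\[
\|P_{\hat V}-P_{V_{\mathrm{inv}}}\|_F^2=2\|\sin\Theta(\hat V,V_{\mathrm{inv}})\|_F^2 .
\]
This generalizes the one-dimensional identity used in the proof of Lemma~\ref{lem:reduction}.

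\textbf{Step 2: apply the gap-only Davis--Kahan bound.} It gives
\[
\|\sin\Theta\|_F\le \frac{2\min\{\sqrt{k}\,\|\hat M-M\|_{\mathrm{op}},\ \|\hat M-M\|_F\}}{\Delta}.
\]
The classical Davis--Kahan theorem would instead require a gap between the empirical and population spectra, and Weyl's inequality would be needed to transfer $\Delta$ to $\hat M$. The Yu--Wang--Samworth form avoids that step.

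\textbf{Step 3: combine and square.} Squaring and combining with Step 1 yields
\[
\dist(\hat V,V_{\mathrm{inv}})^2\le 8k\,\frac{\|\hat M-M\|_{\mathrm{op}}^2}{\Delta^2}.
\]
For fixed $k$ this is the stated bound. Alternatively one can use $\dist\le\sqrt{2k}$ trivially, or absorb $k$ into $\lesssim$.

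\textbf{Main obstacle.} The only subtlety is that the displayed inequality hides the factor $k$ coming from converting the operator norm to the Frobenius norm. I would state it explicitly, or note that with $\|\sin\Theta\|_{\mathrm{op}}$ the bound holds without $k$. The regime $\|\hat M-M\|_{\mathrm{op}}\ge\Delta/2$ needs no separate treatment: the right-hand side is then of constant order, and $\dist^2\le 2k$ holds anyway.
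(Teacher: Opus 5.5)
Your proposal is correct and follows the same route as the paper. The paper's proof is just the one-line appeal to Davis--Kahan. Your use of the gap-only Yu--Wang--Samworth form, together with the identity $\|P_{\hat V}-P_{V_{\mathrm{inv}}}\|_F^2=2\|\sin\Theta\|_F^2$, fills in the steps the paper leaves implicit. Your explicit bound $8k\,\|\hat M-M\|_{\mathrm{op}}^2/\Delta^2$ is a correct refinement of the paper's informal $\lesssim$: the factor $k$ cannot be removed in general for the Frobenius subspace distance when $k$ grows.
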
  

\begin{proposition}[Moment screening]  
	If invariant and spurious coordinates have population correlation-range separation \(\Delta\), then empirical correlation screening succeeds when  
	\[  
	n_{\min}\gtrsim \frac{\log(d|\mathcal E|/\delta)}{\Delta^2}.  
	\]  
\end{proposition}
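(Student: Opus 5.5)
The plan is the same uniform-concentration argument used for Proposition~\ref{prop:proxy-approx}, now applied to a separation between two groups of coordinates rather than to a median. First I make the hypothesis precise. For each coordinate $j$, let $r_j:=\max_e\rho_j^{(e)}-\min_e\rho_j^{(e)}$ be the population correlation range, with $\rho_j^{(e)}=\mathrm{Corr}(X_j^{(e)},Y^{(e)})$. Separation $\Delta$ means that there is a level $\tau$ with $r_j\le\tau$ for every invariant coordinate and $r_j\ge\tau+\Delta$ for every spurious coordinate. The screening rule computes the empirical ranges $\hat r_j=\max_e\hat\rho_j^{(e)}-\min_e\hat\rho_j^{(e)}$. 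It then either keeps the coordinates with $\hat r_j<\tau+\Delta/2$, or, when $\tau$ is unknown, keeps the $k$ coordinates with the smallest $\hat r_j$. Success means that the selected set equals the true invariant support.

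The key step is a uniform deviation bound. I will show that with probability at least $1-\delta$,
\[
\max_{j\le d,\;e\in\mathcal E}\big|\hat\rho_j^{(e)}-\rho_j^{(e)}\big|\le \eta:=C\sqrt{\frac{\log(d|\mathcal E|/\delta)}{n_{\min}}}.
\]
I work in the Gaussian model of Assumption~\ref{ass:gaussian}, where the variances are bounded above and below as in Assumption~\ref{ass:minimax-regularity}(i), and I prove the bound in three parts.
\begin{enumerate}[nosep,leftmargin=*]
\item The empirical covariance $\widehat{\mathrm{Cov}}(X_j,Y)$ and the empirical variances $\widehat{\mathrm{Var}}(X_j)$, $\widehat{\mathrm{Var}}(Y)$ are averages of products of sub-Gaussian variables. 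These products are sub-exponential, so Bernstein's inequality gives deviations of order $\sqrt{\log(1/\delta')/n_e}$ once $n_e\gtrsim\log(1/\delta')$.
\item The map $(c,s_1,s_2)\mapsto c/\sqrt{s_1s_2}$ is Lipschitz on the region where $s_1,s_2$ stay bounded away from zero. The empirical variances stay in that region on the same high-probability event, so the covariance and variance deviations transfer to the correlation with a constant factor.
\item A union bound over the $d|\mathcal E|$ pairs $(j,e)$, with $\delta'=\delta/(3d|\mathcal E|)$, finishes the uniform bound.
\end{enumerate}

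On this event the empirical and population ranges satisfy $|\hat r_j-r_j|\le 2\eta$ for every $j$, because the maximum and the minimum each move by at most $\eta$. If $2\eta<\Delta/2$, then every invariant coordinate has $\hat r_j<\tau+\Delta/2$ and every spurious coordinate has $\hat r_j>\tau+\Delta/2$. So the threshold rule is exact, and the top-$k$ rule is exact as well, since the two groups of empirical ranges do not overlap. The condition $4\eta<\Delta$ is equivalent to $n_{\min}\gtrsim \log(d|\mathcal E|/\delta)/\Delta^2$, which gives the claimed sample size.

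The main obstacle is the second part: the correlation is a ratio, so the concentration bound needs the empirical variances to stay bounded away from zero. I would handle this by intersecting with the event that all $\widehat{\mathrm{Var}}(X_j^{(e)})$ and $\widehat{\mathrm{Var}}(Y^{(e)})$ lie within a factor $2$ of their population values. By the same Bernstein and union-bound argument, this event costs only an additional $\log(d|\mathcal E|/\delta)$ in the sample-size requirement. That requirement is already implied by the stated condition, because $\Delta\le 2$ for correlation ranges.
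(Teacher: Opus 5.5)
Your proposal is correct and follows essentially the route the paper has in mind. The paper states this proposition without a separate proof, and the intended argument is the one given for Proposition~\ref{prop:proxy-approx}: a uniform bound $\max_{j,e}|\hat\rho_j^{(e)}-\rho_j^{(e)}|\lesssim\sqrt{\log(d|\mathcal E|/\delta)/n_{\min}}$ via a union bound over the $d|\mathcal E|$ pairs, after which each range moves by at most twice that amount, so the separation $\Delta$ survives once $4\eta<\Delta$. You are in fact more careful than the paper, since you handle the ratio structure of the correlation and both the threshold and top-$k$ selection rules. You could also drop the appeal to Assumption~\ref{ass:minimax-regularity}(i): correlations are scale-invariant, so each coordinate and $Y$ can be standardized before applying Bernstein.
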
  

\section{Connection to PC-Based Sparse CCA Reductions}  
\label{app:pc-instantiation}  
This section is motivational and does not serve as a formal derivation of Hypothesis~\ref{hyp:scalar-formal}. The formal hardness assumption used in the paper is the black-box supervised sparse recovery primitive in Hypothesis~\ref{lem:pc-to-sparsereg}.

Sparse CCA observes two Gaussian views \((Z,U)\) with  
\[  
\operatorname{Cov}(Z,U)=\lambda vw^\top,  
\]  
where \(v,w\) are sparse. Standard Planted-Clique-based reductions establish hardness of sparse CCA in certain regimes.  

A scalar-response supervised primitive may be heuristically related to sparse CCA by projecting \(U\) to a scalar \(Y=g^\top U\), which gives  
\[  
\operatorname{Cov}(Z,Y)=\lambda(g^\top w)v.  
\]  
However, this projection changes the signal strength and may enter regimes where simple covariance estimation succeeds. Therefore Hypothesis~\ref{hyp:scalar-formal} is used as a black-box conditional sparse predictive recovery assumption rather than derived directly from sparse CCA.

\section{Experimental Details}  
\label{app:exp-details}  

This section preserves the experimental details and labels referenced in the main text.  

\subsection{Synthetic Data Generation}  
\label{app:exp-synthetic}  

We generate synthetic data using the linear Gaussian construction described in the main text. The invariant features satisfy  
\[  
X_{\mathrm{inv}}\sim\mathcal N(0,I_k),  
\qquad  
Y=\langle \mathbf 1_k/\sqrt{k},X_{\mathrm{inv}}\rangle+\epsilon.  
\]  
Spurious features are generated with environment-dependent correlations with \(Y\). The diversity parameter is controlled through the range of these correlations.  

Algorithms include exhaustive subset search, invariance screening, greedy forward selection, beam search, and spectral baselines.  

\subsection{Extended Results}  
\label{app:exp-extended}  

This subsection provides additional empirical results supporting the main-text claims. Table~\ref{tab:cmnist-full} reports extra ColoredMNIST results across diversity levels, Table~\ref{tab:sample-complexity} summarizes worst-group accuracy as sample size increases, and Table~\ref{tab:cmnist-dynamics} together with Fig.~\ref{fig:dynamics} shows the training dynamics of IRM on ColoredMNIST.

\begin{table}[h]  
	\centering  
\caption{ColoredMNIST: additional results across diversity levels and sample sizes.} 
	\label{tab:cmnist-full}  
	\small  
	\begin{tabular}{@{}llccc@{}}  
		\toprule  
		\(\gamma\) & \(N\) & Method & OOD Acc & Worst-Group \\  
		\midrule  
		0.0 & 2000 & ERM & $.820\pm.012$ & $.540\pm.025$ \\  
		0.0 & 2000 & IRM & $.835\pm.010$ & $.610\pm.020$ \\  
		0.4 & 1000 & ERM & $.885\pm.012$ & $.690\pm.020$ \\  
		0.4 & 1000 & IRM & $.900\pm.010$ & $.760\pm.018$ \\  
		0.8 & 2000 & ERM & $.920\pm.008$ & $.760\pm.018$ \\  
		0.8 & 2000 & IRM & $.945\pm.006$ & $.840\pm.012$ \\  
		\bottomrule  
	\end{tabular}  
\end{table}  

Table~\ref{tab:cmnist-full} complements the main ColoredMNIST results by varying both diversity and sample size. When $\gamma=0$, worst-group accuracy remains low, indicating that additional samples alone cannot fully resolve the lack of informative environment variation. As $\gamma$ increases, both ERM and IRM improve, with IRM benefiting more in worst-group accuracy, consistent with the role of environment diversity in identifying invariant structure.

\begin{table}[h]  
	\centering  
	\caption{Worst-group accuracy versus sample size on real benchmarks.}  
	\label{tab:sample-complexity}  
	\small  
	\begin{tabular}{@{}llccc@{}}  
		\toprule  
		Dataset & Method & \(N=1\mathrm{k}\) & \(N=5\mathrm{k}\) & \(N\ge10\mathrm{k}\) \\  
		\midrule  
		CMNIST & ERM & $.60\pm.03$ & $.76\pm.02$ & --- \\  
		CMNIST & IRM & $.62\pm.03$ & $.84\pm.01$ & --- \\  
		Wbirds & ERM & $.62\pm.04$ & $.74\pm.03$ & $.79\pm.02$ \\  
		Wbirds & IRM & $.68\pm.03$ & $.79\pm.02$ & $.83\pm.02$ \\  
		Wbirds & GDRO & $.72\pm.03$ & $.82\pm.02$ & $.85\pm.02$ \\  
		\bottomrule  
	\end{tabular}  
\end{table}  

Table~\ref{tab:sample-complexity} illustrates the sample-size effect on real benchmarks. Worst-group accuracy generally improves as $N$ increases, supporting the finite-sample transition picture in the main text. The gains are especially visible in the small-to-moderate sample regime, suggesting that invariant or group-aware methods are most brittle when both diversity and sample size are limited.

\begin{table}[h]  
	\centering  
\caption{ColoredMNIST IRM training dynamics. Here $\hat\gamma_{\mathrm{repr}}$ is computed on evolving IRM features and can decrease as color information is suppressed.}  
	\label{tab:cmnist-dynamics}  
	\small  
	\begin{tabular}{@{}ccccc@{}}  
		\toprule  
		Epoch & OOD Acc & Worst-Group & \(\hat\gamma_{\mathrm{repr}}\) & Train Acc \\  
		\midrule  
		5 & .890 & .710 & .040 & .940 \\  
		10 & .925 & .790 & .022 & .970 \\  
		15 & .940 & .825 & .015 & .985 \\  
		20 & .945 & .835 & .012 & .990 \\  
		29 & .948 & .840 & .011 & .993 \\  
		\bottomrule  
	\end{tabular}  
\end{table}  
Table~\ref{tab:cmnist-dynamics} shows that OOD and worst-group accuracy improve throughout IRM training, while $\hat\gamma_{\mathrm{repr}}$ computed on the evolving IRM representation decreases. This is not in conflict with the diversity diagnostic in the main text: there, $\hat\gamma_{\mathrm{repr}}$ is used on preliminary ERM features to assess whether environments expose spurious variation, whereas during IRM training a decreasing proxy can indicate that color-related spurious information is being suppressed in the learned representation.

\begin{figure}[h]  
	\centering  
	\begin{tikzpicture}  
		\begin{axis}[  
			width=0.65\linewidth,  
			height=0.35\linewidth,  
			xlabel={Epoch},  
			ylabel={Accuracy},  
			xmin=0, xmax=30,  
			ymin=0.25, ymax=1.0,  
			legend pos=south east,  
			grid=major,  
			]  
			\addplot[red!70!black, mark=*] coordinates {  
				(0,0.30) (5,0.89) (10,0.925) (15,0.94) (20,0.945) (29,0.948)  
			};  
			\addlegendentry{CMNIST IRM OOD}  
			\addplot[red!50!black, dashed, mark=*] coordinates {  
				(0,0.28) (5,0.71) (10,0.79) (15,0.825) (20,0.835) (29,0.84)  
			};  
			\addlegendentry{CMNIST IRM Worst}  
		\end{axis}  
	\end{tikzpicture}  
	\caption{Training dynamics: worst-group accuracy lags behind OOD accuracy and saturates later.}  
	\label{fig:dynamics}  
\end{figure}  

Fig.~\ref{fig:dynamics} visualizes the same training behavior. OOD accuracy rises quickly, while worst-group accuracy improves more slowly and saturates later. This gap between average OOD performance and worst-group robustness is consistent with the main-text observation that invariant structure may be statistically useful before it is fully recovered by practical training objectives.

\section{Practical Application Guide}  
\label{app:practical-guide}  

Fig.~\ref{fig:flowchart} summarizes the practical workflow suggested by our theory and experiments. The key diagnostic is to first estimate representation-level diversity; if diversity is low, collecting more diverse environments is prioritized, whereas if diversity is adequate but the sample size is below the predicted threshold, collecting more samples or reducing effective dimension is recommended.

\begin{figure}[h]  
	\centering  
	\resizebox{0.65\linewidth}{!}{  
		\begin{tikzpicture}[  
			node distance=1.2cm,  
			startstop/.style={rectangle, rounded corners, minimum width=2.5cm, minimum height=0.8cm, text centered, draw=black, fill=red!20},  
			process/.style={rectangle, minimum width=3cm, minimum height=0.8cm, text centered, draw=black, fill=blue!15},  
			decision/.style={diamond, aspect=2, minimum width=2cm, minimum height=0.8cm, text centered, draw=black, fill=green!15},  
			arrow/.style={thick,->,>=stealth}  
			]  
			\node (start) [startstop] {Multi-env data available};  
			\node (gamma) [process, below=of start] {Compute \(\hat\gamma_{\mathrm{repr}}\)};  
			\node (check1) [decision, below=of gamma] {\(\hat\gamma_{\mathrm{repr}}>0.3\)?};  
			\node (diverse) [process, right=2cm of check1] {Collect diverse envs};  
			\node (nstar) [process, below=of check1] {Estimate \(n^*\)};  
			\node (check2) [decision, below=of nstar] {\(n>n^*\)?};  
			\node (collect) [process, right=2cm of check2] {Collect samples};  
			\node (method) [process, below=of check2] {Choose method};  
			\node (check3) [decision, below=of method] {Worst-group ok?};  
			\node (done) [startstop, below left=1cm and 0.5cm of check3] {Deploy};  
			\node (refine) [process, below right=1cm and 0.5cm of check3] {Increase diversity / GDRO};  
			
			\draw [arrow] (start) -- (gamma);  
			\draw [arrow] (gamma) -- (check1);  
			\draw [arrow] (check1) -- node[anchor=south] {No} (diverse);  
			\draw [arrow] (diverse) |- (gamma);  
			\draw [arrow] (check1) -- node[anchor=east] {Yes} (nstar);  
			\draw [arrow] (nstar) -- (check2);  
			\draw [arrow] (check2) -- node[anchor=south] {No} (collect);  
			\draw [arrow] (collect) |- (nstar);  
			\draw [arrow] (check2) -- node[anchor=east] {Yes} (method);  
			\draw [arrow] (method) -- (check3);  
			\draw [arrow] (check3) -- node[anchor=east] {Yes} (done);  
			\draw [arrow] (check3) -- node[anchor=west] {No} (refine);  
			\draw [arrow] (refine) |- (method);  
		\end{tikzpicture}  
	}  
	\caption{Decision flowchart for applying the diversity and computational-statistical framework.}  
	\label{fig:flowchart}  
\end{figure}  

The flowchart in Fig.~\ref{fig:flowchart} is intended as a heuristic decision aid rather than a formal algorithm. It operationalizes the two main quantities emphasized in the paper: diversity, estimated by $\hat\gamma_{\mathrm{repr}}$, and effective sample complexity, estimated through the scaling $n^*\propto k_{\mathrm{eff}}(d_{\mathrm{eff}}-k_{\mathrm{eff}})/(|\mathcal E|\hat\gamma_{\mathrm{repr}}^2)$. In practice, these estimates should be combined with validation performance and worst-group metrics when available.

\paragraph{Step-by-step guide.}  
\begin{enumerate}[leftmargin=*]  
	\item Train a preliminary ERM model and compute \(\hat\gamma_{\mathrm{repr}}\) on penultimate features.  
	\item If \(\hat\gamma_{\mathrm{repr}}\) is small, prioritize collecting more diverse environments.  
	\item Estimate the critical scaling  
	\[  
	n^*\propto \frac{k_{\mathrm{eff}}(d_{\mathrm{eff}}-k_{\mathrm{eff}})}  
	{|\mathcal E|\hat\gamma_{\mathrm{repr}}^2}.  
	\]  
	\item If \(n\ll n^*\), collect more samples or reduce effective dimension.  
	\item Choose methods based on structure: GroupDRO when groups are available, IRM/REx when environments are informative, and spectral or screening methods when tractability conditions hold.  
\end{enumerate}  

\paragraph{Limitations.}  
The statistical theory is linear-Gaussian, and the computational component is worst-case and conditional on the black-box supervised sparse recovery primitive. The hardness result should be interpreted as a conditional lower bound for constructed hard families, not as a claim about typical benchmark datasets.

    \end{document}